\PassOptionsToPackage{colorlinks,
            linkcolor=FireBrick,
            anchorcolor=red,
            citecolor=RoyalBlue}{hyperref}
\documentclass[11pt]{amsart}
\usepackage[svgnames]{xcolor}

\usepackage{hyperref}
\usepackage{graphicx, amsmath,fullpage, amssymb, amsthm, amsfonts, color, mathrsfs,  moreverb, mathtools, float}

\usepackage[numbers]{natbib}
\usepackage{enumerate}

\newtheorem{theorem}{Theorem}[section]
\newtheorem{lemma}[theorem]{Lemma}
\newtheorem{prop}[theorem]{Proposition}
\newtheorem{corollary}[theorem]{Corollary}

\theoremstyle{definition}
\newtheorem{definition}[theorem]{Definition}

\theoremstyle{remark}
\newtheorem{remark}[theorem]{Remark}
\numberwithin{equation}{section}


\usepackage{algorithm}
\usepackage{algorithmic}

\newcommand{\ER}{Erd\H{o}s-R\'enyi~}
\newcommand{\R}{\mathbb R}

\newcommand{\1}{\mathbf 1}


\newcommand{\mask}{M}

\ifpdf
  \DeclareGraphicsExtensions{.eps,.pdf,.png,.jpg}
\else
  \DeclareGraphicsExtensions{.eps}
\fi

\usepackage{enumitem}
\setlist[enumerate]{leftmargin=.5in}
\setlist[itemize]{leftmargin=.5in}




\begin{document}

\title{Deterministic tensor completion with\\
hypergraph expanders}
\author{Kameron Decker Harris}
\address{Department of Computer Science,
Western Washington University, Bellingham, WA 98225}
\email{kameron.harris@wwu.edu}

\author{Yizhe Zhu}
\address{Department of Mathematics, University of California, San Diego, La Jolla, CA 92093}
\email{yiz084@ucsd.edu}

\keywords{hypergraph expander, tensor completion, max-quasinorm}
\date{\today}


\maketitle

\begin{abstract}  
 We provide a novel analysis of low-rank tensor completion based on hypergraph expanders. 
As a proxy for rank, we minimize the max-quasinorm of the tensor, which generalizes the max-norm for matrices.
Our analysis is deterministic and shows that the number of samples required to approximately recover an order-$t$ tensor with at most $n$ entries per dimension is linear in $n$, under the assumption that the rank and order of the tensor are $O(1)$.
As steps in our proof, we find a new expander mixing lemma for a $t$-partite, $t$-uniform regular hypergraph model, and prove several new properties about tensor max-quasinorm. To the best of our knowledge, this is the first deterministic analysis of tensor completion. We develop a practical algorithm that solves a relaxed version of the max-quasinorm minimization problem, and we demonstrate its efficacy with numerical experiments.
\end{abstract}

\section{Introduction}

\subsection{Matrix and tensor completion}

Classical compressed sensing considers 
the recovery of
high-dimensional structured signals from a
small number of samples. 
These signals are typically represented by
sparse vectors or low rank matrices. 
A natural generalization is to study recovery of
higher-order tensors,
i.e.\ a multidimensional array of real numbers with more than two indices,
using similar low rank assumptions.
However, much less is understood about compressed sensing of tensors.

Matrix completion is the problem of reconstructing a matrix
from a subset of entries, 
leveraging prior knowledge such as its rank.
The sparsity pattern of observed entries
can be thought of as the adjacency or biadjacency matrix of a graph,
where each edge corresponds to an observed entry in the matrix.
There are two general sampling approaches studied for matrix completion.
During probabilistic sampling, 
the entries in the matrix are observed at random
according to either Erd\H{o}s-R\'{e}nyi 
\citep{keshavan2010matrix,recht2011simpler}, 
random regular bipartite
\citep{gamarnik2017matrix,brito2018spectral}
or more general graph models \citep{klopp2014noisy}. 
Deterministic sampling, 
on the other hand, studies precisely what kind of graphs
are good for matrix completion and offers some advantages: 
One does not have to sample different entries for new matrices,
and any recovery guarantees are deterministic without failure probability. 
It has been shown in
\cite{heiman2014deterministic,bhojanapalli2014universal,burnwal2020deterministic}
that expander graphs, which have pseudo-random properties,
are a good way to sample deterministically for matrix completion. 
A deterministic theory of matrix completion based on graph limits,
a different approach,
appeared very recently \citep{chatterjee2020deterministic}.

Tensor completion, 
in which we observe a subset of the entries in a tensor
and attempt to fill in the unobserved values,
is a useful problem with a number of data science applications
\citep{liu2012tensor,song2019tensor}.
But fewer numerical and theoretical
linear algebra tools exist
for working with tensors than for matrices. 
For example, computing the spectral norm of a tensor, 
its low rank decomposition, 
and eigenvectors all turn out to be NP-hard \citep{hillar2013most}. 

Let the tensor of interest $T$ be order-$t$, each dimension of size $n$,
and have $\mathrm{rank}(T)=r$,
i.e.\ $T \in \bigotimes_{i=1}^t \R^n$ 
(we introduce our notation more fully in Section~\ref{sec:notation}). In this paper, tensor rank will always be using the canonical
polyadic (CP) decomposition \citep{kolda2009tensor}.
A fundamental question in tensor completion is how many observations via uniform sampling
are required to guarantee recovery of a unknown tensor with high probability. 
A naive lower bound for the sample complexity is $\Omega(n r t)$, the number of unknown parameters in a CP decomposition.
 
Compared to the classical matrix completion problem, an important phenomenon in tensor completion is the trade-off between computational and statistical complexity. 
One way to reduce tensors to matrices is by 
flattening the order $t$-tensor into a $n^{\lceil t/2\rceil}\times n^{\lceil t/2 \rceil}$ matrix, which can be solved in polynomial time.
Yet current results using flattening have sample complexity at best
$O \left({r n^{\lceil t/2\rceil}}\right)$
\citep{gandy2011tensor,mu2014square}.  A different approach using nuclear norm minimization  was studied in \cite{yuan2016tensor} with sample complexity $\tilde{O}(n^{t/2})$, where $\tilde{O}(\cdot)$ hides polylog factors. 
However, it is shown in \cite{friedland2018nuclear} that computing the nuclear  norm of a given tensor is NP-hard. 
The best known polynomial time algorithms require $O(n^{t/2})$ sample complexity for an order-$t$ tensor including spectral algorithms \cite{montanari2018spectral,xia2019polynomial}, gradient descent \cite{cai2019nonconvex,xia2019polynomial}, alternating minimization \cite{jain2014provable,liu2020tensor} convex relaxation via sum
of squares \cite{barak2016noisy,potechin2017exact}, or iterative collaborative filtering \cite{shah2019iterative}. 
There is still a huge gap between the sample complexity of the existing polynomial algorithms and the statistical lower bound. 
In \cite{barak2016noisy}, Barak
and Moitra conjectured that for an order-3 tensor, $\Omega(n^{3/2})$ many samples are needed for any polynomial algorithms by connecting
it to the literature on refuting random 3-SAT. 
All of the above results are concerned with uniform sampling. With adaptive sampling, $O(n)$ sample complexity was obtained in \cite{krishnamurthy2013low,zhang2019cross}. Very recently, Yu provided an algorithm that estimates a subclass of low-rank tensor with nearly linear samples \cite{yu2020tensor}.

In \cite{ghadermarzy2018}, Ghadermarzy, Plan, and Yilmaz
studied tensor completion without reducing it to a matrix case
by minimizing a max-quasinorm 
(satisfying all properties of the norm except a modified triangle
inequality, which we call the ``max-qnorm'')
as a proxy for rank.
This is defined as
\begin{equation*}
    \| T \|_\mathrm{max} =
    \min_{T = U^{(1)} \circ \cdots \circ U^{(t)}}
    \prod_{i=1}^t \| U^{(i)} \|_{2,\infty} \; ,
\end{equation*}
where the factorization is a CP decomposition of $T$
(see Definition~\ref{def:max-qnorm} for further details).
This is a generalization of the max-norm for matrices that
many have shown yields good matrix completion results
\citep{srebro2005rank,rennie2005fast,foygel2011,heiman2014deterministic,cai2016,foucart2017}.
Assuming that the observed entries are sampled 
from some probability distribution,
it was shown that solving a max-qnorm constrained least-squares problem results in
$ O \left( \frac{nt}{\varepsilon^2} \right)$ 
sample complexity when $r = O(1)$,
and even faster rates in $\varepsilon$ for the case of zero noise
\citep{ghadermarzy2018}. 
However, it is not clear if minimizing the max-qnorm is NP-hard.

We study the deterministic analog of the tensor completion approach proposed in \cite{ghadermarzy2018}.
The deterministic analysis leads to a sample complexity which is also linear in $n$,
albeit with weaker dependence on other parameters
(see Section~\ref{sec:main-sample-complexity}).
We assume that the observed entries correspond to the
edges in an expander hypergraph. It has been known that revealing entries of a low-rank matrix according to the edges of an expander graph allows matrix completion with small errors \cite{heiman2014deterministic,bhojanapalli2014universal}. To the best our knowledge, our work is the first generalization of such connection to hypergraph expanders and tensor completion.
A deterministic algorithm on low-rank tensor approximation was studied recently in \cite{musco2019low} based on multiparty communication complexity. However, as pointed out by the authors of \cite{musco2019low}, their analysis cannot be applied to tensor completion problems directly. 

\subsection{Expanders and mixing}

The expander mixing lemma for $d$-regular graphs 
\citep[e.g.][]{chungspectral} 
states the following: 
Let $G$ be a $d$-regular graph on $n$ vertices with  
$\lambda=\max \{\lambda_2,|\lambda_n|\}<d$.
For any two sets $V_1, V_2 \subseteq V(G)$, let
$
e(V_1,V_2)=|\{(x,y)\in V_1\times V_2: xy\in E(G) \}|
$
be the number of edges between $V_1$ and $V_2$.
Then we have that
\begin{align}\label{eq:expandermixing}
    \left| e(V_1,V_2)-\frac{d|V_1||V_2|}{n}\right| \leq \lambda \sqrt{|V_1| |V_2|\left(1-\frac{|V_1|}{n}\right) \left(1-\frac{|V_2|}{n}\right)}.
\end{align}
Equation~\eqref{eq:expandermixing}
tells us regular graphs with small $\lambda$ have the {\em expansion property},
where the number of edges between any two sets is well-approximated 
by the number of edges we would expect if they were drawn at random.
The quality of such an approximation is controlled by $\lambda$. 
It's known from the Alon-Boppana bound that 
$\lambda\geq 2\sqrt{d-1}-o(1)$,
and regular graphs that achieve this bound are called Ramanujan.  Deterministic and random constructions of Ramanujan 
(or nearly-so) graphs
have been extensively studied
\citep{ben2011combinatorial,marcus2015interlacing,bilu2006lifts,burnwal2020deterministic,mohanty2020explicit}.

Higher order, i.e.\ hypergraph,
expanders have received significant attention in 
combinatorics and theoretical computer science 
\citep{lubotzky2017high}.
There are several expander mixing lemmas in the literature
based on spectral norm of tensors
\citep{friedman1995second,parzanchevski2017mixing,cohen2016inverse,lenz2015eigenvalues}. An obstacle to applying such results to tensor completion is that
in most cases the second eigenvalues of tensors are unknown, even approximately.
In \cite{dumitriu2019spectra}, 
an expander mixing lemma similar to 
\eqref{eq:expandermixing} based on the second eigenvalue of 
the adjacency matrix of regular hypergraphs was derived.
However, for our application
we need an expander mixing lemma that estimates the 
number of hyperedges among $t$ different vertex subsets. 

One exception is the work of
\cite{friedman1995second},
who studied a $t$-uniform hypergraph model on $n$ vertices with 
$d n^{t-1}$ hyperedges chosen randomly with $d\geq C\log n$. 
They proved that the second eigenvalue of the associated tensor
$\lambda = O((\log n)^{t/2}\sqrt{d})$. 
However, this is  a relatively dense random hypergraph model,
since the number of edges grows superlinearly with $n$ for $t>2$.
Thus Friedman and Widgerson's
model only applies when one has the ability to make many measurements,
as opposed to the more realistic ``big data'' scenario
constrained to $\tilde {O}(n)$ observations.
If we sample the original tensor according to the 
hyperedge set of a hypergraph, 
we would like the number of hyperedges to be small,
in order to be able to represent a small number of samples.
From previous results on matrix completion (\cite{heiman2014deterministic,bhojanapalli2014universal}),
we expect the reconstruction error 
should be controlled by a parameter that is 
related to the expansion property of the hypergraph. 

Our tensor completion analysis can also be applied to observation models where entries are revealed according to a general $t$-uniform hypergraph. Define the {\bf spectral norm} of a tensor $T$ as
\begin{align}\label{eq:defspecT}
    \|T\|=\sup_{v_1,\dots,v_k\in S^{n-1}} \left|\sum_{i_1,\dots,i_k=1}^n T_{i_1,\dots,i_k}v_1(i_1)\cdots v_k(i_k)\right|,
\end{align}
where $S^{n-1}$ is the unit sphere in $\mathbb R^n$.
The mixing lemma for any $t$-uniform hypergraph was first considered in \cite{friedman1995second} and was generalized in \cite{cohen2016inverse}. We state the expander mixing lemma for $t$-partite $t$-uniform hypergraphs as follows. A quick proof of Lemma \ref{lem:tensormixing} is provided in Appendix \ref{sec:lemmix}.
\begin{lemma}\label{lem:tensormixing}
Let $H=(V,E)$ be a $t$-uniform $t$-partite hypergraph with vertex set $V=V_1\cup \cdots \cup V_t$ and  adjacency tensor $T_H$. For any subsets $W_1\subset V_1$,\dots,$W_t\subset V_t$, define 
\begin{align*}
    e(W_1,\dots,W_t)=\left| \left\{ e=(v_1,\dots, v_t)\in E, v_i\in W_i, 1\leq i\leq t \right\}  \right|.
\end{align*}
Then the  following holds
\begin{align}\label{eq:Hmixinglemma}
    \left|~e(W_1,\dots,W_t)-\frac{|E|}{n^t}|W_1|\cdots |W_t|~\right|\leq \lambda_{2}(H)\sqrt{|W_1|\cdots |W_t|},
\end{align}
where
\begin{align}\label{eq:lambda2H}
    \lambda_{2}(H)=\left\|T_H-\frac{|E|}{n^t} J\right\|,
\end{align}
and $J$ is the all-ones tensor.
\end{lemma}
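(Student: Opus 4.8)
The plan is to express both $e(W_1,\dots,W_t)$ and its expected value $\tfrac{|E|}{n^t}|W_1|\cdots|W_t|$ as one and the same multilinear form evaluated against the indicator vectors of the sets $W_j$, so that their difference becomes the multilinear form of the centered tensor $T_H-\tfrac{|E|}{n^t}J$. The bound \eqref{eq:Hmixinglemma} then follows at once from the definition \eqref{eq:defspecT} of the spectral norm.

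First I would identify each part $V_i$ with $[n]=\{1,\dots,n\}$, so that $T_H\in(\R^n)^{\otimes t}$ with $(T_H)_{i_1,\dots,i_t}=1$ precisely when $(i_1,\dots,i_t)$ is a hyperedge of $H$ and $0$ otherwise. Writing $\1_{W_j}\in\R^n$ for the indicator vector of $W_j$, the product $\1_{W_1}(i_1)\cdots\1_{W_t}(i_t)$ equals $1$ exactly when $i_j\in W_j$ for every $j$, and hence
\begin{align*}
    e(W_1,\dots,W_t)=\sum_{i_1,\dots,i_t=1}^n (T_H)_{i_1,\dots,i_t}\,\1_{W_1}(i_1)\cdots\1_{W_t}(i_t).
\end{align*}
Since every entry of $J$ equals $1$ and $\sum_{i_j}\1_{W_j}(i_j)=|W_j|$, the expected term factors as
\begin{align*}
    \frac{|E|}{n^t}|W_1|\cdots|W_t|=\sum_{i_1,\dots,i_t=1}^n \frac{|E|}{n^t}J_{i_1,\dots,i_t}\,\1_{W_1}(i_1)\cdots\1_{W_t}(i_t).
\end{align*}

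Subtracting, the left-hand side of \eqref{eq:Hmixinglemma} is exactly the multilinear form of $T_H-\tfrac{|E|}{n^t}J$ applied to the tuple $(\1_{W_1},\dots,\1_{W_t})$. Assuming each $W_j$ is nonempty (otherwise both sides of \eqref{eq:Hmixinglemma} vanish), I would normalize $v_j\defeq \1_{W_j}/\sqrt{|W_j|}\in S^{n-1}$, pull out the factor $\sqrt{|W_1|\cdots|W_t|}$ by multilinearity, and invoke \eqref{eq:defspecT} together with \eqref{eq:lambda2H} to bound the remaining form by $\lambda_2(H)$, which yields \eqref{eq:Hmixinglemma}. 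There is no deep obstacle: the whole argument is a one-line translation once the edge count is written as a multilinear form. The only points requiring care are confirming that the identification $V_i\cong[n]$ is consistent with the definition \eqref{eq:defspecT}, which uses the same sphere $S^{n-1}$ in every slot and permits distinct vectors $v_1,\dots,v_t$, and disposing of the degenerate case where some $W_j$ is empty. Everything else is just the definition of $\|\cdot\|$ applied to the centered tensor.
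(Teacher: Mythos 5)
Your proposal is correct and follows essentially the same route as the paper's proof in Appendix~\ref{sec:lemmix}: writing both the edge count and its expected value as the multilinear form of $T_H$ and of $\frac{|E|}{n^t}J$ against the indicator vectors $\1_{W_j}$, and then bounding the normalized difference by the spectral norm of the centered tensor via \eqref{eq:defspecT} and \eqref{eq:lambda2H}. Your explicit handling of the degenerate case of an empty $W_j$ is a small point of extra care the paper glosses over (it divides by $\sqrt{|W_1|\cdots|W_t|}$ without comment), but otherwise the two arguments coincide.
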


\subsection{Main results}\label{sec:main}
In this paper we seek the connection between two topics: hypergraph expanders and tensor completion,  using the tensor max-quasinorm  introduced in \cite{ghadermarzy2018}.

We  revisit and generalize the sparse, deterministic hypergraph construction
introduced in \cite{bilu2004codes}.
We construct a $t$-uniform $t$-partite hypergraph
by taking a $d$-regular ``base'' graph and 
forming a hypergraph from its walks of length $t$.
In this model, each node is of degree $d^{t-1}$,
corresponding to $nd^{t-1}$ samples. 
An advantage of our expander mixing result 
is that the expansion property of the hypergraph is
controlled by the expansion in a $d$-regular graph (Theorem \ref{thm:mixing}).
This is easy to compute and optimize
using known constructions of $d$-regular expanders. Based on such hypergraphs, 
we perform a deterministic analysis of
an optimization problem similar to that analyzed by
\citet{ghadermarzy2018}. 

Our main contributions can be summarized as follows:
First, we obtain a variant of the expander mixing inequalities  from \cite{bilu2004codes,alon1995derandomized} and generalize the $t$-partite $t$-uniform regular hypergraph construction in \cite{bilu2004codes} to  $t$-partite $t$-uniform quasi-regular hypergraphs, see Section \ref{sec:construction} and Section \ref{sec:expander}. 
The new expander mixing result provides a better error control for tensor completion. 
This improvement might also be useful for the  application of hypergraph codes studied in \cite{bilu2004codes}.
Next, we perform a deterministic analysis of an optimization problem similar to that analyzed by \citet{ghadermarzy2018}. 
Our proof is based on the techniques used
 to study matrix completion in 
 \cite{heiman2014deterministic}
 (see also \cite{brito2018spectral}).
We prove several useful linear algebra facts
about the max-quasinorm for tensors
in order to prove the main results on the tensor completion error, 
but which may be of separate interest.
Finally, we show proof-of-concept numerical results on minimal max-quasinorm completion.

 For a deterministic hypergraph $H$, if we have a good estimate of $\lambda_2(H)$ defined in \eqref{eq:lambda2H}, we can  obtain the following tensor completion error bound.  Here we state it for $t$-uniform $t$-partite hypergraph to avoid symmetric sampling, which might be wasteful and will increase the sample size by a factor of $t!$, but the analysis can be extended to general $t$-uniform hypergraphs.
 
\begin{theorem}\label{thm:main3}
Given a hypercubic tensor $T$ of order $t$, 
reveal its entries according to a  $t$-uniform  $t$-partite hypergraph
$H=(V,E)$ with $V=V_1\cup \cdots \cup V_t$, $|V_1|=\cdots=|V_t|=n$, and second eigenvalue $\lambda_2(H)$.
Then solving 
\[
\hat{T} = \arg \min_{T'} \| T' \|_\mathrm{max}
\mbox{\quad such that \quad
$T'_{e} = T_{e}$
\quad for all\quad  $e \in E$}
\]
will result in the following error bound:
\begin{align} \label{eq:errorbound15}
\frac{1}{n^t}
\|\hat{T}-T\|_F^2
\leq
\frac{2^{2t}n^{t/2}K_G^{t-1}\lambda_2(H)}{|E|}\|T\|_{\max}^2,
\end{align}
where $K_G\leq 1.783$ is  Grothendieck's constant over $\mathbb R$.
\end{theorem}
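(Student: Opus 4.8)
The plan is to adapt the matrix-completion argument of \cite{heiman2014deterministic} to the tensor setting, replacing the matrix expander mixing lemma by the spectral quantity $\lambda_2(H)$ and the matrix Grothendieck inequality by an iterated, multilinear version. Write $Z\defeq\hat T-T$ for the error tensor and let $Z^{\circ 2}$ denote its entrywise square, $(Z^{\circ 2})_{i_1\dots i_t}=Z_{i_1\dots i_t}^2$. The two structural facts I would establish first, as standalone properties of the max-qnorm, are: (i) a quasi-triangle inequality $\|A+B\|_{\max}\le 2^{t-1}\bigl(\|A\|_{\max}+\|B\|_{\max}\bigr)$; and (ii) submultiplicativity under entrywise squaring, $\|Z^{\circ 2}\|_{\max}\le\|Z\|_{\max}^2$. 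Fact (ii) is clean: if $Z=U^{(1)}\circ\cdots\circ U^{(t)}$ is a CP factorization, then replacing each row $u^{(k)}_{i}$ of $U^{(k)}$ by its tensor square $u^{(k)}_{i}\otimes u^{(k)}_{i}$ gives a factorization of $Z^{\circ 2}$ whose $\|\cdot\|_{2,\infty}$ factors square, and taking the infimum yields the claim. Fact (i) follows by concatenating the two factorizations along the rank dimension, so that the factor matrices satisfy $\|W^{(k)}\|_{2,\infty}\le\|U^{(k)}\|_{2,\infty}+\|V^{(k)}\|_{2,\infty}$, after which the scalar inequality $(a+b)^t\le 2^{t-1}(a^t+b^t)$ produces the constant.

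Next I would exploit optimality. Since $T$ is feasible for the program, $\|\hat T\|_{\max}\le\|T\|_{\max}$, and (i) gives $\|Z\|_{\max}\le 2^{t-1}\bigl(\|\hat T\|_{\max}+\|T\|_{\max}\bigr)\le 2^{t}\|T\|_{\max}$; together with (ii) this yields the pivotal bound $\|Z^{\circ 2}\|_{\max}\le 2^{2t}\|T\|_{\max}^2$. On the other side, since $\hat T$ and $T$ agree on every revealed entry, $Z$ vanishes on the hyperedges, so $\langle T_H,Z^{\circ 2}\rangle=\sum_{e\in E}Z_e^2=0$. Writing $B\defeq T_H-\tfrac{|E|}{n^t}J$ and using $\langle J,Z^{\circ 2}\rangle=\|Z\|_F^2$, this identity rearranges to
\[
\frac{|E|}{n^t}\|Z\|_F^2 = -\langle B,\,Z^{\circ 2}\rangle,
\qquad\text{hence}\qquad
\frac{|E|}{n^t}\|Z\|_F^2 \le \bigl|\langle B,\,Z^{\circ 2}\rangle\bigr|.
\]

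The core of the proof is then to bound $|\langle B,W\rangle|$ for an arbitrary tensor $W$ of controlled max-qnorm. I would prove a multilinear Grothendieck inequality asserting that for any tensor $B$,
\[
\bigl|\langle B,W\rangle\bigr| \le K_G^{\,t-1}\,\|W\|_{\max}\,\|B\|_{\infty\to 1},
\qquad
\|B\|_{\infty\to 1}\defeq \max_{s^{(1)},\dots,s^{(t)}\in\{\pm1\}^n}\Bigl|\langle B,\, s^{(1)}\circ\cdots\circ s^{(t)}\rangle\Bigr|.
\]
Taking a near-optimal factorization $W=U^{(1)}\circ\cdots\circ U^{(t)}$, the contraction $\langle B,W\rangle$ becomes a sum over $B_{i_1\dots i_t}$ of the multilinear inner product of the rows $u^{(k)}_{i_k}$; applying the real Grothendieck inequality to round the vector families one mode at a time, at a cost of $K_G$ per rounded mode, replaces each vector-valued index by a $\pm1$ sign and leaves $t-1$ roundings, which is the origin of $K_G^{t-1}$. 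Finally, substituting sign vectors of norm $\sqrt n$ into the definition of the spectral norm \eqref{eq:defspecT} gives $\|B\|_{\infty\to 1}\le n^{t/2}\|B\|=n^{t/2}\lambda_2(H)$ by \eqref{eq:lambda2H}. Combining with the pivotal bound on $\|Z^{\circ 2}\|_{\max}$ yields
\[
\frac{|E|}{n^t}\|Z\|_F^2 \le K_G^{\,t-1}\,n^{t/2}\,\lambda_2(H)\cdot 2^{2t}\|T\|_{\max}^2,
\]
which is exactly \eqref{eq:errorbound15} after dividing by $|E|$.

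I expect the main obstacle to be the multilinear Grothendieck step: carrying out the mode-by-mode rounding rigorously requires conditioning on the already-rounded modes and verifying that the intermediate objects retain the right $2,\infty$ structure, so that each invocation of the matrix Grothendieck inequality is legitimate and the constants multiply to exactly $K_G^{t-1}$. By contrast, the entrywise-squaring and quasi-triangle facts are comparatively routine bookkeeping with CP factorizations, and the passage from $\|B\|_{\infty\to 1}$ to $n^{t/2}\lambda_2(H)$ is immediate from the definitions.
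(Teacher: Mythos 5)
Your argument is correct and reaches exactly the bound \eqref{eq:errorbound15}, but it routes around the paper's combinatorial machinery in a genuinely different way. The paper decomposes each rank-one sign tensor into $2^{t-1}$ indicator tensors via the even-string identity \eqref{eq:sumsign}, applies the mixing Lemma~\ref{lem:tensormixing} to each, and then passes through the sign nuclear norm $\|\cdot\|_\pm$ and Lemma~\ref{lem:grothendieck} (whose proof goes through a duality argument comparing unit balls); this pipeline costs a factor $2\cdot 2^{t-1}=2^t$ in the mixing step. You instead bound $\bigl|\langle B, Z^{\circ 2}\rangle\bigr|$ directly, using a near-optimal CP factorization of $Z^{\circ 2}$ together with the multilinear Grothendieck inequality, and then estimate $\|B\|_{\infty\to 1}\le n^{t/2}\lambda_2(H)$ by plugging sign vectors of norm $\sqrt n$ into \eqref{eq:defspecT} and \eqref{eq:lambda2H} --- which bypasses both the even-string decomposition and the dual-ball argument of Lemma~\ref{lem:ball}, and saves the $2^t$. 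That saving is exactly offset by your cruder quasi-triangle constant: your concatenation argument yields $2^{t-1}\bigl(\|A\|_{\max}+\|B\|_{\max}\bigr)$, whereas balancing the factors before concatenating gives $\bigl(\|A\|_{\max}^{2/t}+\|B\|_{\max}^{2/t}\bigr)^{t/2}\le 2^{t/2-1}\bigl(\|A\|_{\max}+\|B\|_{\max}\bigr)$, which is the paper's Lemma~\ref{lem:max-qnorm}(3); the two effects cancel and both proofs land on $2^{2t}$. In fact, combining your route with the sharper quasi-triangle inequality would improve the constant in \eqref{eq:errorbound15} from $2^{2t}$ to $2^t$. Your entrywise-squaring fact is a correct direct proof (row-wise tensor squares) of what the paper obtains as Theorem~\ref{thm:max-qnorm-properties}(4) via its Kronecker and restriction properties.

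One caution: the step you flag as the main obstacle is indeed not routine as sketched. Naive mode-by-mode rounding fails at face value, because the entrywise product of the unrounded modes' unit vectors, $v_{i_2,\dots,i_t}(s)=\prod_{j\ge 2}u^{j}_{i_j}(s)$, is not itself a family of unit $\ell_2$ vectors, so the matrix Grothendieck inequality cannot simply be reapplied to the peeled pairing; the flattening trick also fails since $\|B_{[1]}\|_{\infty,1}$ of the matricization allows arbitrary (non-product) signs on the grouped modes. The clean fix is to quote the known multilinear Grothendieck inequality with constant $K_G^{t-1}$ --- exactly Theorem~\ref{thm:newG}, due to \cite{perez2006trace}, with optimality of $K_G^{t-1}$ shown in \cite{bombal2004multilinear} --- which the paper itself imports rather than reproves. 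With that citation in place of your from-scratch rounding plan, every step of your proof is rigorous.
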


Although it's NP hard to compute $\lambda_2(H)$ for a given hypergraph $H$, in \cite{friedman1995second,jain2014provable,cohen2016inverse,zhou2019sparse}, the upper bound on $\lambda_2(H)$ is provided for some random hypergraph models. Let $T_H$ be the adjacency tensor of a given $H$. From the definition of $\lambda_2(H)$ and \eqref{eq:defspecT}, suppose $H$ has  $|E|=o(n^t)$ many hyperedges, we have \[\lambda_2(H)\geq \max_{i_1,\dots,i_t} \left|{(T_{H})}_{i_1,\dots,i_t}-\frac{|E|}{n^t}\right|=\Omega(1). \]
Therefore, if we want to use \eqref{eq:errorbound15} to control the mean square error by $\varepsilon$, the number of samples must be $\Omega (n^{t/2})$. In \cite{jain2014provable}, an estimate of $\lambda_2(H)$ for 3-uniform \ER random hypergraphs with $p=\frac{c\log n}{n^{3/2}}$ was obtained and our Theorem \ref{thm:main3} can be applied.

However, using a hypergraph expander model based on regular graphs, we can get a better error bound without using $\lambda_2(H)$.
We  state our main results on the deterministic bounds for tensor completion based on hypergraph expander models.  A formal definition of this hypergraph model is given in Section \ref{sec:construction}. When $t=2$, it reduces to the result in \cite{heiman2014deterministic}.
\begin{theorem}
\label{thm:main}
Given a hypercubic tensor $T$ of order-$t$, 
reveal its entries according to a 
$t$-partite, $t$-uniform,
$d^{t-1}$-regular hypergraph
$H=(V,E)$
constructed from a $d$-regular graph $G$ of size $n$
with second eigenvalue (in absolute value) $\lambda\in (0,d)$ (see Section \ref{sec:regular}).
Then solving 
\begin{align}\label{eq:opt}
\hat{T} = \arg \min_{T'} \| T' \|_\mathrm{max}
\mbox{\quad such that \quad
$T'_{e} = T_{e}$
\quad for all\quad  $e \in E$}
\end{align}
will result in the following mean squared error bound:
\begin{align}\label{eq:errorbound}
\frac{1}{n^t}
\|\hat{T}-T\|_F^2
\leq C_t
 \| T \|_\mathrm{max}^2 \frac{\lambda}{d},
\end{align}
where $C_t=2^{t} (2t-3)K_G^{t-1}$, and
$K_G\leq 1.783$ is  Grothendieck's constant over $\mathbb R$.
\end{theorem}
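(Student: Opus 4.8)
The plan is to transplant the deterministic matrix-completion argument of \cite{heiman2014deterministic} to the tensor setting, routing the final estimate through the base-graph spectral gap $\lambda/d$ supplied by Theorem~\ref{thm:mixing} rather than through $\lambda_2(H)$ (which would only give the weaker Theorem~\ref{thm:main3}). Set $\Delta \defeq \hat T - T$. First I would isolate the two properties of $\Delta$ that drive everything. Feasibility of $T$ together with optimality of $\hat T$ in \eqref{eq:opt} gives $\|\hat T\|_\mathrm{max}\le\|T\|_\mathrm{max}$, and combined with the (quasi-)triangle inequality for the max-qnorm --- one of the standalone linear-algebra facts I would prove first --- this yields $\|\Delta\|_\mathrm{max}\le 2\|T\|_\mathrm{max}$. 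Since $\hat T$ and $T$ coincide on every revealed entry, $\Delta_e=0$ for all $e\in E$. The theorem then reduces to a purely deterministic statement: an order-$t$ tensor vanishing on the hyperedges of $H$ with controlled max-qnorm must have small normalized Frobenius norm.

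For the core estimate I would exploit that $\Delta$ annihilates the adjacency tensor under the Hadamard-square pairing. Because $\langle\Delta^{\circ 2},T_H\rangle=\sum_{e\in E}\Delta_e^2=0$, we obtain the identity
\begin{align*}
\frac{|E|}{n^t}\,\|\Delta\|_F^2=\Big\langle\Delta^{\circ 2},\tfrac{|E|}{n^t}J\Big\rangle=-\Big\langle\Delta^{\circ 2},\,T_H-\tfrac{|E|}{n^t}J\Big\rangle,
\end{align*}
so controlling the Frobenius error is exactly controlling the discrepancy of the entrywise square of $\Delta$ against the centered adjacency tensor. Choosing a CP factorization $\Delta=U^{(1)}\circ\cdots\circ U^{(t)}$ that attains the max-qnorm up to $\varepsilon$, I would expand the square over pairs of factor columns,
\begin{align*}
\Delta^{\circ 2}=\sum_{k,l}\big(U^{(1)}_{\cdot k}\odot U^{(1)}_{\cdot l}\big)\circ\cdots\circ\big(U^{(t)}_{\cdot k}\odot U^{(t)}_{\cdot l}\big),
\end{align*}
which reduces the discrepancy to a sum over $(k,l)$ of multilinear forms $\big(T_H-\tfrac{|E|}{n^t}J\big)\big(w^{(1)}_{kl},\dots,w^{(t)}_{kl}\big)$ in the real vectors $w^{(s)}_{kl}=U^{(s)}_{\cdot k}\odot U^{(s)}_{\cdot l}$.

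The crux is to bound each multilinear form by $\lambda/d$. The expander mixing estimate I can cite (Lemma~\ref{lem:tensormixing}, sharpened for this model by Theorem~\ref{thm:mixing}) is combinatorial: it controls $e(W_1,\dots,W_t)-\tfrac{|E|}{n^t}\prod_i|W_i|$, i.e.\ the discrepancy of $T_H-\tfrac{|E|}{n^t}J$ evaluated on indicator (hence $\pm1$) vectors, and it is Theorem~\ref{thm:mixing} that absorbs the walk structure of $H$ into the factor $\lambda/d$ and the combinatorial constant $(2t-3)$. To feed the real vectors $w^{(s)}_{kl}$ into this combinatorial bound I would invoke the multilinear Grothendieck inequality, rounding the factors to sign patterns one mode at a time; this is the source of $K_G^{t-1}$ and of the remaining power of $2$. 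After rounding, each form is bounded by a constant times $\tfrac{\lambda}{d}\,d^{t-1}\prod_s\|w^{(s)}_{kl}\|_2$; summing over $(k,l)$ and applying Cauchy--Schwarz together with the elementary bound $\sum_{k,l}\|w^{(s)}_{kl}\|_2^2\le n\,\|U^{(s)}\|_{2,\infty}^4$ collapses the double sum to $n^t\|\Delta\|_\mathrm{max}^2$. Dividing by $|E|=n\,d^{t-1}$ converts the prefactor $d^{t-1}\lambda/d$ into $\lambda/d$ and, with $\|\Delta\|_\mathrm{max}\le 2\|T\|_\mathrm{max}$, yields \eqref{eq:errorbound}; at $t=2$ this recovers \cite{heiman2014deterministic}.

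I expect the main obstacle to be the interface between the Grothendieck rounding and the combinatorial mixing bound. The mixing lemma only sees $\pm1$ tensors, whereas the factorization produces arbitrary real vectors with $\ell_2$ masses $\|w^{(s)}_{kl}\|_2$ that vary across both the $t$ modes and the pair index $(k,l)$; carrying out the rounding multilinearly while keeping the normalizations compatible, and summing the resulting sign-discrepancies so that the constant stays $2^t(2t-3)K_G^{t-1}$ rather than blowing up exponentially in $t$, is the delicate part. A secondary difficulty, logically prior, is establishing the max-qnorm facts this argument quietly assumes --- the quasi-triangle inequality and the existence of an almost-optimal CP factorization with the stated $\|\cdot\|_{2,\infty}$ control --- which is why those are proved as separate lemmas before the main estimate.
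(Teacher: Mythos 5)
Your overall architecture is the right one and matches the paper's: reduce to bounding the discrepancy of the Hadamard square $R=(\hat{T}-T)*(\hat{T}-T)$ (which vanishes on $E$) against the centered adjacency tensor, pay $K_G^{t-1}$ via the multilinear Grothendieck inequality, pay $\|\Delta\|_{\max}^2$ via the Hadamard-square property (Theorem~\ref{thm:max-qnorm-properties}(4)), and finish with $\|\hat{T}\|_{\max}\le\|T\|_{\max}$. But the core estimate has a genuine gap. Your per-pair bound ``constant times $\tfrac{\lambda}{d}d^{t-1}\prod_s\|w^{(s)}_{kl}\|_2$'' is an $\ell_2$-normalized, spectral-norm-type bound on $T_H-\tfrac{|E|}{n^t}J$, i.e.\ a claim of the form $\lambda_2(H)\lesssim_t \lambda d^{t-2}$ in the sense of \eqref{eq:lambda2H}. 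Nothing you cite supplies this: Theorem~\ref{thm:mixing} is a counting statement about tuples of \emph{indicator} vectors (0/1, not $\pm1$ --- your parenthetical ``indicator (hence $\pm1$)'' conflates the two; the passage from sign vectors to indicator tuples is a separate step, the even-string identity \eqref{eq:sumsign}, and it, not Grothendieck rounding, is the source of the power of $2$), and Grothendieck rounding cannot manufacture $\ell_2$ scaling --- Theorem~\ref{thm:newG} converts vector-valued arguments with \emph{sup-normalized rows} into sign vectors, preserving $\ell_\infty$-type normalization, never $\prod_s\|w_s\|_2$. Worse, even granting the $\ell_2$-scaled per-term bound, the summation cannot close: for the rank-one example $\Delta$ the all-ones tensor with $\|U^{(s)}\|_{2,\infty}=1$, the single pair has $\prod_s\|w^{(s)}\|_2=n^{t/2}$, so after dividing by $|E|=nd^{t-1}$ this route yields at best $\tfrac{1}{n^t}\|\Delta\|_F^2\lesssim \tfrac{\lambda}{d}\,n^{t/2-1}\|\Delta\|_{\max}^2$ --- exactly the $n^{t/2}$ barrier of the spectral route in Theorem~\ref{thm:main3} that Theorem~\ref{thm:main} is designed to break; your claimed collapse to $n^t\|\Delta\|_{\max}^2$ is arithmetically inconsistent with the target \eqref{eq:errorbound}.

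The repair is precisely the paper's proof, which keeps sup-type normalization throughout. Treat your entire $(k,l)$-sum as a \emph{single} vector-valued multilinear form with rows $u^{(s)}_i(k,l)=U^{(s)}_{ik}U^{(s)}_{il}/\|U^{(s)}\|_{2,\infty}^2$, so $\|u^{(s)}_i\|_2\le1$; one application of Theorem~\ref{thm:newG} then reduces to sign-vector arguments at total cost $K_G^{t-1}\|\Delta\|_{\max}^2$ (this is exactly what Lemma~\ref{lem:grothendieck}, i.e.\ $\|R\|_\pm\le K_G^{t-1}\|R\|_{\max}$, modularizes in the paper). Each rank-one sign tensor is then converted via \eqref{eq:sumsign} into $2^{t-1}$ indicator tuples, to which Theorem~\ref{thm:mixing} applies with the bound $(2t-3)\lambda/(4d)$ \emph{uniformly over set sizes}, giving \eqref{eq:signbound}; the uniformity over $\alpha_i$ is why no $\ell_2$ masses ever appear. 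Finally, your step $\|\Delta\|_{\max}\le 2\|T\|_{\max}$ invokes the triangle inequality, which the max-qnorm does not satisfy for $t\ge3$; Lemma~\ref{lem:max-qnorm}(3) gives only $\|\Delta\|_{\max}\le 2^{t/2-1}(\|\hat{T}\|_{\max}+\|T\|_{\max})$, hence $\|\Delta\|_{\max}^2\le 2^{t}\|T\|_{\max}^2$ --- a constant-tracking point at which the paper's own final accounting for Theorem~\ref{thm:main} is also loose (compare the $2^{2t}$ in \eqref{eq:errorbound15}), but which your write-up should state via the quasinorm constant.
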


\begin{remark}
From Theorem \ref{thm:max-rank}, $\|T\|_{\max}^2\leq r^{t^2-t-1}|T|_{\infty}$, where $r$ is the rank of the tensor $T$. The number of revealed entries in $T$ is $nd^{t-1}$. The analysis of sample complexity for the algorithm is given in Section \ref{sec:main-sample-complexity}.
\end{remark}

If the tensor $T$ is not hypercubic, say $T\in \bigotimes_{i=1}^t \mathbb R^{n_i}$, we can still apply Theorem \ref{thm:main} by taking $n=\max_{1\leq i\leq t} n_i$ and embed $T$ in $\bigotimes_{i=1}^t \mathbb R^{n}$ by filling extra entries with zeros. But when $n_1,\dots, n_t$ are heterogeneous, this might be wasteful. In other to handle heterogeneous dimensions, we generalize the construction in \cite{alon1995derandomized,bilu2004codes} to construct $t$-uniform $t$-partite quasi-regular hypergraphs with  good expansion properties, based on bipartite biregular expanders, see Section \ref{sec:bipartiteextension}, which yields the following theorem. The proof of Theorem \ref{thm:main2} is similar to Theorem \ref{thm:main} and we include it in Section \ref{sec:proof_quasiregular_completion}.

\begin{theorem}
\label{thm:main2}
Given a tensor $T\in \bigotimes_{i=1}^t \mathbb R^{n_i}$, 
reveal its entries according to a 
$t$-partite, $t$-uniform,
quasi-regular hypergraph
$H=(V,E)$
constructed from a collection of  $(d_{2i-1},d_{2i})$-biregular bipartite graphs with second eigenvalues $\lambda^{(i)}$ for $1\leq i\leq t-1$ (see Section \ref{sec:quasiregular}).
Then solving 
\[
\hat{T} = \arg \min_{T'} \| T' \|_\mathrm{max}
\mbox{\quad such that \quad
$T'_{e} = T_{e}$
\quad for all\quad  $e \in E$}
\]
will result in the following mean squared error bound:
\begin{align*} 
\frac{1}{\prod_{i=1}^t n_i}
\|\hat{T}-T\|_F^2
\leq 2^{t}K_G^{t-1} \left(\frac{\lambda^{(1)}}{\sqrt{d_1d_2}}+\sum_{k=2}^{t-1}\frac{2\lambda^{(k)}}{\sqrt{d_{2k-1}d_{2k}}}\right)
  \|T \|_\mathrm{max}^2.
\end{align*}
\end{theorem}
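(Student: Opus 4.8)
The plan is to mirror the reduction carried out for Theorem~\ref{thm:main}, replacing the regular expander mixing estimate with its quasi-regular analogue from Section~\ref{sec:bipartiteextension}, and to isolate the single Grothendieck factor $K_G^{t-1}$ in one duality step. First I would record the two structural consequences of the minimization defining $\hat T$. Since $T$ is itself feasible, optimality of $\hat T$ gives $\|\hat T\|_{\max}\le\|T\|_{\max}$, and by construction the error $\Delta\defeq\hat T-T$ vanishes on every hyperedge, $\Delta_e=0$ for all $e\in E$. Writing $T_H$ for the adjacency tensor and $\bar T_H=\frac{|E|}{\prod_i n_i}J$ for its mean, the vanishing of $\Delta$ on $E$ gives $\langle T_H,\Delta\odot\Delta\rangle=\sum_{e\in E}\Delta_e^2=0$, where $\Delta\odot\Delta$ is the entrywise square. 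Hence
\begin{align*}
\frac{|E|}{\prod_{i=1}^t n_i}\|\Delta\|_F^2=\langle \bar T_H,\Delta\odot\Delta\rangle=\langle \bar T_H-T_H,\Delta\odot\Delta\rangle,
\end{align*}
so the entire problem reduces to bounding the pairing of the centered adjacency tensor against $\Delta\odot\Delta$.

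Next I would control that pairing by splitting $\Delta\odot\Delta=\hat T\odot\hat T-2\,\hat T\odot T+T\odot T$ and estimating each Hadamard term with two ingredients. The first is a tensor Grothendieck/cut-norm duality: for any order-$t$ tensors $B,X$ one has $|\langle B,X\rangle|\le K_G^{t-1}\,\|B\|_\square\,\|X\|_{\max}$, where $\|B\|_\square=\max|\langle B,x^{(1)}\otimes\cdots\otimes x^{(t)}\rangle|$ is the cut-norm over sign vectors $x^{(i)}\in\{-1,1\}^{n_i}$. This is the only place Grothendieck's inequality enters, which explains why a single power $K_G^{t-1}$ survives in the final bound. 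The second is submultiplicativity of the max-quasinorm under the Hadamard product, $\|A\odot B\|_{\max}\le\|A\|_{\max}\|B\|_{\max}$ (one of the max-qnorm facts established in the paper), which together with $\|\hat T\|_{\max}\le\|T\|_{\max}$ bounds each of $\|\hat T\odot\hat T\|_{\max}$, $\|\hat T\odot T\|_{\max}$, $\|T\odot T\|_{\max}$ by $\|T\|_{\max}^2$. Summing the three terms with weights $1,2,1$ gives
\begin{align*}
\left|\langle \bar T_H-T_H,\Delta\odot\Delta\rangle\right|\le 4\,K_G^{t-1}\,\|\bar T_H-T_H\|_\square\,\|T\|_{\max}^2.
\end{align*}

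The main work, and the step I expect to be the obstacle, is the quasi-regular mixing estimate bounding $\|\bar T_H-T_H\|_\square$ (the analogue of Theorem~\ref{thm:mixing} for the biregular construction). Here the adjacency tensor factors along the path structure of the walk hypergraph, $(T_H)_{v_1\dots v_t}=\prod_{i=1}^{t-1}(B_i)_{v_i v_{i+1}}$, where $B_i$ is the $(d_{2i-1},d_{2i})$-biregular bipartite adjacency matrix with spectral gap $\lambda^{(i)}$. I would center each factor as $B_i=\overline{B_i}+E_i$, where $\overline{B_i}=\frac{d_{2i-1}}{n_{i+1}}\1\1^\top$ is the rank-one mean and $\|E_i\|\le\lambda^{(i)}$, expand the telescoping product $\prod_i(\,\overline{B_i}+E_i\,)$, and subtract the all-mean term $\bar T_H$. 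Each surviving cross-term carries at least one factor $E_i$, and estimating its cut-norm against sign vectors reduces, edge by edge, to the gap ratios $\lambda^{(i)}/\sqrt{d_{2i-1}d_{2i}}$. The asymmetry between the endpoint coefficient ($1$ on the $k=1$ term) and the interior coefficients ($2$ on $2\le k\le t-1$) should emerge because the two boundary vertices of the path lie on a single bipartite graph while each interior vertex is shared by two, so the leading single-$E_i$ contributions of the interior edges are counted twice. Collecting the dominant single-$E_i$ contributions would yield
\begin{align*}
\|\bar T_H-T_H\|_\square\le |E|\cdot 2^{t-2}\Bigl(\tfrac{\lambda^{(1)}}{\sqrt{d_1d_2}}+\sum_{k=2}^{t-1}\tfrac{2\lambda^{(k)}}{\sqrt{d_{2k-1}d_{2k}}}\Bigr).
\end{align*}

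Substituting this together with the previous display into the reduction identity, the factor $|E|$ cancels, and the constants combine as $4\cdot 2^{t-2}=2^t$, giving the stated bound with constant $2^tK_G^{t-1}$; when all $B_i$ are $d$-regular this recovers $C_t=2^t(2t-3)K_G^{t-1}$ of Theorem~\ref{thm:main} via $1+2(t-2)=2t-3$. The delicate points I anticipate are making the telescoping estimate tight enough to avoid spurious $\sqrt{\prod_i n_i}$ losses (precisely the gain over Theorem~\ref{thm:main3}), tracking the heterogeneous dimensions $n_i$ and degrees $d_{2i-1},d_{2i}$ correctly through the biregular centering, and verifying the max-quasinorm Hadamard and duality facts in the rectangular setting.
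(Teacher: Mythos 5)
Your proposal is correct in substance and reaches the stated constant, but it takes a genuinely different route from the paper at the two places that matter. The paper never bounds a cut-norm of the centered adjacency tensor directly: it (i) proves the set-wise mixing estimate of Theorem~\ref{thm:mixingbipartite} by telescoping a random walk along the chain of bipartite graphs, using the projections $P_i$ and the scaled biadjacency matrices $M_i=B_i/d_{2i-1}$ (Lemmas~\ref{lem:SRW} and~\ref{lem:bipartite1}); (ii) converts rank-one sign tensors into $2^{t-1}$ indicator rank-one tensors via the even-string decomposition \eqref{eq:sumsign}; and (iii) passes to general tensors through the sign nuclear norm $\|\cdot\|_\pm$ and Lemma~\ref{lem:grothendieck}, applying everything to $R=(\hat T-T)*(\hat T-T)$. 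Your replacement for (i)--(ii) --- centering each biadjacency matrix $B_i=\bar B_i+E_i$ with $\|E_i\|\le\lambda^{(i)}$ and expanding the sign-vector chain $x^{(t)\top}M_{t-1}\,\diag(x^{(t-1)})\,M_{t-2}\cdots \diag(x^{(2)})\,M_1x^{(1)}$ with $M_i\in\{\bar B_i,E_i\}$ --- does work: each term with $E$'s at positions $S$ is bounded by $|E|\prod_{k\in S}\varepsilon_k$ where $\varepsilon_k=\lambda^{(k)}/\sqrt{d_{2k-1}d_{2k}}\le 1$ (the heterogeneous $n_i$'s cancel exactly through $n_id_{2i-1}=n_{i+1}d_{2i}$, so no $\sqrt{\prod_i n_i}$ loss occurs), giving $\|\bar T_H-T_H\|_\square\le|E|\bigl(\prod_k(1+\varepsilon_k)-1\bigr)\le|E|\,2^{t-2}\sum_k\varepsilon_k$, which is in fact slightly \emph{stronger} than the asymmetric bound you posited. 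One caveat: your heuristic for the $1$-versus-$2$ coefficients (interior vertices shared by two bipartite graphs) is not the actual mechanism --- in the paper's telescoping they arise from $\sqrt{\alpha_1(1-\alpha_1)\alpha_2(1-\alpha_2)}\le 1/4$ on the first edge versus $\sqrt{\alpha_1\alpha_{k+1}(1-\alpha_{k+1})}\le 1/2$ on later ones, and in your expansion the natural bound is symmetric, so the asymmetry is pure slack. Your second deviation is a genuine improvement in rigor: splitting $\Delta*\Delta=\hat T*\hat T-2\,\hat T*T+T*T$ and bounding each Hadamard term by $\|T\|_{\max}^2$ via Theorem~\ref{thm:max-qnorm-properties}(2,3) produces the factor $4$ linearly and avoids the quasinorm triangle inequality $\|\hat T-T\|_{\max}\le 2^{t/2-1}(\|\hat T\|_{\max}+\|T\|_{\max})$, whose $2^{t-2}$ penalty the paper's ``same steps as Theorem~\ref{thm:main3}'' chain formally incurs but silently drops in Section~\ref{sec:proof_quasiregular_completion}; your route reaches the stated $2^tK_G^{t-1}$ without that slip. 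Finally, your pairing bound $|\langle B,X\rangle|\le K_G^{t-1}\|B\|_\square\|X\|_{\max}$ is just the dual phrasing of Lemma~\ref{lem:grothendieck} (inequality \eqref{eq:T1}, with $\|B\|_\square=\|B\|_{\infty,1}$ since the multilinear form is maximized at sign vectors), so step (iii) is equivalent in content, and only your notation differs (the paper writes Hadamard products as $*$, reserving $\odot$ for the Khatri--Rao product).
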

When $t=2$, Theorem \ref{thm:main2} reduces to Theorem 24 in \cite{brito2018spectral} for deterministic matrix completion with bipartite biregular graphs.

The main motivation to study the mixing properties of hypergraph expanders in Section \ref{sec:construction} and \ref{sec:expander} is that the key parameter to control the mixing properties is the spectral gap, which is well understood in spectral graph theory and easy to compute. This gives us explicit error control in Theorem \ref{thm:main} and Theorem \ref{thm:main2}.

\subsection{Sample complexity}
\label{sec:main-sample-complexity} We focus on the sample complexity analysis for the hypergraph model we used in Theorem \ref{thm:main} and other models can be analyzed in a similar way.  Recall that the number of edges in $H$ is $n d^{t-1}$,
equal to the number of samples.
Suppose we have an expander graph $G$, where $\lambda=O( \sqrt{d})$.
In order to guarantee  the right hand side in \eqref{eq:errorbound} is bounded by $\varepsilon$,
Theorems~\ref{thm:main} and \ref{thm:max-rank}
say that, assuming $t = O(1)$, we require
\begin{align}\label{eq:Ebound}
   \displaystyle |E| = O(\|T\|_{\max}^{4t-4} \varepsilon^{-2(t-1)}n)
   =
   O\left( \frac{ n r^{2(t-1)(t^2-t-1)}}{\varepsilon^{2(t-1)}} \right) 
\end{align} samples, which is linear in $n$. 
The computations are shown in Section~\ref{sec:sample-complexity}. 


In Theorem~\ref{thm:main},
the dependence on rank is  exponential in $t$,
since the best known dependence of the max-qnorm on rank
is
$\| T \|_\mathrm{max} = O (\sqrt{r^{t^2-t-1}})$.
For matrices, our results have $|E|=O(nr^2/\varepsilon^2)$, which is the same sample complexity derived in \cite{heiman2014deterministic}.
A better understanding of the max-qnorm
for tensors may lead to better dependence on the rank.

The sample complexity in Theorem \ref{thm:main3} can be similarly analyzed. To guarantee an $\varepsilon$ mean squared error,   we have $|E|=O(\lambda_2(H)\|T\|_{\max}^2\varepsilon^{-1} n^{t/2})$. The dependence on $\|T\|_{\max}$ and $\varepsilon$ in this bound is better, but the dependence on $n$ is much weaker compared to \eqref{eq:Ebound}.

\subsection{Computational complexity}
It is not clear what is the computational complexity for solving the optimization problem \eqref{eq:opt}.
Solving it might be NP-hard, but we provide a practical algorithm to approximately solve it.

\subsection{Organization of the paper} 

In Section \ref{sec:construction},
we construct the  hypergraph expanders with good mixing properties. 
In Section \ref{sec:expander},
we prove an expander mixing lemma for such hypergraphs. 
In Section \ref{sec:property},
we prove several useful properties of max-quasinorm for tensors. In Section \ref{sec:completion},
we leverage these properties to analyze the above tensor 
completion algorithm and prove the main results.
We extend our result for tensor completion with errors in the observed entries,
which can model noise or adversarial corruptions. In Section \ref{sec:optimization} we provide a numerical  algorithm for finding tensors with the minimum complexity.
We conclude with a discussion of limitations and 
future directions in Section~\ref{sec:discussion}. 
Omitted proofs are provided in Appendix~\ref{sec:appendix}.

\subsection{Notation}
\label{sec:notation}

The notations we use throughout the paper come from the review
by \citet{kolda2009tensor}.
We use lowercase symbols $u$ for vectors,
uppercase $U$ for matrices and tensors.
The symbol ``$\circ$'' denotes the outer product of 
vectors, i.e.\ 
$T= u \circ v \circ w$
denotes the order-3, rank-1 tensor with entry 
$T_{i,j,k} = u_i v_j w_k$.
We also use this symbol for the outer product of matrices
as appears in the rank-$r$ decomposition
of a tensor $T = U^{(1)} \circ U^{(2)} \circ U^{(3)}$, 
where each matrix $U^{(i)}$ has $r$ columns,
so that 
$T_{i,j,k} = \sum_{l=1}^r U^{(1)}_{i,l} \, U^{(2)}_{j,l} \, U^{(3)}_{k,l}$,
and
$T = \bigcirc_{i=1}^t U^{(i)}$ 
is shorthand for the same order-$t$, rank-$r$ tensor.
The symbols $\otimes$ and $*$ denote
Kronecker and Hadamard products, respectively, which will be defined in Section \ref{sec:property}.
We use $\bigotimes_{i=1}^t \R^{n_i}$
for the space of all order-$t$ tensors with $n_i$ entries in the $i$-th 
dimension.
We use $1_A \in \R^n$ as the indicator vector of a set $A \subseteq [n]$,
i.e.\ $(1_A)_i = 1$ if $i \in A$ and 0 otherwise.  For any order-$t$ tensor $T\in  \bigotimes_{i=1}^t \R^{n_i}$ and subsets $I_i\subseteq [n_i]$, denote $T_{I_1,\dots,I_t}$ to be the subtensor restricted on the index set $I_1\times \cdots \times  I_t.$
Norms $\| \cdot \|$ are by default the $\ell_2$
norm for vectors and operator norm for matrices and tensors. 
We use the notation $| \cdot |_p$ for
entry-wise $\ell_p$ norms of matrices and tensors 
and always include the subscript to avoid confusion
with set cardinality.

\section{Construction of hypergraph expanders}
\label{sec:construction}

We start with the definition of a hypergraph and some basic properties.
\begin{definition}[hypergraph]
A \textbf{hypergraph} $H$ consists of a set of vertices $V$
and a set of hyperedges $E$,
where each hyperedge is a nonempty set of $V$, 
the vertices that participate in that hyperedge.
The hypergraph $H$ is $t$-\textbf{uniform} 
for an integer $t\geq 2$ 
if every hyperedge $e\in  E$ contains exactly $t$ vertices. 
The \textbf{degree} of vertex $i$ is the number of all hyperedges containing $i$.
A hypergraph is $d$-\textbf{regular} if all of its vertices have degree $d$.  

A $t$-uniform hypergraph is \textbf{$t$-partite} if its vertex set $V$ can be decomposed as $V_1\cup V_2\cup \cdots \cup V_t$ such that each hyperedge $e\in E$ consists of $t$ vertices $v_1,\dots, v_t$ such that $v_i\in V_i$ for $1\leq i\leq t$. For a $t$-uniform hypergraph is $t$-partite hypergraph $H=(V,E)$, we denote each hyperedge $e$ as an ordered tuple $(v_1,\dots,v_t)$ where $v_k\in V_k, 1\leq k\leq t$.
\end{definition}

\begin{definition}[adjacency tensor for $t$-uniform $t$-partite hypergraphs]
Let $H=(V,E)$ be a $t$-uniform $t$-partite hypergraph with vertex set $V=V_1\cup\cdots \cup V_t$ such that $|V_k|=n_k, 1\leq k\leq t$. We define the adjacency tensor $T\in \otimes_{k=1}^t \{0,1\}^{n_k}$  as 
\begin{align*}
    T_{i_1,\dots,i_t}= \begin{cases}
    1 & \text{if $(i_1,\dots,i_t)\in E$}, i_k\in V_k, 1\leq k\leq t, \\
    0 & \text{otherwise}.
    \end{cases}
\end{align*}
\end{definition}

\subsection{Construction of regular hypergraph expanders}\label{sec:regular}
Let $G=(V(G), E(G))$ be a connected $d$-regular 
graph on $n$ vertices
with second largest  eigenvalue (in absolute value)
$\lambda\in (0,d)$. 
We construct a $t$-partite, $t$-uniform, $d^{t-1}$-regular hypergraph 
$H=(V,E)$ from $G$ as follows.

\begin{definition}[regular hypergraph expander]\label{eq:def_regular}
Let $V=V_1\cup V_2\cup \cdots \cup V_t$ 
be the disjoint union of $t$ vertex sets
such that $|V_1| = \cdots = |V_t|=n$. 
The hyperedges of $H$ correspond to all walks  of length $t-1$ in $G$:
$(v_1 ,\dots, v_t)$ 
is a hyperedge in $H$ if and only if 
$(i_1,\dots, i_{t})$ is a walk of length $t-1$ in $G$.
\end{definition}


Given the description above, 
we have $|V|=nt$ and $|E|=n d^{t-1}$,
since $E$ contains all possible walks of length $t-1$ in $G$. 
Moreover, every vertex is contained in exactly $d^{t-1}$ many hyperedges,
so $H$ is regular.
From our definition of the hyperedges in $H$, 
the order of the walk in $G$ matters. 
For example, 
two walks $i_1 \to i_2 \to i_3$ and $i_3 \to i_2 \to i_1$ 
correspond to different hyperedges 
$(i_1,i_2,i_3)$ and $(i_3,i_2,i_1)$ in $H$ when $i_1\not=i_3$. When $t=2$, $H$ is a bipartite $d$-regular graph with $2n$ vertices.
See Figure~\ref{fig:hyp3} for an example of the construction with $t=3$.

\begin{figure}
\centering
\includegraphics[width=0.7\linewidth]{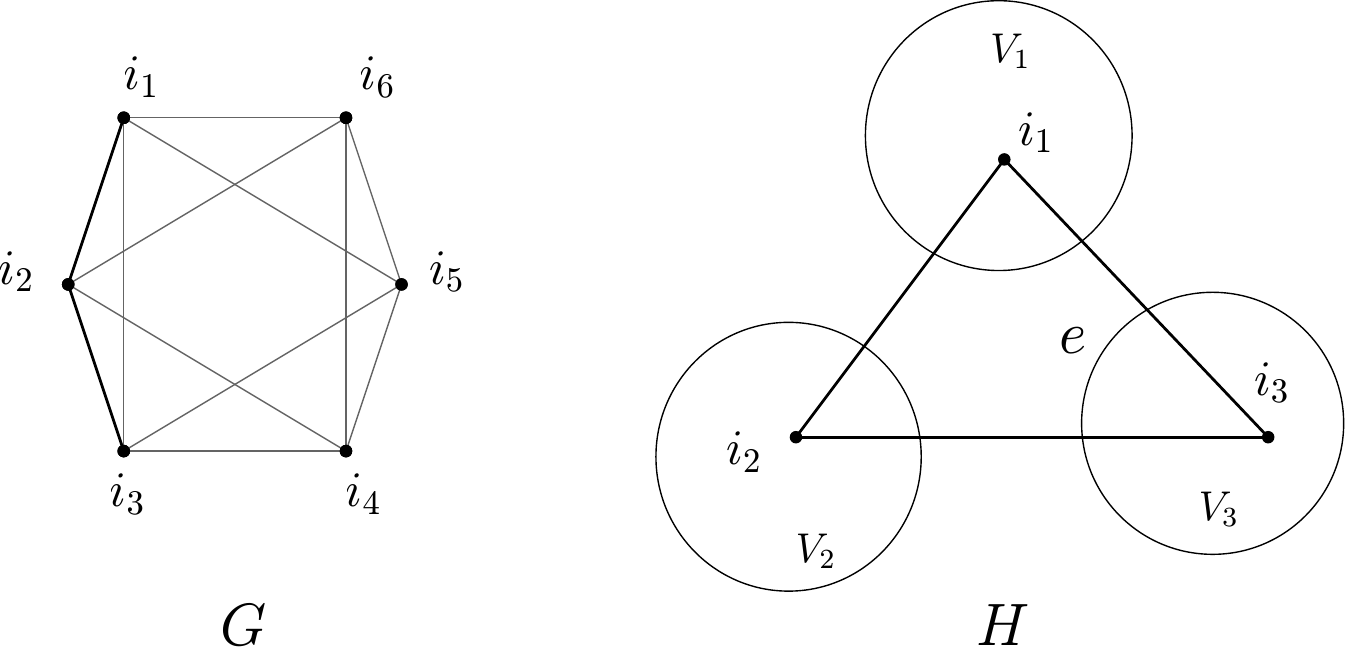}
\caption{An example hyperedge in the $t=3$ case: 
We depict the base graph $G$ on the left and a single edge in the hypergraph
$H$ on the right.
$(i_1, i_2, i_3)$
forms an hyperedge $e$ in $H$ if and only if 
$(i_1,i_2,i_3)$ is a walk in $G$.
}
\label{fig:hyp3}
\end{figure}

This construction was used by \cite{alon1995derandomized} to
de-randomize graph products and by \cite{bilu2004codes}
to construct error correcting codes. 
Both groups' results depended on analyzing the  expansion properties of this hypergraph model.

\subsection{Generalization to quasi-regular hypergraphs}\label{sec:quasiregular}
\begin{definition}[bipartite biregular graph]
A graph $G=(V,E)$ is \textbf{bipartite}  if the vertex set $V$ can be partitioned into disjoint vertex sets $V_1, V_2$ such that every edge connects a vertex in $V_1$ to a vertex in $V_2$. A bipartite graph $G$ is $(d_1,d_2)$-\textbf{biregular} if each vertex in $V_1$ has degree $d_1$ and each vertex in $V_2$ has degree $d_2$.
\end{definition}

We can also generalize our construction in Section \ref{sec:regular} to $t$-partite quasi-regular hypergraph $H$ with $|V_1|=n_1,\dots, |V_t|=n_t$. The idea is to combine $t-1$ many bipartite biregular graphs in the following way.

\begin{definition}[quasi-regular hypergraph expander]\label{eq:def_quasi}
Take $\{d_i\}_{1\leq i\leq 2t-2}$ such that $n_id_{2i-1}=n_{i+1}d_{2i}$ for $1\leq i\leq t-1$. Let $G_1,\dots, G_{t-1}$ be bipartite biregular graphs such that $G_i$ is $(d_{2i-1},d_{2i})$-biregular with two vertex sets of size $n_i$ and $n_{i+1}$, respectively. We construct $H$ such that an hyperedge $e=(v_1,\dots,v_t)$ is in $H$ if and only if $(v_i,v_{i+1})\in E(G_i)$ for all $1\leq i\leq t-1$. Now we have 
$|E(H)|=n_1\prod_{i=1}^{t-1}d_{2i-1}$. Denote the second largest eigenvalue of $G_i$ by $\lambda^{(i)}$.
\end{definition}

Note that the hypergraph $H$ is not regular, but each vertex in $V_i$ has the same degree for $1\leq i\leq t$, as shown in the following lemma. The proof of Lemma \ref{lem:quasiregular} is given in Section \ref{sec:lemdegree}.
\begin{lemma}\label{lem:quasiregular}
Let $H$ be the $t$-uniform $t$-partite quasi-regular hypergraph defined above,  then the degree of each vertex in $V_i$ is
$ \left(\prod_{k=1}^{i-1}d_{2k}\right)\left(\prod_{k=i}^{t-1}d_{2k-1}\right).$
\end{lemma}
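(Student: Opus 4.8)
The plan is to count directly the number of hyperedges of $H$ that contain a fixed vertex $v_i \in V_i$. By the construction in Definition~\ref{eq:def_quasi}, such a hyperedge is an ordered tuple $(v_1,\dots,v_t)$ whose $i$-th coordinate equals $v_i$ and which satisfies $(v_j,v_{j+1})\in E(G_j)$ for every $1\le j\le t-1$. Since these constraints couple only consecutive coordinates, the tuple splits into a \emph{left part} $(v_1,\dots,v_{i-1})$, governed by the edges of $G_1,\dots,G_{i-1}$, and a \emph{right part} $(v_{i+1},\dots,v_t)$, governed by the edges of $G_i,\dots,G_{t-1}$. These two parts involve disjoint sets of free coordinates and share no edge constraint, so the total count factors as the product of the number of admissible left parts and the number of admissible right parts.

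To count the admissible right parts, I would choose $v_{i+1},v_{i+2},\dots,v_t$ sequentially. Recall that $G_k$ joins $V_k$ and $V_{k+1}$, with every vertex of $V_k$ having degree $d_{2k-1}$ and every vertex of $V_{k+1}$ having degree $d_{2k}$. Thus, having fixed $v_k\in V_k$, the next vertex $v_{k+1}$ ranges over the $d_{2k-1}$ neighbors of $v_k$ in $G_k$, and biregularity guarantees this count is the same for every choice of $v_k$. Iterating $k$ from $i$ up to $t-1$ shows the number of admissible right parts is exactly $\prod_{k=i}^{t-1} d_{2k-1}$, independent of the starting vertex $v_i$.

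Symmetrically, for the left part I would choose $v_{i-1},v_{i-2},\dots,v_1$ by walking backward: given $v_{k+1}\in V_{k+1}$, the predecessor $v_k$ ranges over the $d_{2k}$ neighbors of $v_{k+1}$ in $G_k$, now using the degree of the $V_{k+1}$-side vertices. Running $k$ from $i-1$ down to $1$ yields $\prod_{k=1}^{i-1} d_{2k}$ admissible left parts. Multiplying the two factors gives the claimed degree $\left(\prod_{k=1}^{i-1}d_{2k}\right)\left(\prod_{k=i}^{t-1}d_{2k-1}\right)$; the boundary cases $i=1$ and $i=t$ are covered by interpreting the corresponding empty product as $1$.

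The only real care needed is bookkeeping: one must consistently track which of the two vertex classes of each $G_k$ a given coordinate belongs to, and hence whether the relevant degree is $d_{2k-1}$ (the $V_k$-side) or $d_{2k}$ (the $V_{k+1}$-side). There is no genuine obstacle beyond this index discipline, since the biregularity of each $G_k$ makes every step of the count a multiplication by a fixed degree, and the path structure makes the left and right extensions independent.
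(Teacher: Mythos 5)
Your proof is correct and follows the same route as the paper's: counting walks through the fixed vertex by choosing the left part $(v_1,\dots,v_{i-1})$ backward with $\prod_{k=1}^{i-1}d_{2k}$ choices and the right part $(v_{i+1},\dots,v_t)$ forward with $\prod_{k=i}^{t-1}d_{2k-1}$ choices, then multiplying. You simply spell out the biregularity bookkeeping and the boundary cases in more detail than the paper does.
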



Deterministic or random construction of bipartite biregular graph with a small second eigenvalue was considered in \cite{burnwal2020deterministic,brito2018spectral}, which can be used to construct the quasi-regular hypergraph in Definition \ref{eq:def_quasi}.


\section{Expander mixing}\label{sec:expander}

In this section, we prove  novel and tighter expansion properties in the hypergraph models we constructed in Section \ref{sec:construction}.
Later, we will apply them to tensor completion.





\subsection{Regular hypergraph expanders}
Let $G$ be a $d$-regular graph on $n$ vertices with 
 $\lambda\in (0,d)$,
and let $H$ be the corresponding $t$-partite, $t$-uniform
hypergraph constructed as in Section \ref{sec:regular}. 
We get the following mixing lemma for $H$. 
The mixing rate is essentially controlled by the second 
eigenvalue of the $d$-regular graph $G$. This is an advantage over other expander mixing lemmas for hypergraphs \cite{friedman1995second,cohen2016inverse,parzanchevski2017mixing}, since in our model the parameters that control the mixing rate are explicit and easy to compute.  

\begin{theorem} 
Given a base graph $G$ with $\lambda\in (0,d)$, form the hypergraph $H$ 
following the construction in Section~\ref{sec:construction}.
Let $W_i\subseteq V_i$,  $1\leq i\leq t$ be any non-empty subsets. Denote 
$
\alpha_i:=\frac{|W_i|}{n}\in [0,1],   1\leq i\leq t,
$
and let
\[
e(W_1,\dots, W_t):=\left|\{(v_1,\dots, v_t)\in W_1\times \cdots \times W_t: (v_1,\dots, v_t) \in E(H)\}\right|. 
\]
 Then the following expander mixing property holds:
\label{thm:mixing}
\begin{align}
    &\left|\frac{e(W_1,\dots,W_t)}{nd^{t-1}}-\prod_{i=1}^t\alpha_i\right| \notag\\
\leq & \frac{\lambda}{d} \left( \sqrt{\alpha_1(1-\alpha_1)\alpha_2(1-\alpha_2)}\prod_{k=3}^{t} \alpha_k+ \sum_{i=3}^{t} \sqrt{\alpha_1 \alpha_i(1-\alpha_i)} \prod_{k=i+1}^{t}\alpha_k\right)
\leq \frac{(2t-3)\lambda}{4d}.
\label{eq:mixing}
\end{align}
\end{theorem}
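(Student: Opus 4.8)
The plan is to translate the hyperedge count into a matrix product and then bound the discrepancy by a telescoping ``peeling'' argument, so that only $O(t)$ error terms appear rather than the $2^{t-1}$ terms one would obtain from fully expanding. First I would write the walk count as
\[
e(W_1,\dots,W_t) = 1_{W_1}^\top A D_2 A D_3 \cdots A D_{t-1} A\, 1_{W_t},
\]
where $A$ is the adjacency matrix of $G$ and $D_i = \diag(1_{W_i})$, since the summand $\prod_{i=1}^{t-1}A_{v_iv_{i+1}}$ counts exactly the walks of length $t-1$ constrained to $W_1\times\cdots\times W_t$. I would then decompose $A = \tfrac{d}{n}\1\1^\top + E$, where $\1$ is the all-ones vector and $E$ satisfies $E\1 = 0$, $\1^\top E = 0$, and $\|E\| = \lambda$.

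The key step is a one-sided recursion obtained by peeling off the rightmost factor $A\,1_{W_t} = d\alpha_t \1 + E\,1_{W_t}$. The $\1$-part collapses via $D_{t-1}\1 = 1_{W_{t-1}}$ and reproduces exactly $d\alpha_t\, e(W_1,\dots,W_{t-1})$, while the $E$-part leaves a single error term. Writing $R_k := \frac{e(W_1,\dots,W_k)}{nd^{k-1}} - \prod_{i=1}^k \alpha_i$ for the normalized discrepancy, this yields
\[
R_t = \alpha_t R_{t-1} + \frac{1}{nd^{t-1}}\, 1_{W_1}^\top A D_2 \cdots A D_{t-1} E\, 1_{W_t},
\]
and unrolling down to the base case $R_2$ gives
\[
R_t = \Big(\prod_{k=3}^t\alpha_k\Big) R_2 + \sum_{i=3}^t \Big(\prod_{k=i+1}^t\alpha_k\Big)\epsilon_i, \qquad \epsilon_i := \frac{1}{nd^{i-1}}\,1_{W_1}^\top A D_2\cdots A D_{i-1}E\,1_{W_i}.
\]

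To bound each piece I would use $E\1 = 0$ to replace $1_{W_i}$ by the centered indicator $\tilde x_i = 1_{W_i} - \alpha_i \1$, which has $\|\tilde x_i\| = \sqrt{n\alpha_i(1-\alpha_i)}$, so that $\|E\,1_{W_i}\| = \|E\tilde x_i\| \le \lambda\sqrt{n\alpha_i(1-\alpha_i)}$. For the left vector I would use the crude operator bounds $\|A\| = d$, $\|D_i\|\le 1$, and $\|1_{W_1}\| = \sqrt{n\alpha_1}$ to get $\|1_{W_1}^\top A D_2\cdots A D_{i-1}\| \le d^{i-2}\sqrt{n\alpha_1}$, whence $|\epsilon_i| \le \frac{\lambda}{d}\sqrt{\alpha_1\alpha_i(1-\alpha_i)}$. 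The base case $R_2 = \frac{1}{nd}\tilde x_1^\top E\tilde x_2$ is the ordinary graph expander mixing lemma, bounded by $\frac{\lambda}{d}\sqrt{\alpha_1(1-\alpha_1)\alpha_2(1-\alpha_2)}$ via Cauchy--Schwarz on both sides. Assembling these gives the first inequality, and the final inequality follows from $\prod_k \alpha_k \le 1$, $\sqrt{\alpha_i(1-\alpha_i)}\le \tfrac12$, and $\sqrt{\alpha_1}\le 1$, which bound the base term by $1/4$ and each of the $t-2$ sum terms by $1/2$, for a total of $(2t-3)/4$.

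The main obstacle is designing the recursion so that the number of error terms stays linear in $t$: a naive expansion of $\prod(\tfrac dn\1\1^\top + E)$ produces exponentially many terms, because the interleaved diagonal matrices $D_i$ block the cancellation $(\1\1^\top)E = 0$ from collapsing adjacent factors. The peeling identity sidesteps this by exposing only one $E$ at a time and folding the remainder into the lower-order quantity $e(W_1,\dots,W_{t-1})$; verifying that the $\1$-part of the peeled factor reproduces $d\alpha_t\,e(W_1,\dots,W_{t-1})$ exactly is the crux, and it also explains the asymmetry in the bound, where the base term is centered on both ends while the recursive error terms are centered only on the $i$-th coordinate.
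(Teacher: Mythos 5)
Your proof is correct and follows essentially the same route as the paper, which phrases the identical argument in random-walk language: your peeling recursion $R_t = \alpha_t R_{t-1} + \epsilon_t$ is the paper's telescoping of $\|v_i\|_1$ with $v_{i+1} = P_{i+1} M v_i$ (Lemma~\ref{lem:y}), your centered bound $\|E\,1_{W_i}\|_2 \le \lambda\sqrt{n\alpha_i(1-\alpha_i)}$ is their Cauchy--Schwarz step, and your crude prefix bound $\|1_{W_1}^\top A D_2 \cdots A D_{i-1}\|_2 \le d^{i-2}\sqrt{n\alpha_1}$ is, after normalization by $nd^{i-2}$, exactly their estimate $\|y_i\|_2 \le \|v_1\|_2 = \sqrt{\alpha_1/n}$. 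Both arguments yield the same doubly-centered base term, the same $t-2$ singly-centered error terms with the same $\alpha$-product prefactors, and the same final numeric bound $(2t-3)\lambda/(4d)$.
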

\begin{remark}
When $t=2$, Theorem \ref{thm:mixing} reduces to the expander mixing lemma \eqref{eq:expandermixing} for regular graphs.  The results in \cite{bilu2004codes, alon1995derandomized} estimate the ratio between $\frac{e(W_1,\dots,W_t)}{nd^{t-1}}$ and $\prod_{i=1}^t\alpha_i$. In particular, Lemma 3 in \cite{bilu2004codes} only provides  an upper bound,  and Theorem 4 of \cite{alon1986eigenvalues} gives a two-sided bound with an extra assumption that $\alpha_i>6\lambda/d$ for all $1\leq i\leq t$. Our new expander mixing result provides a two-sided control of the difference between the two quantities without any restriction on $\alpha_i$, which is important for our tensor completion analysis, and the results in \cite{alon1995derandomized,bilu2004codes} are not directly applicable. 
The ratio $\frac{e(W_1,\dots,W_t)}{nd^{t-1}}$ describes the hitting property of expander walks, which has many applications in theoretical computer science \cite{ajtai1987deterministic,kahale1995eigenvalues,hoory2006expander,cohen2020expander}.
\end{remark}

Let $A$ be the adjacency matrix of $G$ and $M=\frac{1}{d}A$ be the transition probability matrix for the simple random walk on $G$. Define $u=\frac{1}{ n} (1,\dots,1)^{\top}.$ Then $u$ is the first eigenvector of $M$ with eigenvalue $1$ and the second eigenvalue of $M$ is $\lambda/d$.  Define
\begin{align}
P_k&=\sum_{i\in W_k} e_ie_i^{\top}, \quad \1_{W_k}=\sum_{i\in W_k}e_i,
\end{align} 
where $e_i$ is the basis vector with $1$ in the $i$-th coordinate and $0$ elsewhere. 
Then $P_k$ is a projection matrix such that
$
    P_ke_i=\mathbf{1}\{i\in W_k \}e_i.
$
Let $P$ be the projection onto the orthogonal space of $u$. As shown in  \cite{alon1995derandomized}, considering the simple random walk of $t$ steps in $G$ with a uniformly chosen initial vertex,
the probability that the simple random walk stays in $W_i$ at step $i$ for all $1\leq i\leq t$ can be written as
\begin{align}
    \frac{e(W_1,\dots,W_t)}{nd^{t-1}}=\| P_tMP_{t-1} M\cdots P_2MP_1u\|_1.
\end{align}
Let
$
    v_1= P_1u=\frac{1}{n}\1_{W_1}, 
    v_{i+1} =P_{i+1}Mv_{i}.
$
Then 
\begin{align}\frac{e(W_1,\dots,W_t)}{nd^{t-1}}=\|v_{t}\|_1, \quad 
    \|v_1\|_1=\alpha_1, \quad \|v_1\|_2=\frac{\sqrt{\alpha_1}}{\sqrt n}.
\end{align}

Decompose $v_i=x_i+y_i$, where $x_i$ is the part of $v$ that is a scalar multiple of $u$, and $y_i=Pv_i$ is  orthogonal to $u$.
We first prove the following lemma. 
The argument is based on the proof of the expander mixing lemma \eqref{eq:expandermixing} for $d$-regular graphs (see e.g. \cite{chungspectral}).
\begin{lemma}\label{lem:y}
For $1\leq i\leq t-1$,
\begin{align*} 
  \left | \|v_{i+1}\|_1-\alpha_i \|v_i\|_1  \right|\leq \frac{\lambda}{d}\sqrt{n\alpha_{i+1}(1-\alpha_{i+1})}\|y_i\|_2.
\end{align*}
\end{lemma}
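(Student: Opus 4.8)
The plan is to read $\|v_{i+1}\|_1$ directly off the recursion $v_{i+1}=P_{i+1}Mv_i$ and then split the incoming vector $v_i$ into its component along the stationary direction $u$ and its orthogonal part $y_i$, since only the latter is contracted by the spectral gap. First I would record that every $v_k$ is entrywise nonnegative (it comes from the nonnegative vector $u$ by applying the nonnegative operators $M$ and the coordinate projections), and that $v_{i+1}$ is supported on $W_{i+1}$. Hence $\|v_{i+1}\|_1=\1_{W_{i+1}}^\top v_{i+1}=\1_{W_{i+1}}^\top M v_i$, using $\1_{W_{i+1}}^\top P_{i+1}=\1_{W_{i+1}}^\top$. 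Then I would use the decomposition $v_i=x_i+y_i$ with $x_i=\|v_i\|_1\,u$: the orthogonal part $y_i=Pv_i$ has zero coordinate sum, so $x_i$ carries all of the mass $\|v_i\|_1$, and being a multiple of $u$ it must equal $\|v_i\|_1\,u$.

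Splitting accordingly gives $\|v_{i+1}\|_1=\1_{W_{i+1}}^\top M x_i+\1_{W_{i+1}}^\top M y_i$. Since $Mu=u$, the first term is $\|v_i\|_1\,\1_{W_{i+1}}^\top u=\|v_i\|_1\cdot\tfrac{|W_{i+1}|}{n}$, i.e.\ the density of the target block $W_{i+1}$ times the incoming mass $\|v_i\|_1$; this is the main term that gets subtracted (the index consistent with the $\alpha_{i+1}$ inside the square root on the right, and with the telescoping that collapses to $\prod_{k=1}^t\alpha_k$ in Theorem~\ref{thm:mixing}). All the content of the lemma then sits in the remainder $\1_{W_{i+1}}^\top M y_i$.

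For the remainder the key trick will be to center the indicator before applying Cauchy--Schwarz. Because $A$, hence $M$, is symmetric and $My_i$ lies in $u^\perp$ (indeed $u^\top My_i=(Mu)^\top y_i=u^\top y_i=0$), I may subtract any multiple of the all-ones vector $nu$ from $\1_{W_{i+1}}$ without changing the inner product, so $\1_{W_{i+1}}^\top My_i=(\1_{W_{i+1}}-|W_{i+1}|\,u)^\top My_i$. The centered vector has entries $1-\tfrac{|W_{i+1}|}{n}$ on $W_{i+1}$ and $-\tfrac{|W_{i+1}|}{n}$ off it, so its $\ell_2$ norm is exactly $\sqrt{n\tfrac{|W_{i+1}|}{n}\left(1-\tfrac{|W_{i+1}|}{n}\right)}=\sqrt{n\alpha_{i+1}(1-\alpha_{i+1})}$. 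Cauchy--Schwarz gives $|\1_{W_{i+1}}^\top My_i|\le\sqrt{n\alpha_{i+1}(1-\alpha_{i+1})}\,\|My_i\|_2$, and since $y_i\perp u$ while the operator norm of $M$ on $u^\perp$ is the second eigenvalue $\lambda/d$, we get $\|My_i\|_2\le\tfrac{\lambda}{d}\|y_i\|_2$. Combining yields $\big|\,\|v_{i+1}\|_1-\tfrac{|W_{i+1}|}{n}\|v_i\|_1\,\big|\le\tfrac{\lambda}{d}\sqrt{n\alpha_{i+1}(1-\alpha_{i+1})}\,\|y_i\|_2$.

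The main obstacle — in fact the only nontrivial step — is spotting the centering of $\1_{W_{i+1}}$: a naive application of Cauchy--Schwarz against $\1_{W_{i+1}}$ itself would give the weaker factor $\sqrt{|W_{i+1}|}=\sqrt{n\alpha_{i+1}}$ and lose the crucial $(1-\alpha_{i+1})$. Centering is justified precisely by the symmetry of $M$, which forces $My_i$ to be orthogonal to the all-ones direction; this is the exact analogue of the mechanism producing the factors $\bigl(1-\tfrac{|V_i|}{n}\bigr)$ in the classical expander mixing lemma~\eqref{eq:expandermixing}. I would flag for the authors that the computation forces the subtracted main term on the left to be $\tfrac{|W_{i+1}|}{n}\|v_i\|_1=\alpha_{i+1}\|v_i\|_1$ rather than $\alpha_i\|v_i\|_1$, so that the left- and right-hand indices agree and the downstream telescoping in Theorem~\ref{thm:mixing} goes through.
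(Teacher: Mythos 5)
Your proof is correct and follows essentially the same route as the paper's: both expand $\|v_{i+1}\|_1 = \1_{W_{i+1}}^\top M v_i$, isolate the main term $\alpha_{i+1}\|v_i\|_1$ using $Mu=u$, and bound the remainder via Cauchy--Schwarz against the centered indicator $\1_{W_{i+1}}-\alpha_{i+1}\1$ together with the spectral-gap contraction $\|My_i\|_2\le(\lambda/d)\|y_i\|_2$ (the paper phrases the centering as vanishing cross terms from decomposing both vectors, while you justify it via symmetry of $M$, but the mechanism is identical). Your flag about the index is also right: the $\alpha_i$ in the lemma statement is a typo for $\alpha_{i+1}$, as the paper's own proof and the subsequent telescoping in the proof of Theorem~\ref{thm:mixing} confirm.
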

\begin{proof}
Since the entries of $v_i$ represent probabilities, each $v_i$ is entrywise nonnegative. Denote $\1=(1,\dots,1)^{\top}$. We have
\begin{align*}
    \|v_{i+1}\|_1&=\|P_{i+1}Mv_i\|_1=\1_{W_{i+1}}^{\top} Mv_{i}\\
    &=\alpha_{i+1}\1^{\top} M \left(\frac{\|v_{i}\|_1}{n} \1\right)+ \left(\1_{W_{i+1}}-\alpha_{i+1}\1\right)^{\top} M \left(v_i-\frac{\|v_{i}\|_1}{n} \1\right)\\
    &\quad +(\1_{W_{i+1}}-\alpha_{i+1}\1)^{\top} M \frac{\|v_{i}\|_1}{n} \1+\alpha_{i+1}\1^{\top} M \left(v_i-\frac{\|v_{i}\|_1}{n} \1\right).
\end{align*}
Note that, since $\1$ is the  eigenvector of $M$ and $M^T$ with eigenvalue $1$,
\begin{align*}
   & (\1_{W_{i+1}}-\alpha_{i+1}\1)^{\top} M \frac{\|v_{i}\|_1}{n} \1=0,\quad \alpha_{i+1}\1^{\top} M \left(v_i-\frac{\|v_{i}\|_1}{n} \1\right)=0.
\end{align*}
We obtain 
\[  \|v_{i+1}\|_1
=\alpha_{i+1} \|v_i\|_1+ \left(\1_{W_{i+1}}-\alpha_{i+1}\1\right)^{\top} M \left(v_i-\frac{\|v_{i}\|_1}{n} \1\right).
\]
Therefore by Cauchy inequality,  $\left|  \|v_{i+1}\|_1-\alpha_{i+1} \|v_i\|_1\right|$ can be bounded by 
\begin{align*}
   \left| \left(\1_{W_{i+1}}-\alpha_{i+1}\1^{\top}\right) M \left(v_i-\frac{\|v_{i}\|_1}{n} \1\right)\right|
  &\leq \frac{\lambda}{d} \|\1_{W_{i+1}}-\alpha_{i+1}\1^{\top}\|_2 \left\|v_i-\frac{\|v_{i}\|_1}{n} \1 \right\|_2 \\
   &=\frac{\lambda}{d}\sqrt{n\alpha_{i+1}(1-\alpha_{i+1}) }\|y_i\|_2.
\end{align*}
\end{proof}
With Lemma \ref{lem:y}, we finish the proof of Theorem \ref{thm:mixing}.
\begin{proof}[Proof of Theorem \ref{thm:mixing}]
We would like to control $\| y_i \|_2$.
Note that 
\begin{align*}
    \|y_1\|_2= \left\|v_1-\frac{\|v_{1}\|_1}{n} \1 \right\|_2=\frac{1}{n}\left\|\1_{W_1}-\frac{|W_1|}{n}\1 \right\|_2= \sqrt{\frac{\alpha_1(1-\alpha_1)}{n}}.
\end{align*}
For $i\geq 2$, $
    \|y_i\|_2\leq \|v_i\|_2=\|P_iMv_{i-1}\|_2\leq \|v_{i-1}\|_2.
$
We obtain $\|y_i\|_2 \leq \|v_1\|_2=\sqrt{\alpha_1/n}.$
From Lemma \ref{lem:y}, we  have
\begin{align}
   \left| \|v_{t}\|_1 -\alpha_t \|v_{t-1}\|_1  \right|
    &\leq \frac{\lambda}{d} \sqrt{\alpha_1\alpha_{t}(1-\alpha_{t})}, \notag \\ 
      \left|  \alpha_t\|v_{t-1}\|_1 -\alpha_{t-1}\alpha_t \|v_{t-2}\|_1   \right| 
    &\leq \frac{\lambda}{d} \alpha_t\sqrt{\alpha_1\alpha_{t-1}(1-\alpha_{t-1})}, \notag \\
     &\vdots \notag\\
    \left| \alpha_4 \cdots \alpha_t \|v_3\|_1 - \alpha_3 \cdots \alpha_t \|v_2\|_1 \right|
     &\leq \frac{\lambda}{d}(\alpha_4\cdots\alpha_t) \sqrt{\alpha_1 \alpha_3 (1-\alpha_3)},  \notag\\
\left|\alpha_3\cdots\alpha_t\|v_2\|_1-\alpha_2\cdots\alpha_t\|v_1\|_1 \right|
&\leq \frac{\lambda}{d}(\alpha_3\cdots\alpha_t) \sqrt{\alpha_1(1-\alpha_1)\alpha_2(1-\alpha_2)}. \label{eq:telescope}
\end{align}
Since $\|v_1\|_1=\alpha_1$, for $t\geq 2$, combining the inequalities above and 
applying the triangle inequality leads to
\begin{align} \label{eq:vt}
  & \left|\|v_{t}\|_1 - \alpha_1 \cdots \alpha_t\right| \\
  &\quad \leq \frac{\lambda}{d} \left(\alpha_3\cdots\alpha_t \sqrt{\alpha_1(1-\alpha_1)\alpha_2(1-\alpha_2)}+ 
  \sum_{i=3}^{t} 
  \sqrt{\alpha_1 \alpha_i(1-\alpha_i)}
  \prod_{k=i+1}^{t}\alpha_k \right). \notag
\end{align}
In the $i=t$ term, the empty product is 
defined to equal 1.
Using the inequality $\sqrt{x(1-x)}\leq \frac{1}{2}$ for any $x\in [0,1]$
and that $\alpha_i \in [0,1]$, \eqref{eq:vt} implies
\begin{align}
    \left| \frac{e(W_1,\dots,W_t)}{n d^{t-1}} - \alpha_1\cdots\alpha_t \right|\leq \frac{\lambda}{d}\left( \frac{1}{4}+\frac{t-2}{2}\right).
\end{align}
 This completes the proof.
\end{proof}

\subsection{Quasi-regular hypergraph expanders}\label{sec:bipartiteextension}

The  quasi-regular hypergraph constructed in Section \ref{sec:quasiregular} has a similar expansion property as follows.
\begin{theorem} 
Let $H$ be the hypergraph constructed in Definition \ref{eq:def_quasi}.
Let $W_i\subseteq V_i$,  $1\leq i\leq t$ be any non-empty subsets. Denote 
$
\alpha_i:=\frac{|W_i|}{n_i}\in [0,1],  1\leq i\leq t,
$
and let
$
e(W_1,\dots, W_t)
$
be the number of hyperedges between $W_1,\dots, W_t$. Then the following expansion property holds:
\label{thm:mixingbipartite}
\begin{align*}
&\left| \frac{e(W_1, \ldots, W_t)}{n_1d_1\cdots d_{2t-3}} - \prod_{i=1}^t \alpha_i \right|\\
\leq &\frac{\lambda^{(1)}}{\sqrt{d_1d_2}}\sqrt{\alpha_1(1-\alpha_1)\alpha_2(1-\alpha_2)}\prod_{k=3}^t\alpha_k+\sum_{k=2}^{t-1}\frac{\lambda^{(k)}}{\sqrt{d_{2k-1}d_{2k}}}\sqrt{\alpha_1\alpha_{k+1}(1-\alpha_{k+1})}\prod_{i=k+2}^t \alpha_i\\
  \leq & \frac{\lambda^{(1)}}{4\sqrt{d_1d_2}}+\sum_{k=2}^{t-1}\frac{\lambda^{(k)}}{2\sqrt{d_{2k-1}d_{2k}}}.
\end{align*}
\end{theorem}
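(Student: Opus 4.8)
The plan is to run the proof of Theorem~\ref{thm:mixing} one level at a time, replacing the single transition matrix $M=\frac1d A$ by a sequence of transition matrices, one per bipartite biregular graph $G_i$. For $1\le i\le t-1$ let $B_i$ be the $n_{i+1}\times n_i$ biadjacency matrix of $G_i$ and set $M_i=\frac{1}{d_{2i-1}}B_i$, the operator sending a probability vector on $V_i$ to its one-step push-forward on $V_{i+1}$. Writing $u_i=\frac{1}{n_i}\1_{n_i}$ for the uniform vector on $V_i$, the balance condition $n_id_{2i-1}=n_{i+1}d_{2i}$ from Definition~\ref{eq:def_quasi} gives exactly $M_iu_i=u_{i+1}$ and $M_i^\top\1_{n_{i+1}}=\1_{n_i}$, so the all-ones/uniform direction is preserved at every level. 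Setting $P_k=\sum_{j\in W_k}e_je_j^\top$, $v_1=P_1u_1=\frac{1}{n_1}\1_{W_1}$ and $v_{i+1}=P_{i+1}M_iv_i$, an iterated edge count shows $\|v_t\|_1$ equals $e(W_1,\dots,W_t)$ divided by the normalization $n_1d_1\cdots d_{2t-3}=|E(H)|$, exactly as in the regular case.

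Next I would prove the analogue of Lemma~\ref{lem:y}. Decomposing $v_i=\|v_i\|_1u_i+y_i$ with $y_i\perp\1_{n_i}$, and expanding $\|v_{i+1}\|_1=\1_{W_{i+1}}^\top M_iv_i$ into the same four pieces as in that proof, the two mixed pieces vanish because $M_i\1_{n_i}$ and $M_i^\top\1_{n_{i+1}}$ are proportional to all-ones vectors. This leaves $\|v_{i+1}\|_1=\alpha_{i+1}\|v_i\|_1+(\1_{W_{i+1}}-\alpha_{i+1}\1)^\top M_iy_i$. Cauchy--Schwarz, together with $\|\1_{W_{i+1}}-\alpha_{i+1}\1\|_2=\sqrt{n_{i+1}\alpha_{i+1}(1-\alpha_{i+1})}$ and the spectral bound $\|M_iy_i\|_2\le\frac{\lambda^{(i)}}{d_{2i-1}}\|y_i\|_2$, then yields $\bigl|\,\|v_{i+1}\|_1-\alpha_{i+1}\|v_i\|_1\,\bigr|\le\frac{\lambda^{(i)}}{d_{2i-1}}\sqrt{n_{i+1}\alpha_{i+1}(1-\alpha_{i+1})}\,\|y_i\|_2$. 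The spectral bound is valid because $y_i$ is orthogonal to the top right singular vector $\1_{n_i}$ of $B_i$, and the second eigenvalue $\lambda^{(i)}$ of the biregular graph $G_i$ is precisely the second singular value $\sigma_2(B_i)$.

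The step I expect to be the crux is controlling $\|y_i\|_2$, because, unlike the regular case, $M_i$ is not an $\ell_2$-contraction: since $\sigma_1(B_i)=\sqrt{d_{2i-1}d_{2i}}$ we have $\|M_i\|=\sqrt{d_{2i}/d_{2i-1}}$, which may exceed $1$. The remedy is to track the \emph{weighted} quantity $\sqrt{n_i}\,\|v_i\|_2$ rather than $\|v_i\|_2$. Using the balance condition again, $\|M_i\|=\sqrt{d_{2i}/d_{2i-1}}=\sqrt{n_i/n_{i+1}}$, so $\|v_{i+1}\|_2\le\|M_iv_i\|_2\le\sqrt{n_i/n_{i+1}}\,\|v_i\|_2$, i.e. $\sqrt{n_{i+1}}\,\|v_{i+1}\|_2\le\sqrt{n_i}\,\|v_i\|_2$ is non-increasing (the projection $P_{i+1}$ only decreases $\ell_2$ norm). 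Since $\sqrt{n_1}\,\|v_1\|_2=\sqrt{\alpha_1}$, this yields $\|y_i\|_2\le\|v_i\|_2\le\sqrt{\alpha_1/n_i}$ for every $i$, and at the first step the sharper identity $\|y_1\|_2=\sqrt{\alpha_1(1-\alpha_1)/n_1}$.

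Finally I would substitute these into the per-step estimate. For $i=1$, using $\sqrt{n_2/n_1}=\sqrt{d_1/d_2}$, the factors collapse to $\frac{\lambda^{(1)}}{\sqrt{d_1d_2}}\sqrt{\alpha_1(1-\alpha_1)\alpha_2(1-\alpha_2)}$; for $i\ge2$, using $\sqrt{n_{i+1}/n_i}=\sqrt{d_{2i-1}/d_{2i}}$, they collapse to $\frac{\lambda^{(i)}}{\sqrt{d_{2i-1}d_{2i}}}\sqrt{\alpha_1\alpha_{i+1}(1-\alpha_{i+1})}$. Telescoping with coefficients $\prod_{j=i+2}^t\alpha_j$ and applying the triangle inequality, exactly as in \eqref{eq:telescope}--\eqref{eq:vt}, produces the stated middle bound, and the final bound follows from $\sqrt{x(1-x)}\le\frac12$ and $\alpha_j\le1$.
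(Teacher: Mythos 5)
Your proposal is correct and follows essentially the same route as the paper's proof in Appendix~\ref{sec:appendix_mixbipartite}: the transition operators $M_i=\frac{1}{d_{2i-1}}B_i$, the per-step estimate (the paper's Lemma~\ref{lem:bipartite1}, with the same four-term decomposition and singular-value bound), and the telescoping with coefficients $\prod_{j}\alpha_j$. Your one stylistic difference --- tracking the invariant $\sqrt{n_i}\,\|v_i\|_2$ to handle the non-contractivity of $M_i$ --- is computationally identical to the paper's product bound $\|v_i\|_2\leq \frac{\sqrt{\alpha_1}}{\sqrt{n_1}}\prod_{k=1}^{i-1}\sqrt{d_{2k}/d_{2k-1}}=\sqrt{\alpha_1/n_i}$ via the balance condition $n_id_{2i-1}=n_{i+1}d_{2i}$, and you correctly identified this as the crux where the biregular case departs from the regular one.
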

When $t=2$, Theorem \ref{thm:mixingbipartite} reduces to  the expander mixing lemma for bipartite biregular graphs  \cite{haemers1995interlacing,de2012large}. We present the proof of Theorem \ref{thm:mixingbipartite}  in Appendix ~\ref{sec:appendix_mixbipartite}.



\section{Tensor complexity}\label{sec:property}

In order to complete a partially observed matrix or tensor,
some kind of prior knowledge of its structure is required.
The tensor that is output by the learning algorithm 
will then be the least complex one that is consistent with 
the observations.
Consistency may be defined as either exactly matching the observed
entries---in the case of zero noise---or 
being close to them under some loss---in 
the case where the observations are corrupted.
We now argue for the use of the tensor 
max-quasinorm (see Definition~\ref{def:max-qnorm} below) as a measure of complexity.
Towards this aim, we also show a number of previously unknown
properties about the max-quasinorm.

For matrices, 
the most common measure of complexity is the rank.
In the tensor setting, 
there are various definitions of rank 
\citep{kolda2009tensor}.
However, in this paper 
we will work with the rank defined via the
CP decomposition as
\begin{equation}
    \mathrm{rank}(T) = 
    \min
    \left\{ r \; \Big| \; T = \sum_{i=1}^r u^{(1)}_i \circ \cdots \circ u^{(t)}_i
    \right\} ,
    \label{eq:tensor_rank}
\end{equation}
where each  vectors $u_i^{(j)} \in \R^n$.
Note that the decomposition above is atomic and equivalent
to the decomposition used to define matrix rank when $t=2$.
The sum is composed of $r$
rank-1 tensors expressed as the outer products 
$u_i^{(1)} \circ \cdots \circ u_i^{(t)}$.
Our analysis uses Kronecker and Hadamard
products of tensors. 
These are generalizations of the usual
definitions for matrices
in the obvious way.
\begin{definition}
    Let $T \in \bigotimes_{i=1}^t \R^{n_i}$
    and $S \in \bigotimes_{i=1}^t \R^{m_i}$.
    We define the {\bf Kronecker product}
    of two tensors
    $(T \otimes S) \in \bigotimes_{i=1}^t \R^{n_i m_i}$
    as the tensor with entries
    \[
    (T \otimes S)_{k_1, \ldots, k_t}
    =
    T_{i_1, \ldots, i_t}
    S_{j_1, \ldots, j_t}
    \quad \text{for}\quad 
    k_1 = j_1 + m_1 (i_1 - 1) , \ldots, 
    k_t = j_t + m_t (i_t - 1).\]
\end{definition}

\begin{definition}
    Let $T \in \bigotimes_{i=1}^t \R^{n_i}$
    and $S \in \bigotimes_{i=1}^t \R^{n_i}$.
    We define the {\bf Hadamard product}
    of two tensors
    $(T *S) \in \bigotimes_{i=1}^t \R^{n_i}$
    as the tensor with indices
    $
    (T *S)_{i_1, \ldots, i_t}
    =
    T_{i_1, \ldots, i_t}
    S_{i_1, \ldots, i_t} 
    $.
\end{definition}

\subsection{Max-qnorm}

The max-norm of a matrix 
(also called $\gamma_2$-norm)
is a common relaxation of rank.
It was originally proposed in the theory of Banach spaces
\citep{tomczak-jaegermann1989},
but has found applications in communication complexity 
\citep{linial2007,lee2008,matousek2014}
and matrix completion
\citep{srebro2005rank,rennie2005fast,foygel2011,heiman2014deterministic,cai2016,foucart2017}.
For a matrix $A$, the max-norm of $A$ is defined as
\[
\| A \|_{\max} := 
\min_{U,V: A = U V^{\top}} \| U \|_{2,\infty} \| V \|_{2,\infty} .
\]
We can generalize its definition to tensors,
following \cite{ghadermarzy2018},
with the caveat that it then becomes a quasinorm
since the triangle inequality is not satisfied. 
\begin{definition}
\label{def:max-qnorm}
    Define the 
    {\bf max-quasinorm} (or {\bf max-qnorm})
    of an order-$t$ tensor $T\in \bigotimes_{i=1}^t \mathbb R^{n_i}$
    as
  \[
  \| T \|_{\max} = 
  \min_{T = U^{(1)} \circ \cdots \circ U^{(t)}}
  \prod_{i=1}^t \| U^{(i)} \|_{2,\infty}
  ,
  \quad  \text{where}\quad  \| U \|_{2,\infty} = \max_{\|x\|_2=1} \| U x\|_\infty,\]
  i.e.\
  the maximum $\ell_2$-norm
  of any row of $U$,
  and each of the $U^{(i)} \in \R^{n_i \times r}$
  for some $r$.
\end{definition}
 
 The following lemma provides some basic properties of the max-quasinorm for tensors.
\begin{lemma}[\cite{ghadermarzy2018}, Theorem 4]
\label{lem:max-qnorm}
  Let $t \geq 2$,
  then any two order-$t$ tensors $T$ and $S$
  of the same shape satisfy the following properties:
  \begin{enumerate}
  \item $ \|T \|_\mathrm{max} = 0$ if and only if ~$T = 0$.
      \item $\| c T \|_\mathrm{max} = |c| \| T \|_\mathrm{max}$, where $c \in \R$.
      \item $
        \| T + S \|_\mathrm{max} 
        \leq 
        \left( \| T \|_\mathrm{max}^{2/t} + \|S\|_\mathrm{max}^{2/t} \right)^{t/2}
        \leq
        2^{t/2 - 1}
        \left( \|T\|_\mathrm{max} + \|S\|_\mathrm{max} \right)$.
  \end{enumerate}
\end{lemma}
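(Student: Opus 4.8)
The plan is to prove all three properties directly from the definition, with the third being the only substantive one. The single tool I would establish first is the entrywise bound $|T|_\infty \le \|T\|_{\max}$, together with the observation that the objective $\prod_{i=1}^t \|U^{(i)}\|_{2,\infty}$ is invariant under rescaling $U^{(i)} \mapsto c_i U^{(i)}$ whenever $\prod_i c_i = 1$, since this leaves $T$ unchanged but multiplies the product by $\prod_i |c_i| = 1$. For the entrywise bound, I would fix a decomposition $T = U^{(1)} \circ \cdots \circ U^{(t)}$, write each entry as $T_{i_1,\dots,i_t} = \sum_{l} \prod_{k} U^{(k)}_{i_k,l}$, and bound it by $\prod_k \|u^{(k)}_{i_k}\|_2$, where $u^{(k)}_{i_k}$ denotes a row of $U^{(k)}$. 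This follows from Cauchy--Schwarz applied to the last factor against the Hadamard product of the first $t-1$ rows, using $\|a_1 * \cdots * a_{t-1}\|_2^2 = \sum_l \prod_{k<t} a_{k,l}^2 \le \prod_{k<t} \|a_k\|_2^2$ (expand the right-hand product and discard the nonnegative off-diagonal terms). Taking the minimum over decompositions then gives $|T|_\infty \le \|T\|_{\max}$.

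Properties (1) and (2) are then immediate. For (1), if $T = 0$ the trivial rank-$1$ decomposition with all factors zero gives objective $0$; conversely $\|T\|_{\max} = 0$ forces $|T|_\infty = 0$ by the entrywise bound, so $T = 0$. For (2) with $c \neq 0$, every decomposition of $T$ yields one of $cT$ by scaling a single factor, showing $\|cT\|_{\max} \le |c|\,\|T\|_{\max}$; applying this with $(c,T)$ replaced by $(1/c,\, cT)$ gives the reverse inequality, and the case $c = 0$ is covered by (1).

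For the first inequality in (3), I would take near-optimal decompositions $T = U^{(1)} \circ \cdots \circ U^{(t)}$ and $S = W^{(1)} \circ \cdots \circ W^{(t)}$ and, using the scaling invariance above, normalize them so that $\|U^{(i)}\|_{2,\infty} = \|T\|_{\max}^{1/t}$ and $\|W^{(i)}\|_{2,\infty} = \|S\|_{\max}^{1/t}$ for every $i$ (valid since $T, S \neq 0$, the zero cases being trivial by (1)). Forming $V^{(i)} = [\,U^{(i)} \mid W^{(i)}\,]$ by horizontally concatenating columns yields a valid decomposition $T + S = V^{(1)} \circ \cdots \circ V^{(t)}$, whose $i$-th factor has rows $[\,u^{(i)}_j \mid w^{(i)}_j\,]$ of squared norm $\|u^{(i)}_j\|_2^2 + \|w^{(i)}_j\|_2^2 \le \|T\|_{\max}^{2/t} + \|S\|_{\max}^{2/t}$. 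Hence $\|V^{(i)}\|_{2,\infty} \le (\|T\|_{\max}^{2/t} + \|S\|_{\max}^{2/t})^{1/2}$, and multiplying over the $t$ factors gives the claimed bound; an $\varepsilon \to 0$ argument removes the near-optimality slack. The second inequality is pure convexity: setting $p = t/2 \ge 1$, $x = \|T\|_{\max}^{2/t}$, and $y = \|S\|_{\max}^{2/t}$, the estimate $(x+y)^p \le 2^{p-1}(x^p + y^p)$ follows from convexity of $z \mapsto z^p$.

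The main obstacle is the first inequality of (3): the nonstandard exponent $2/t$ is exactly what makes the concatenated decomposition's factor norms combine as an $\ell_2$ sum, so the key insight is to equalize the per-factor $(2,\infty)$-norms by scaling \emph{before} concatenating, after which the bound drops out term by term. Everything else is routine.
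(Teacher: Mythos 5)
Your proposal is correct, and it reconstructs essentially the standard argument: the paper itself does not prove this lemma but cites it to \cite{ghadermarzy2018} (Theorem 4), whose proof likewise rests on balancing each factor's $(2,\infty)$-norm to the common value $\|T\|_{\max}^{1/t}$ via the scaling invariance and then concatenating columns, so that row norms add in $\ell_2$ and the exponent $2/t$ emerges. Your supporting steps --- the Cauchy--Schwarz/Hadamard bound giving $|T|_\infty \le \|T\|_{\max}$ for properties (1)--(2), the $\varepsilon$-slack handling, and the convexity estimate $(x+y)^{t/2}\le 2^{t/2-1}(x^{t/2}+y^{t/2})$ --- are all sound, so nothing is missing.
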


Note that property (3)
in Lemma~\ref{lem:max-qnorm}
implies that $\| \cdot \|_\mathrm{max}$
is a so-called 
$p$-norm
with $p=2/t$
and also a quasinorm with constant $2^{t/2-1}$
\citep{dilworth1985}.
Finally, in the matrix case it is a norm. 
As a matrix norm,
many properties and equivalent definitions of the max-norm are known,
and it can be computed via semidefinite programming
\citep{linial2007,lee2008,matousek2014}.
In the tensor case, much less is known about the max-qnorm.
We now prove generalizations of some of these
properties that hold for tensors
(proof in Section \ref{pf:max-qnorm-properties}).
\begin{theorem}
\label{thm:max-qnorm-properties}
    Let $T \in \bigotimes_{i=1}^t \R^{n_i}$
    and $S \in \bigotimes_{i=1}^t \R^{m_i}$.
The following max-qnorm properties hold:
\begin{enumerate}
    \item $ \| T_{I_1, \ldots, I_t} \|_\mathrm{max}
            \leq
            \| T \|_\mathrm{max}$
    for any subsets of indices
    $I_i \subseteq [n_i]$.
    \item $\| T \otimes S \|_\mathrm{max}
            \leq
            \| T \|_\mathrm{max} \| S \|_\mathrm{max}$. 
    \item $\| T *S \|_\mathrm{max} \leq \| T \otimes S \|_\mathrm{max}$,
    where $T,S\in  \bigotimes_{i=1}^t \R^{n_i}$.
\item $
\| T *T  \|_\mathrm{max} \leq \| T \|_\mathrm{max}^2.$
\end{enumerate}
\end{theorem}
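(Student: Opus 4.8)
The plan is to establish the four properties in the order stated, using (1) and (2) as the workhorses and deriving (3) and (4) as consequences. Throughout I fix the convention that a factorization $T = U^{(1)} \circ \cdots \circ U^{(t)}$ means $T_{i_1,\dots,i_t} = \sum_{l=1}^r \prod_{j=1}^t U^{(j)}_{i_j,l}$, and I write $U^{(j)}_{i,:}$ for the $i$-th row of $U^{(j)}$, so that $\|U^{(j)}\|_{2,\infty} = \max_i \|U^{(j)}_{i,:}\|_2$.

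For (1), I would take any factorization of $T$ and simply delete from each $U^{(j)}$ the rows not indexed by $I_j$. The restricted matrices $U^{(j)}_{I_j,:}$ form a valid factorization of the subtensor $T_{I_1,\dots,I_t}$, and since $\|\cdot\|_{2,\infty}$ is a maximum over rows, discarding rows can only decrease it. Taking the infimum over factorizations of $T$ then gives $\|T_{I_1,\dots,I_t}\|_{\max}\le\|T\|_{\max}$.

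For (2), the heart of the argument is to convert factorizations of $T$ and $S$ into one of $T\otimes S$. Given $T = U^{(1)}\circ\cdots\circ U^{(t)}$ (rank $r$) and $S = V^{(1)}\circ\cdots\circ V^{(t)}$ (rank $s$), I claim the matrix Kronecker products $W^{(j)} := U^{(j)}\otimes V^{(j)} \in \R^{n_jm_j \times rs}$ factor $T\otimes S$. This follows by expanding $(T\otimes S)_{k_1,\dots,k_t} = T_{i_1,\dots,i_t}S_{j_1,\dots,j_t}$ and distributing the two sums over $l \in [r]$ and $l'\in [s]$, matching column index $(l,l')$ of $W^{(j)}$. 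The crucial observation is that the row of $W^{(j)}$ indexed by $k_j = j_j + m_j(i_j-1)$ is exactly $U^{(j)}_{i_j,:}\otimes V^{(j)}_{j_j,:}$, so the multiplicativity $\|a\otimes b\|_2 = \|a\|_2\|b\|_2$ of vector Kronecker products yields $\|W^{(j)}\|_{2,\infty} = \|U^{(j)}\|_{2,\infty}\|V^{(j)}\|_{2,\infty}$. Multiplying over $j$ and taking infima over the two factorizations gives the submultiplicative bound. I expect this to be the main obstacle, since it requires correctly identifying the induced factorization and keeping the index bookkeeping of the Kronecker product straight; the rest of the theorem is essentially corollaries of this step.

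For (3), I would observe that when $T$ and $S$ share the shape $n_1\times\cdots\times n_t$, the Hadamard product $T*S$ is a subtensor of $T\otimes S$: restricting the $j$-th index of $T\otimes S$ to the diagonal set $I_j = \{\, i + n_j(i-1) : i \in [n_j]\,\}$ picks out precisely the entries $T_{i_1,\dots,i_t}S_{i_1,\dots,i_t} = (T*S)_{i_1,\dots,i_t}$. Hence $T*S = (T\otimes S)_{I_1,\dots,I_t}$, and property (1) immediately gives $\|T*S\|_{\max}\le\|T\otimes S\|_{\max}$. Finally (4) is immediate: applying (3) with $S=T$ and then (2) with $S=T$ gives $\|T*T\|_{\max}\le\|T\otimes T\|_{\max}\le\|T\|_{\max}^2$.
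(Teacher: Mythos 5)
Your proposal is correct and follows essentially the same route as the paper's proof: restricting rows of the factor matrices for (1), factoring $T\otimes S$ via the matrix Kronecker products $U^{(j)}\otimes V^{(j)}$ together with the identity $\|A\otimes B\|_{2,\infty}=\|A\|_{2,\infty}\|B\|_{2,\infty}$ for (2), realizing $T*S$ as the diagonal subtensor of $T\otimes S$ for (3), and chaining (2) and (3) for (4). Your row-level justification of the $\|\cdot\|_{2,\infty}$ multiplicativity (each row of $U^{(j)}\otimes V^{(j)}$ is a vector Kronecker product of rows) is just a cleaner phrasing of the paper's same computation.
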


For any matrix $A$,
there is a surprising relationship between 
max-norm and rank:
\begin{equation}
\label{eq:max-rank-matrix}
| A |_{\infty}
\leq
\| A \|_\mathrm{max} 
\leq 
\sqrt{\mathrm{rank}(A)} \cdot | A |_{\infty},
\end{equation}
which does not depend on the size of $A$.
We denote the entry-wise infinity-norm
$|A |_{\infty} = \max_{i,j} | A_{ij} |$,
and similarly
$|T|_\infty = \max_{i_1, \ldots, i_t} |T_{i_1,\ldots, i_t}|$
for tensors.
The proof of
\eqref{eq:max-rank-matrix}
is a result of John's theorem,
and is given in \cite{linial2007}.
For the tensor generalization, 
we prove the following theorem.

\begin{theorem}
\label{thm:max-rank}
Let $T \in \bigotimes_{i=1}^t \R^n$
with $\mathrm{rank}(T) = r$.
Then we have that
\[
|T|_\infty
\leq \| T \|_\mathrm{max} 
\leq 
\sqrt{r^{t^2-t-1}} 
\cdot
|T|_\infty .
\]
\end{theorem}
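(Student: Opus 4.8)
The plan is to prove the two inequalities separately; the left one is elementary, while the right one carries all the difficulty.

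For the lower bound $|T|_\infty \le \|T\|_{\max}$, I would fix any admissible factorization $T = U^{(1)}\circ\cdots\circ U^{(t)}$ and write a generic entry as $T_{i_1,\dots,i_t} = \sum_{l=1}^r \prod_{j=1}^t U^{(j)}_{i_j,l}$. Applying the generalized H\"older inequality with all exponents equal to $t$ (so that $\sum_{j=1}^t \tfrac1t = 1$) gives $|T_{i_1,\dots,i_t}| \le \prod_{j=1}^t \big(\sum_l |U^{(j)}_{i_j,l}|^t\big)^{1/t}$, a product of $\ell_t$-norms of the rows of the factor matrices. Since $t\ge 2$, monotonicity of $\ell_p$-norms gives $\|\cdot\|_{\ell_t}\le\|\cdot\|_{\ell_2}$, so each factor is at most $\|U^{(j)}\|_{2,\infty}$. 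Taking the maximum over $(i_1,\dots,i_t)$ and then the infimum over factorizations yields $|T|_\infty\le\|T\|_{\max}$.

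For the upper bound I would argue by induction on the order $t$, with the matrix estimate \eqref{eq:max-rank-matrix} as the base case $t=2$. By homogeneity (Lemma~\ref{lem:max-qnorm}, property (2)) I may normalize $|T|_\infty = 1$. Given an order-$t$ tensor of rank $r$, I would unfold it along the first mode into a matrix $B = T_{(1)} \in \R^{n\times n^{t-1}}$, with rows indexed by $i_1$ and columns by $(i_2,\dots,i_t)$. This matrix has $\mathrm{rank}(B)\le r$ and $|B|_\infty = 1$, so \eqref{eq:max-rank-matrix} yields, after rescaling, a balanced factorization $B = P Q^{\top}$ with $\|P\|_{2,\infty},\ \|Q\|_{2,\infty}\le r^{1/4}$. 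Reshaping the columns of $Q$ into order-$(t-1)$ tensors $Q^{(l)}$ expresses $T$ as $T_{i_1,\dots,i_t} = \sum_{l} P_{i_1,l}\,Q^{(l)}_{i_2,\dots,i_t}$, where each $Q^{(l)}$ has rank at most $r$. Crucially, the $\ell_2$-control on the rows of $Q$ bounds the slices jointly: $|Q^{(l)}|_\infty\le r^{1/4}$ and $\sum_l |Q^{(l)}_{i_2,\dots,i_t}|^2\le r^{1/2}$. The inductive hypothesis then controls each $\|Q^{(l)}\|_{\max}$.

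The hard part is recombining the factorization of $P$ with those of the $Q^{(l)}$ into a single CP factorization of $T$ in which all $t$ factor matrices share one column index, without the row norms exploding. A naive concatenation over pairs (mode-$1$ column $l$, inner column $s$) inflates the first factor's rows by the inner column count---the \emph{multiplicity} problem---and the quasinorm triangle inequality (Lemma~\ref{lem:max-qnorm}, property (3)) alone is too lossy when summing $r$ terms. This is precisely where the product estimates of Theorem~\ref{thm:max-qnorm-properties} enter: the Hadamard and Kronecker submultiplicativity (properties 2--4), combined with the joint bound $\sum_l |Q^{(l)}|^2\le r^{1/2}$ rather than a mere per-slice bound, let me absorb the mode-$1$ factor at a controlled cost instead of multiplying column counts. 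Propagating the two sources of loss at each level---a $\sqrt r$ from the matrix bound and the degradation of the entrywise bound to $r^{1/4}$---through the recursion produces the stated exponent $t^2-t-1$. I expect this recombination, together with the bookkeeping showing that the per-level cost accumulates to a factor $r^{t-1}$ (so that the $\sqrt r$ exponent increases by $2t-2$ as the order grows from $t-1$ to $t$), to be the principal obstacle; the base case and the lower bound are routine by comparison.
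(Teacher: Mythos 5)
Your lower bound is correct (the generalized H\"older argument works; the paper itself gets this direction differently, via matricization in Proposition~\ref{thm:improved}), and your upper-bound skeleton---induction on the order through the mode-$1$ unfolding plus the matrix bound \eqref{eq:max-rank-matrix}---is the same skeleton the paper uses. But two things go wrong in the details. First, you take the \emph{balanced} John's-theorem factorization $\|P\|_{2,\infty},\|Q\|_{2,\infty}\leq r^{1/4}$, which degrades the entrywise normalization of the slices to $|Q^{(l)}|_\infty\leq r^{1/4}$ at every level; the paper instead uses the \emph{asymmetric} factorization of Lemma~\ref{lem:roth} ($\|u_i\|_2\leq\sqrt r$ on the mode-$1$ side, $\|v_j\|_2\leq 1$ on the other), so that $|Q^{(l)}|_\infty\leq 1$ is preserved exactly and the inductive hypothesis applies verbatim at each level with no renormalization. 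Relatedly, your claim that each reshaped slice $Q^{(l)}$ has rank at most $r$ is asserted without justification: it holds because the factorization is obtained from a CP-derived factorization $T_{[1]}=V^{(1)}W^{\top}$ by an invertible $r\times r$ change of basis ($X=V^{(1)}S$, $Y=WS^{-1}$), so each column of $Y$ is a linear combination of the $r$ rank-one order-$(t-1)$ tensors forming the columns of $W$; for an arbitrary factorization achieving the $\ell_{2,\infty}$ bounds this needs an argument, and you give none.

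The more serious gap is the recombination step, which you correctly identify as the crux but do not carry out, and the tools you name for it will not perform it. Theorem~\ref{thm:max-qnorm-properties} (parts 2--4) bounds max-qnorms of Kronecker and Hadamard \emph{products}; it says nothing about a sum $T=\sum_{l=1}^{r}P_{:,l}\circ Q^{(l)}$ in which the weights $P_{i_1,l}$ depend on the first index and hence cannot be absorbed into the other modes, and summing per-slice max-qnorm bounds through the quasinorm inequality of Lemma~\ref{lem:max-qnorm} is, as you note, too lossy. The paper's actual mechanism is to \emph{strengthen the inductive statement}: Lemma~\ref{lem:gh23} does not induct on $\|\cdot\|_{\max}$ of the slices at all, but on joint column-sum-of-squares bounds over the whole collection of rank-one components, asymmetric across modes ($\sum_j (u^k_j(s))^2\leq r^{t-1}$ for modes $k\leq t-1$ and $\leq r^{t-2}$ for the last mode). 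With this invariant the multiplicity problem is handled by direct bookkeeping---each $x_i$ is repeated $r^{t-2}$ times in the first factor, giving row-squared sums $r^{t-2}\sum_i x_i(s)^2\leq r^{t-1}$---and the final bound is the product $\bigl(r^{(t-1)/2}\bigr)^{t-1}\cdot r^{(t-2)/2}=r^{(t^2-t-1)/2}$. Your target arithmetic (exponent of $\sqrt r$ growing by $2t-2$ per level) is right, but without replacing the per-slice max-qnorm hypothesis by such an aggregated $\ell_2$ invariant, the induction does not close, so the proof as proposed has a genuine hole at exactly the step you flagged.
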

The proof of Theorem \ref{thm:max-rank} is included in Section \ref{sec:proof_max_rank_property}.
This improves  Theorem 7(ii) in \cite{ghadermarzy2018} by a factor of $\sqrt{r}$. If we take $t=2$ we get that
$\| A \|_\mathrm{max} \leq \sqrt{\mathrm{rank}(A)} \cdot 
| A |_{\infty}$,
which is the same as  \eqref{eq:max-rank-matrix}.
It remains an open question whether a better bound exists
for all $t \geq 2$  in terms of the dependence on $t$.
The numerical experiments of
\cite{ghadermarzy2018},
which used a bisection method to estimate
the max-qnorm of tensors of known rank,
suggest that an improvement is possible.
They study tensors formed from random factors,
finding that increasing $t$ by one leads
to approximately $\sqrt{r}$ increase.
This suggests the conjecture that perhaps
$
\|T \|_\mathrm{max} 
\leq 
\sqrt{r^{t-1}} 
\cdot |T|_\infty
$
is the optimal bound.
Further support for this scaling with $r$ and $t$
comes from incoherent tensors.
Our definition is inspired by 
but slightly different from that used by
\citet{barak2016noisy}
but reduces to the usual matrix incoherence condition.

\begin{definition}
A rank $r$ tensor $T$ is said to be {\bf $C$-incoherent}
if there exists a rank $r$ factorization
$T = U^{(1)} \circ \cdots \circ U^{(t)}$
such that
$|U^{(i)}|_\infty \leq C$
for $i \in [t]$.
\end{definition}

\begin{prop}\label{prop:incoherent}
Let $T$ be a $C$-incoherent tensor.
Then $\|T \|_\mathrm{max} \leq C^t \sqrt{r^t}$,
and $|T|_\infty \leq C^t$.
\end{prop}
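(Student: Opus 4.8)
The plan is to prove both inequalities directly from the definition of $C$-incoherence, with no need for the deeper max-qnorm machinery developed earlier. By hypothesis there is a single CP factorization $T = U^{(1)} \circ \cdots \circ U^{(t)}$ with each $U^{(i)} \in \R^{n \times r}$ and $|U^{(i)}|_\infty \le C$, and I would obtain both bounds by tracking how this one good factorization behaves. Since the max-qnorm is an infimum over factorizations while the entrywise norm is read off from any factorization, plugging in this incoherent witness suffices for the upper bound, and evaluating an entry through it suffices for the $\ell_\infty$ bound.

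For the max-qnorm, I would feed this factorization into the definition $\|T\|_{\max} = \min_{T = U^{(1)}\circ\cdots\circ U^{(t)}} \prod_{i=1}^t \|U^{(i)}\|_{2,\infty}$. The key observation is the one recorded in Definition~\ref{def:max-qnorm}: $\|U^{(i)}\|_{2,\infty}$ is exactly the largest $\ell_2$-norm of a row of $U^{(i)}$. Each such row lies in $\R^r$ and has all entries bounded by $C$ in absolute value, so its $\ell_2$-norm is at most $\sqrt{r}\,C$, giving $\|U^{(i)}\|_{2,\infty} \le C\sqrt{r}$. As $\|T\|_{\max}$ is the minimum over all factorizations, evaluating at the incoherent one yields $\|T\|_{\max} \le \prod_{i=1}^t \|U^{(i)}\|_{2,\infty} \le (C\sqrt{r})^t = C^t\sqrt{r^t}$, which is the first claim.

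For the entrywise bound I would expand an arbitrary entry through the CP decomposition, $T_{i_1,\dots,i_t} = \sum_{l=1}^r \prod_{j=1}^t U^{(j)}_{i_j,l}$. Each rank-one contribution is a product of $t$ numbers of absolute value at most $C$, hence bounded by $C^t$, and bounding the sum over the index $l$ by the triangle inequality gives the entrywise control recorded in the statement. I do not expect a genuine obstacle in either part, since both reduce to the definitions. The one place demanding care is the conversion in the second paragraph between the entrywise control $|U^{(i)}|_\infty \le C$ and the mixed norm $\|U^{(i)}\|_{2,\infty}$: one must use that $\|\cdot\|_{2,\infty}$ measures row $\ell_2$-norms rather than entries, which is precisely what injects the factor $\sqrt{r}$ per mode and hence the $\sqrt{r^t}$ overall. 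This is exactly the step that makes the bound sharper than what one would get by routing through $|T|_\infty \le \|T\|_{\max}$ from Theorem~\ref{thm:max-rank}.
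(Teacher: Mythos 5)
Your argument for the max-qnorm bound coincides exactly with the paper's proof: both take the incoherent factorization as the witness in the minimization defining $\|\cdot\|_{\max}$ and use the elementary estimate $\|U^{(i)}\|_{2,\infty}\le \sqrt{r}\,|U^{(i)}|_\infty\le C\sqrt{r}$ for rows of length $r$, giving $\|T\|_{\max}\le (C\sqrt{r})^t = C^t\sqrt{r^t}$. That part is correct and there is nothing to add.

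The entrywise claim is where your proposal has a genuine gap. Expanding $T_{i_1,\dots,i_t}=\sum_{l=1}^r\prod_{j=1}^t U^{(j)}_{i_j,l}$ and applying the triangle inequality bounds each entry by $r\,C^t$, not $C^t$: the sum has $r$ terms, each of absolute value at most $C^t$, and nothing in your argument removes the factor of $r$. Indeed, no argument can remove it at this level of generality: for $t=2$, take $U^{(1)}=U^{(2)}\in\mathbb{R}^{n\times r}$ with first row $(C,\dots,C)$ and the remaining rows filled with small generic entries so that the factors have rank $r$; then $T=U^{(1)}\circ U^{(2)}$ is $C$-incoherent while $T_{1,1}=rC^2>C^2$. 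So the sentence in your second paragraph asserting that the triangle inequality ``gives the entrywise control recorded in the statement'' is false as written, and your route only yields the weaker bound $|T|_\infty\le rC^t$. For what it is worth, the paper's own proof of this proposition establishes only the max-qnorm bound and is entirely silent on the $\ell_\infty$ claim, so there is no argument in the paper that closes this step either; but as submitted, your derivation of $|T|_\infty\le C^t$ does not go through, and either the constant must be corrected to $rC^t$ or an additional hypothesis is needed.
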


The proof of Proposition \ref{prop:incoherent} is included in Section \ref{sec:proof_of_incoherent}.
In any case, 
Theorem~\ref{thm:max-rank} is still useful
for low rank tensor completion,
as it implies that an upper bound 
on the generalization error in terms of the max-qnorm
can be translated into a bound that 
depends on the rank.
That upper bound does not depend on $n$, 
which is crucial for attaining sample complexity linear in $n$.

We have found an  improved lower bound on $\|T\|_{\max}$ via tensor matricization, sometimes called tensor unfolding or tensor flattening. The proof is in Section~\ref{pf:improved}.

\begin{definition}[\citet{kolda2009tensor}]
Let $T \in \bigotimes_{i=1}^t \R^{n_i}$. 
For $1\leq i\leq t$, the mode-$i$ \textbf{matricization} of $T$ 
is a matrix denoted by   
$T_{[i]} \in \R^{n_i} \times \R^{\prod_{j \neq i} n_j}$ 
such that for any index  $(j_1,\dots, j_t)$,
\[ \left(T_{[i]}\right)_{j_i,k}=T_{j_1,\dots, j_t},\]  
with 
$k=1+\sum_{s=1,s\not=i}^t (j_s-1)N_s$ 
and
$N_s=\prod_{m=1,m\not=i}^{s-1} n_m.$
\end{definition}

\begin{prop}\label{thm:improved}
Let $T \in \bigotimes_{i=1}^t \R^{n_i}$.
Then 
$\displaystyle
\| T \|_\mathrm{max} 
\geq 
\max_{1\leq i\leq t}\| T_{[i]} \|_\mathrm{max}
\geq 
\max_{i_1, \ldots, i_t} |T_{i_1,\ldots, i_t}|. 
$
\end{prop}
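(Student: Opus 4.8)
The plan is to establish the two inequalities separately, working from the right. The rightmost bound $\max_{1\le i\le t}\|T_{[i]}\|_\mathrm{max} \ge \max_{i_1,\dots,i_t}|T_{i_1,\dots,i_t}|$ reduces immediately to the matrix case. By the left-hand inequality in \eqref{eq:max-rank-matrix} applied to the matrix $T_{[i]}$, we have $|T_{[i]}|_\infty \le \|T_{[i]}\|_\mathrm{max}$, and since matricization only rearranges the entries of $T$ without changing their values, $|T_{[i]}|_\infty = |T|_\infty = \max_{i_1,\dots,i_t}|T_{i_1,\dots,i_t}|$. As this holds for every $i$, it holds for the maximum over $i$.

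For the main inequality $\|T\|_\mathrm{max} \ge \|T_{[i]}\|_\mathrm{max}$, I would fix $i$ and take a CP decomposition $T = U^{(1)}\circ\cdots\circ U^{(t)}$ attaining the max-qnorm, so that $\prod_{m=1}^t \|U^{(m)}\|_{2,\infty} = \|T\|_\mathrm{max}$ with $U^{(m)}\in\R^{n_m\times r}$. The key observation is that this decomposition induces a rank-$r$ factorization of the matricization, namely $T_{[i]} = U^{(i)} B^\top$, where $B \in \R^{(\prod_{s\ne i} n_s)\times r}$ is the Khatri--Rao product of the factors $U^{(m)}$ with $m\ne i$; explicitly, the row of $B$ indexed by the tuple $(j_s)_{s\ne i}$ has $l$-th entry $\prod_{m\ne i} U^{(m)}_{j_m,l}$, so that $\sum_l U^{(i)}_{j_i,l}\prod_{m\ne i}U^{(m)}_{j_m,l} = T_{j_1,\dots,j_t}$ recovers $T_{[i]}$. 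Because $\|T_{[i]}\|_\mathrm{max}$ is by definition the minimum of $\|C\|_{2,\infty}\|D\|_{2,\infty}$ over all factorizations $T_{[i]} = CD^\top$, it suffices to bound $\|U^{(i)}\|_{2,\infty}\,\|B\|_{2,\infty}$.

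The crux is to control $\|B\|_{2,\infty}$, the largest $\ell_2$-norm of a row of $B$. Each such row is the entrywise (Hadamard) product of the corresponding rows $u^{(m)}_{j_m}$ of the matrices $U^{(m)}$, $m\ne i$. Here I would use the elementary bound $\|a_1 * \cdots * a_k\|_2 \le \prod_{j}\|a_j\|_2$ for the Hadamard product of vectors, which holds because $\|a_1*\cdots*a_k\|_2^2 = \sum_l \prod_j a_{j,l}^2$ is precisely the sum of the diagonal terms $l_1=\cdots=l_k=l$ of the nonnegative expansion $\prod_j\|a_j\|_2^2 = \sum_{l_1,\dots,l_k}\prod_j a_{j,l_j}^2$. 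Applying this with $a_m = u^{(m)}_{j_m}$ gives $\|B\|_{2,\infty} \le \prod_{m\ne i}\|U^{(m)}\|_{2,\infty}$, and hence $\|T_{[i]}\|_\mathrm{max} \le \|U^{(i)}\|_{2,\infty}\prod_{m\ne i}\|U^{(m)}\|_{2,\infty} = \|T\|_\mathrm{max}$. Maximizing over $i$ finishes the argument.

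The only genuine content is the Hadamard-product row-norm inequality together with correctly identifying the matricization $T_{[i]}$ with the Khatri--Rao factorization $U^{(i)} B^\top$; once these are in place the estimate is immediate. I expect the main nuisance to be the index bookkeeping in the matricization convention (the offsets $N_s$), but this does not affect the bound, since only the $\ell_2$-norms of the rows of $B$ enter and these are invariant under the reordering of columns induced by the Kronecker structure.
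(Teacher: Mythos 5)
Your proof is correct and takes essentially the same route as the paper's: both arguments start from the max-qnorm-attaining CP decomposition, identify the factorization $T_{[i]} = U^{(i)} B^{\top}$ with $B$ the Khatri--Rao product of the remaining factors, bound $\|B\|_{2,\infty} \le \prod_{m\ne i}\|U^{(m)}\|_{2,\infty}$ to get $\|T_{[i]}\|_\mathrm{max} \le \|T\|_\mathrm{max}$, and then obtain the second inequality from the matrix bound \eqref{eq:max-rank-matrix} together with the observation that matricization only rearranges entries. The sole (cosmetic) difference is that you bound the rows of $B$ directly via the Hadamard-product inequality $\|a_1 * \cdots * a_k\|_2 \le \prod_j \|a_j\|_2$, whereas the paper views $B$ as a column submatrix of the Kronecker product and invokes $\|A\otimes B\|_{2,\infty}=\|A\|_{2,\infty}\|B\|_{2,\infty}$ with monotonicity under column deletion --- the same diagonal-terms observation in different packaging.
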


\subsection{Sign tensors}

In order to connect expansion properties of the hypergraph $H$
to the error of our proposed tensor completion algorithm,
we will work with sign tensors. 
A sign tensor $S$ has all entries equal to $+1$ or $-1$, i.e.
$S \in \bigotimes_{i=1}^t \{ \pm 1\}^{n_i}$.
The sign rank of a sign tensor $S$ is defined as
\begin{equation}
    \mathrm{rank}_\pm (S) = 
    \inf
    \left\{ r \; \Big| \; 
    S = \sum_{i=1}^r s^{(1)}_i \circ \cdots \circ s^{(t)}_i,
    \;
    s_i^{(j)} \in \{ \pm 1 \}^{n_i}
    \mbox{ for $i \in [r]$ and $j \in [t]$}
    \right\}.
    \label{eq:sign_rank}
\end{equation}
Using rank-1 sign tensors as our atoms,
we can construct a nuclear norm \citep{ghadermarzy2018}:
\begin{definition}
We define the {\bf sign nuclear norm} for a tensor $T$ as
\begin{equation}
    \| T \|_\pm = 
    \inf \left\{ \sum_{i=1}^r |\alpha_i|
    \; \Big| \; T = \sum_{i=1}^r \alpha_i S_i
    \mbox{ where }
    \alpha_i \in \R, \,
    \mathrm{rank}_\pm (S_i) = 1 
    \right\} .
\end{equation}
\end{definition}
Note that the set of all rank-1 sign tensors forms a basis
for $\bigotimes_{i=1}^t \R^n$, 
so this decomposition into rank-1 sign tensors is always possible;
furthermore this is a norm for tensors and matrices
\citep{ghadermarzy2018,heiman2014deterministic}.
The sign nuclear norm is called
the ``atomic M-norm'' by \cite{ghadermarzy2018}
and the ``atomic norm'' by \cite{heiman2014deterministic}.

The next lemma which relates $\| \cdot \|_\pm$ and $\| \cdot \|_\mathrm{max}$ 
follows from a multilinear generalization of Grothendieck's inequality
(the proof is given  in Section~\ref{pf:grothendieck}).
We use $K_G$ to denote Grothendieck's constant over the reals.
For detailed background, see \cite{tomczak-jaegermann1989}.

\begin{lemma}
\label{lem:grothendieck}
The sign nuclear norm and max-qnorm satisfy
  $
  \| T \|_\pm \leq K_G^{t-1} \| T \|_\mathrm{max},
  $
  where $K_G$ is Grothendieck's constant over $\mathbb R$.
\end{lemma}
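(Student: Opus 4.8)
The plan is to reduce the statement to a multilinear form of Grothendieck's inequality and then iterate the classical (bilinear) inequality once for each of the first $t-1$ tensor modes, accumulating the factor $K_G^{t-1}$. First I would use homogeneity: since both sides scale linearly under $T\mapsto cT$ (by Lemma~\ref{lem:max-qnorm}(2) and the definition of $\|\cdot\|_\pm$), I may assume $\|T\|_\mathrm{max}=1$ and fix an optimal factorization $T=U^{(1)}\circ\cdots\circ U^{(t)}$ with $\prod_{k=1}^t\|U^{(k)}\|_{2,\infty}=1$; after rescaling the factors I can take each $\|U^{(k)}\|_{2,\infty}\le1$, so that writing $u^{(k)}_{i}\in\R^r$ for the $i$-th row of $U^{(k)}$ we have $\|u^{(k)}_i\|_2\le1$ and $T_{i_1,\dots,i_t}=\sum_{l=1}^r\prod_{k=1}^t(u^{(k)}_{i_k})_l$. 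The goal then becomes to exhibit a decomposition $T=\sum_m\alpha_m S_m$ into rank-$1$ sign tensors $S_m$ with $\sum_m|\alpha_m|\le K_G^{t-1}$, which is exactly the assertion $\|T\|_\pm\le K_G^{t-1}$.

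The cleanest route is dual. Since $\|\cdot\|_\pm$ is a genuine norm whose unit ball is the symmetric convex hull of the rank-$1$ sign tensors, the bipolar theorem gives $\|T\|_\pm=\sup\{\langle T,A\rangle:\ \|A\|_\pm^*\le1\}$, where $\|A\|_\pm^*=\max_{s^{(k)}\in\{\pm1\}^{n_k}}\langle A,\bigcirc_{k=1}^t s^{(k)}\rangle$ is the ``sign value'' of $A$ viewed as a $t$-linear form. Substituting the factorization of $T$, for any such $A$ one has $\langle T,A\rangle=\sum_{l=1}^r\sum_{i_1,\dots,i_t}A_{i_1,\dots,i_t}\prod_{k=1}^t(u^{(k)}_{i_k})_l$, which is the ``Hilbertian value'' of $A$ evaluated at the unit vectors coming from the factorization. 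The lemma therefore follows once I establish the multilinear Grothendieck inequality: for every tensor $A$, the Hilbertian value is at most $K_G^{t-1}$ times the sign value. I would prove this by induction on $t$, peeling one mode at a time; the base case $t=2$ is the classical Grothendieck inequality, and in the inductive step I round the vectors attached to a single mode to signs while keeping the remaining modes as Hilbert vectors, so that the order-$(t-1)$ inequality applies to what remains. The algebraic fact that makes the induction close is the Hadamard-product bound $\|u^{(1)}_{i_1}*\cdots*u^{(j)}_{i_j}\|_2\le1$ whenever each $\|u^{(k)}_{i_k}\|_2\le1$ (immediate from $\prod_{k=1}^j(u^{(k)}_{i_k})_l^2\le(u^{(1)}_{i_1})_l^2$), which guarantees that the vectors obtained by contracting the not-yet-rounded modes still lie in the unit ball, so the classical inequality applies at each stage with the correct normalization.

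The main obstacle is exactly this structure-preservation in the rounding. A single application of the bilinear Grothendieck inequality to the ``lumped'' multi-index $(i_1,\dots,i_{t-1})$ against mode $t$ would round the first $t-1$ modes into an arbitrary order-$(t-1)$ sign tensor rather than a product $\bigcirc_{k<t}s^{(k)}$, which would replace the sign value on the right-hand side by a much larger entrywise $\ell_1$-type quantity and destroy the bound. Thus the rounding must act on one mode in isolation, keeping the Hilbert-space structure of all other modes intact so that the inductive hypothesis can genuinely be applied to the remainder; controlling this one-sided rounding and verifying that exactly $t-1$ factors of $K_G$ (and no worse constant) accumulate is the delicate point, and is where the multilinear generalization of Grothendieck's inequality (see \cite{tomczak-jaegermann1989}) is invoked. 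Taking the supremum over $A$ with $\|A\|_\pm^*\le1$ then yields $\langle T,A\rangle\le K_G^{t-1}\|A\|_\pm^*\le K_G^{t-1}$, hence $\|T\|_\pm\le K_G^{t-1}=K_G^{t-1}\|T\|_\mathrm{max}$, completing the proof.
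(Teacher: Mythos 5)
Your proof is correct and takes essentially the same route as the paper: by biduality you write $\| T \|_\pm = \sup\{\langle T, A\rangle : \|A\|_{\infty,1}\leq 1\}$ and observe that $\langle T, A\rangle$, computed from a unit-row optimal factorization of $T$, is exactly the Hilbertian value that the multilinear Grothendieck inequality (the paper's Theorem~\ref{thm:newG}, quoted from \cite{perez2006trace}) bounds by $K_G^{t-1}\|A\|_{\infty,1}$, which is the same reduction the paper packages slightly more circuitously through the dual and bidual of the max-qnorm and the ball inclusion of Lemma~\ref{lem:ball}. One caution: your inductive sketch of the multilinear inequality does not close as stated---rounding a single mode to signs at cost $K_G$ while keeping the remaining modes Hilbertian is not a direct consequence of the bilinear inequality (your Hadamard-product observation keeps the contracted vectors in the unit ball but does not produce the one-sided rounding, which is precisely the nontrivial content of the cited theorem)---but since you explicitly invoke the known multilinear result at exactly that point, as the paper does, the argument is complete.
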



\section{Tensor completion}\label{sec:completion}

\subsection{Proof of Theorem \ref{thm:main3}}

Consider a rank-1 sign tensor
$S = s_1 \circ \cdots \circ s_t$
with $s_j \in \{\pm 1\}^n$.
Let $J$ be the tensor of all ones and $S' = \frac{1}{2} (S + J)$,
so that $S'$ is shifted to be a tensor of zeros and ones. Then 
\begin{align}\label{eq:defS}
  S'_{i_1,\dots, i_t}=\begin{cases}
    1 & \text{if }  (s_1)_{i_1}\cdots (s_t)_{i_t}=1,\\
    0 & \text{if }   (s_1)_{i_1}\cdots (s_t)_{i_t}=-1.
  \end{cases}
\end{align}
Define the sets
\begin{align} \label{eq:defW}
  W_j := \{i \in [n] : (s_j)_i =- 1 \}.
\end{align}
Let  $\mathcal{S}_t$ is the set of even $t$-strings in $\{0,1\}^t$.
An even string has an even number of 1's in it, 
e.g.\ for $t=3$ we have 000, 110, 101, 011 as even strings. 
The number of these strings is 
$
| \mathcal{S}_t | =  2^{t-1}
$, 
so we can  enumerate all possible even $t$-strings from $1$ to $2^{t-1}$, 
denoted by $w_1,\dots, w_{2^{t-1}}\in \{0,1\}^t$. 
Now for all $1\leq i\leq t$
and $1\leq j\leq 2^{t-1}$, 
we define the sets $W_{i,j}$ by
\begin{align}
  W_{i,j}=\begin{cases}
    W_i, & \text{if } (w_j)_i=1, \\ 
    [n]\setminus W_i, & \text{if } (w_j)_i=0.
  \end{cases}
\end{align}
By considering the sign of entries in the components
of $S$,
we have the following decomposition for $S'$ as a sum of rank-$1$ tensors
(a derivation of \eqref{eq:sumsign} is provided in Appendix~\ref{pf:sumsign}):
\begin{align}\label{eq:sumsign}
  S' = \sum_{j=1}^{2^{t-1}} 
  1_{W_{1,j}}\circ \cdots \circ 1_{W_{t,j}} .
\end{align}
Now we consider the deviation in the sample mean from the 
mean over all entries in $S$:
  \begin{align*}
    \left| 
    \frac{1}{n^t} \sum_{e \in [n]^t} 
    S_{e} 
    - 
    \frac{1}{|E|} \sum_{e \in E} 
    S_{e} 
    \right|
    &= 
      \left| \frac{1}{n^t} \sum_{e\in [n]^t} (2 S'_e - 1) - 
      \frac{1}{|E|} \sum_{e \in E} (2 S'_e - 1) \right| \\
      &= 2 \left| \frac{1}{n^t} \sum_{e\in [n]^t} S'_e - 
      \frac{1}{|E|} \sum_{e \in E} S'_e \right| \\
    &= 2 
    \left| 
      \frac{\sum_{j=1}^{2^{t-1}} \prod_{i=1}^t |W_{i,j}| }{n^t} - 
      \frac{\sum_{j=1}^{2^{t-1}} e(W_{1,j}, \ldots, W_{t,j}) }{|E|}
    \right|\\
    &\leq 2 
      \sum_{j=1}^{2^{t-1}}
      \left| \frac{|W_{1,j}| \cdots |W_{t,j}|}{n^t} - 
      \frac{ e(W_{1,j}, \ldots, W_{t,j}) }{|E|} \right| .
  \end{align*}
 Applying \eqref{eq:Hmixinglemma} in Lemma \ref{lem:tensormixing} to the sets $W_{1,j}\subseteq V_1,\dots, W_{t,j}\subseteq V_t$ for each $1\leq j\leq 2^{t-1}$, we get that
  \begin{align}\label{eq:signbound1}
    \left| \frac{1}{n^t} \sum_{e \in [n]^t} S_e 
    - \frac{1}{|E|} \sum_{e \in E} S_e \right| &\leq 2 
      \sum_{j=1}^{2^{t-1}} \frac{\lambda_2(H)}{|E|} \sqrt{|W_{1,j}|\cdots |W_{t,j}|}  \notag\\
    &= 2\sum_{j=1}^{2^{t-1}} \frac{\lambda_2(H)n^{t/2}}{|E|}\sqrt{\frac{|W_{1,j}|\cdots |W_{t,j}|}{n^t}} \leq \frac{2^t n^{t/2} \lambda_2(H)}{|E|}. 
    \end{align}

  We now write the tensor 
  $T = \sum_i \alpha_i S_i$ 
  as a sum of rank-1 sign tensors $S_i$, with coefficients
  $\alpha_i \in \mathbb{R}$. 
  Let $\| \cdot \|_{\pm}$ be the tensor sign nuclear norm,
  i.e.\ $\|T \|_\pm = \sum_i |\alpha_i|$.
  Then  for a general tensor $T$, we can apply \eqref{eq:signbound1} to each $S_i$, and by triangle inequality we get
  \[
  \left| \frac{1}{n^t} \sum_{e \in [n]^t} T_e 
    - \frac{1}{|E|} \sum_{e \in E} T_e \right|
  \leq \frac{2^tn^{t/2}\lambda_2(H)}{|E|}
  \; \| T \|_\pm .
  \]
  This holds for any tensor $T$. Now we apply this inequality to
  the tensor of squared residuals
  $R:= (\hat{T} - T ) * (\hat{T} - T).$
  Since we solve for $\hat{T}$
  with equality constraints, we have that
  $R_e = 0$ for all $e \in E$.
  Thus,
  \begin{align}
  &\frac{1}{n^t}\|\hat{T}-T\|_F^2= 
  \left| \frac{1}{n^{t}} \sum_{e \in [n]^t} R_e \right| \notag\\
  \leq &  \frac{2^tn^{t/2}\lambda_2(H)}{|E|}
  \|R\|_\pm \leq  
 \frac{2^tn^{t/2}\lambda_2(H)}{|E|}
  K_G^{t-1} \;
  \|R\|_\textrm{max} & \mbox{(Lemma~\ref{lem:grothendieck})}
  \notag\\
  \leq  & 
  \frac{2^tn^{t/2}\lambda_2(H)}{|E|}
  K_G^{t-1} \;
  \|\hat{T} - T\|_\textrm{max}^2 
  & \mbox{(Theorem \ref{thm:max-qnorm-properties}, part 4)} \notag \\
  \leq  &
 \frac{2^{2t-2}n^{t/2}\lambda_2(H)}{|E|}
  K_G^{t-1}
  \left(\|\hat{T} \|_\mathrm{max} + \|T \|_\mathrm{max}\right)^2 .
  & \mbox{(Lemma~\ref{lem:max-qnorm})}
  \label{eq:maxinequality}
  \end{align}
  Since $\hat{T}$ is the output of our optimization routine
  and $T$ is feasible, 
  $\| \hat{T} \|_\mathrm{max} \leq \| T \|_\mathrm{max}$.
  This leads to the final result.

\subsection{Proof of Theorem \ref{thm:main}}
Consider a hypercubic tensor $T$ of order-$t$, i.e.\ 
$T \in \R^{n\times \cdots \times n} $.
We sample the entry $T_e$ whenever 
$e = (i_1,\dots, i_t)$ is a hyperedge in $H$ defined in Section \ref{sec:regular}.  
Then the sample size is $|E|=nd^{t-1}$.  
Consider a rank-1 sign tensor
  $S = s_1 \circ \cdots \circ s_t$
  with $s_j \in \{\pm 1\}^n$. Following the same steps in the proof of Theorem \ref{thm:main3}, we obtain
  \begin{align*}
    \left| 
    \frac{1}{n^t} \sum_{e \in [n]^t} 
    S_{e} 
    - 
    \frac{1}{|E|} \sum_{e \in E} 
    S_{e} 
    \right|
    &\leq 2 
      \sum_{j=1}^{2^{t-1}}
      \left| \frac{|W_{1,j}| \cdots |W_{t,j}|}{n^t} - 
      \frac{ e(W_{1,j}, \ldots, W_{t,j}) }{nd^{t-1}} \right| .
  \end{align*}
  Applying Theorem~\ref{thm:mixing} to the sets $W_{1,j}\subseteq V_1,\dots, W_{t,j}\subseteq V_t$ for each $1\leq j\leq 2^{t-1}$, we get that
  \begin{align}\label{eq:signbound}
    \left| \frac{1}{n^t} \sum_{e \in [n]^t} S_e 
    - \frac{1}{nd^{t-1}} \sum_{e \in E} S_e \right|
    &\leq 
      2\sum_{j=1}^{2^{t-1}} \frac{(2t-3)\lambda}{4d}=2^{t-2}(2t-3)\frac{\lambda}{d}.
    \end{align}

  We now write the tensor 
  $T = \sum_i \alpha_i S_i$ 
  as a sum of rank-1 sign tensors $S_i$, with coefficients
  $\alpha_i \in \mathbb{R}$. 
Define $R=(\hat{T}-T)*(\hat{T}-T)$.
 Following the same steps in the proof of Theorem \ref{thm:main3}, we have
  \begin{align*}
 \frac{1}{n^t}\|\hat{T}-T\|_F^2= \left| \frac{1}{n^{t}} \sum_{e \in [n]^t} R_e \right| 
  \leq 2^{t-2}(2t-3)\frac{\lambda}{d}
  K_G^{t-1}
  \left(\|\hat{T} \|_\mathrm{max} + \|T \|_\mathrm{max}\right)^2 .
  \end{align*}
  Since 
  $\| \hat{T} \|_\mathrm{max} \leq \| T \|_\mathrm{max}$,
  this leads to the final result with a constant  
  $C_t=2^{t} (2t-3)K_G^{t-1}$.

\subsection{Tensor completion with erroneous observations}


Now we turn to the case when our observations $Z$ 
of the original tensor $T$
are corrupted by errors $\nu$.
We will call this noise, but it can be anything, even chosen adversarially,
so long as it's bounded.
Let 
$Z\in \mathbb R^n\times  \cdots \times \mathbb R^n$ 
be the tensor we observe with 
$Z_{e} = 0$ 
if 
$e \not \in E$
and
$Z_e = T_e + \nu_e$
for 
$e \in E$. 
In this case, we study the solution to the following optimization problem:
\begin{align}
\label{eq:noisy_problem}
     \min_{X}\quad & \|X\|_{\max},   \\
    \text{subject to} 
    \quad  
    &
    \frac{1}{|E|}
    \sum_{ e \in E} 
    (X_e - Z_e)^2
    \leq \delta^2, \notag
\end{align}
for some $\delta>0$. 
The parameter $\delta$ is a bound on the root mean squared
error of the observations.
In a probabilistic setting,
we may pick this parameter so that the constraint
holds with sufficiently high probability.
We obtain the following corollary of Theorem \ref{thm:main}, 
with the proof in Appendix~\ref{pf:noisycompletion}.
\begin{corollary}\label{cor:noisycompletion} 
Let $E$ be the hyperedge set of $H$ defined in Section \ref{sec:construction}. Suppose we observe $Z_e = T_e + \nu_e$ 
for all $e \in E$ with bounded mean square error satisfying
\begin{align}
\label{cor:noisebound}
 \frac{1}{|E|}\sum_{e \in E} \nu_e^2\leq \delta^2.  
\end{align}
Then solving the optimization problem \eqref{eq:noisy_problem} will give us a solution $\hat{T}$ that satisfies
\begin{align}\label{eq:inrreducible_delta}
    \frac{1}{n^t}\|\hat{T}-T \|_F^2\leq 2^{t} (2t-3)K_G^{t-1}\| T \|_\mathrm{max}^2 \frac{\lambda}{d} +4\delta^2.
\end{align}
\end{corollary}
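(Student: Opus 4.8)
The plan is to reduce the noisy corollary to the noiseless bound of Theorem~\ref{thm:main} plus an explicit error term coming from the slack in the constraint. First I would observe that the true tensor $T$ is feasible for~\eqref{eq:noisy_problem}: since $Z_e = T_e + \nu_e$ on $E$, the constraint value at $X = T$ equals $\frac{1}{|E|}\sum_{e\in E}\nu_e^2$, which is at most $\delta^2$ by~\eqref{cor:noisebound}. Because $\hat T$ minimizes the max-qnorm over the feasible set and $T$ lies in it, we retain the crucial inequality $\|\hat T\|_{\max} \le \|T\|_{\max}$, exactly the estimate that drove the proof of Theorem~\ref{thm:main}.

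The one structural difference from the noiseless case is that the residual tensor $R = (\hat T - T)*(\hat T - T)$ no longer vanishes on $E$, so its sample average over $E$ is nonzero. I would therefore split the full average of $R$ as
\[
\frac{1}{n^t}\|\hat T - T\|_F^2 = \frac{1}{n^t}\sum_{e\in[n]^t} R_e = \left(\frac{1}{n^t}\sum_{e\in[n]^t} R_e - \frac{1}{|E|}\sum_{e\in E} R_e\right) + \frac{1}{|E|}\sum_{e\in E} R_e,
\]
which isolates a mixing gap and an on-support average.

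For the mixing gap I would run the identical chain as in the proof of Theorem~\ref{thm:main}: the expander mixing estimate~\eqref{eq:signbound}, summed over the $2^{t-1}$ even sign patterns and extended to a general tensor via its sign-nuclear decomposition, bounds this difference by $2^{t-2}(2t-3)\frac{\lambda}{d}\|R\|_\pm$; then Lemma~\ref{lem:grothendieck} gives $\|R\|_\pm \le K_G^{t-1}\|R\|_{\max}$, part~4 of Theorem~\ref{thm:max-qnorm-properties} gives $\|R\|_{\max} \le \|\hat T - T\|_{\max}^2$, and part~3 of Lemma~\ref{lem:max-qnorm} combined with $\|\hat T\|_{\max}\le\|T\|_{\max}$ converts this into a multiple of $\|T\|_{\max}^2$. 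Since $R_e$ enters this step only through its max-qnorm, which is insensitive to the values on $E$, the mixing gap is bounded by exactly the noiseless right-hand side $2^{t}(2t-3)K_G^{t-1}\|T\|_{\max}^2\frac{\lambda}{d}$ of Theorem~\ref{thm:main}.

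The main (and genuinely new) obstacle is the on-support average $\frac{1}{|E|}\sum_{e\in E} R_e = \frac{1}{|E|}\sum_{e\in E}(\hat T_e - T_e)^2$, which is zero in the noiseless proof but not here. I would write $\hat T_e - T_e = (\hat T_e - Z_e) - \nu_e$ on $E$ and apply $(a-b)^2 \le 2a^2 + 2b^2$ to get
\[
\frac{1}{|E|}\sum_{e\in E}(\hat T_e - T_e)^2 \le \frac{2}{|E|}\sum_{e\in E}(\hat T_e - Z_e)^2 + \frac{2}{|E|}\sum_{e\in E}\nu_e^2.
\]
The first sum is at most $2\delta^2$ because $\hat T$ is feasible for~\eqref{eq:noisy_problem}, and the second is at most $2\delta^2$ by~\eqref{cor:noisebound}, yielding $4\delta^2$. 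Adding the two pieces gives~\eqref{eq:inrreducible_delta}. The only delicate point is this factor-$2$ split, which transfers the two separate $\delta$-closeness facts---$\hat T$ to $Z$, and $Z$ to $T$---into a single bound on the $\hat T$-to-$T$ discrepancy over $E$; everything else is a verbatim reuse of the argument for Theorem~\ref{thm:main}.
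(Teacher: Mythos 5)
Your proof is correct and takes essentially the same approach as the paper's: you decompose $\frac{1}{n^t}\|\hat T - T\|_F^2$ into the off-support mixing gap (bounded verbatim as in Theorem~\ref{thm:main}, with feasibility of $T$ preserving $\|\hat T\|_{\max}\le\|T\|_{\max}$) plus the on-support average, which you bound by $4\delta^2$ using the two $\delta$-constraints. The only cosmetic difference is that you use the pointwise inequality $(a-b)^2\le 2a^2+2b^2$ where the paper applies the Frobenius triangle inequality $\|\mathcal P_E(\hat T - T)\|_F\le\|\mathcal P_E(\hat T - Z)\|_F+\|\mathcal P_E(Z - T)\|_F\le 2\delta\sqrt{|E|}$ and then squares; both yield the same $4\delta^2$ term.
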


\begin{remark}
\eqref{eq:inrreducible_delta} 
has an error with an irreducible term $O(\delta^2)$,
in contrast to results such as \cite{ghadermarzy2018}
where the error goes to zero even in the presence of noise.
In our case, this term cannot be overcome because the
errors do not have any of the nice properties of noise.
For example, all observations $Z$ 
could be shifted by an amount $\delta$,
which would bias the estimation of $T$ to $T + \delta J$.
\end{remark}

\section{Practical algorithm for finding minimum complexity tensors}
\label{sec:optimization}

In such tensor factorization problems, one usually
picks a rank $r$ and alternately
minimizes the objective function over the factors
$U^{(i)}$, making the
overall approach coordinate descent.
Note that the optimization over a single factor is
in fact a convex problem for which there exist polynomial time algorithms.
However, care must be taken in designing these
convex subproblems for efficiency.
We will relax the problem in a way to make it amenable
to proximal gradient descent methods.
This will require us to compute the proximal operator 
of $\|\cdot\|_{2,\infty}$
and the gradient of the smooth part of the 
modified objective.

\subsection{Relaxed algorithm}

We propose the following more practical relaxation of \eqref{eq:noisy_problem}.
Rather than deal with the constrained problem,
we instead optimize:
\begin{align}
\label{eq:relaxed}
    \min_{X, R} \quad
    & \|X \|_\mathrm{max}
    + \frac{\kappa}{2} \|\mask * (X - Z - R)\|_F^2
    + \frac{\beta}{2} \| \mask * R \|_F^2
    \\
    \text{subject to} 
    \quad  
    &
    \| \mask * R \|_F
    \leq \delta . \notag
\end{align}
The mask tensor $\mask$ has $M_e = 1$ if $e \in E$ and $M_e = 0$ otherwise.
This makes $\mask$ equivalent to the adjacency operator of the observation
hypergraph $H$, 
so the constraint is the same as in the original noisy problem \eqref{eq:noisy_problem}.
However, we have relaxed the problem by
introducing the auxiliary variable $R$.
Here we absorb $\sqrt{|E|}$ 
into the noise magnitude $\delta$,
to avoid clutter.
The relaxation parameter $\kappa$ is taken to be large
which keeps $R \approx X-Z$ 
within the observation mask;
outside the mask, we may assume $R=0$.
The parameter $\beta$ is a small
smoothing of the hard constraint
taken for technical reasons laid out in Appendix~\ref{sec:supp_alg}.
For all experiments, we take $\kappa = 100$ and $\beta = 1$.


The ability to compute gradients and evaluate
the prox of the $\ell_{2,\infty}$ norm are
all the ingredients needed to implement coordinate descent
for \eqref{eq:relaxed}.
The details of how these are computed
and a variable projection step to minimize out $R$ 
are provided in  Appendix~\ref{sec:supp_alg}.
At each coordinate descent step,
we apply the accelerated proximal gradient method to the cost.
The cost in terms of the coordinate blocks 
$\{ U^{(i)} \}_{i=1}^t$
is written as:
\begin{equation}
C = 
\frac{1}{2} \left(\kappa (1-\mu)^2 + \mu^2 \beta \right)
    \|\mask * (U^{(1)} \circ \cdots \circ U^{(t)} - Z)\|_F^2 
    +
    \prod_{i=1}^t \| U^{(i)} \|_{2,\infty},
    \label{eq:cost_mod}
\end{equation}
with $\mu = \mu(U^{(1)} \circ \cdots \circ U^{(t)})$ given by 
\eqref{eq:mu_defn}.
After the first iteration of coordinate descent,
we find that it is important to rescale the factors
to have equal norms across all columns.
This step does not affect the goodness of fit, 
but it can cause the max-qnorm penalty term to increase.
However, we have found it makes the algorithm more stable.
Not shown in \eqref{eq:cost_mod}, 
we also include a small squared Frobenius penalty 
$0.01 \cdot \|U\|_F^2$ on each factor for numerical stability.
Python code to implement this method and reproduce our experiments 
is available from
{\url{https://github.com/kharris/max-qnorm-tensor-completion}}.

\subsection{Numerical experiments}
\label{sec:experiments}

\begin{figure}[t]
    \centering
    \includegraphics[width=.8\linewidth,trim={20 0 20 0},
    clip]{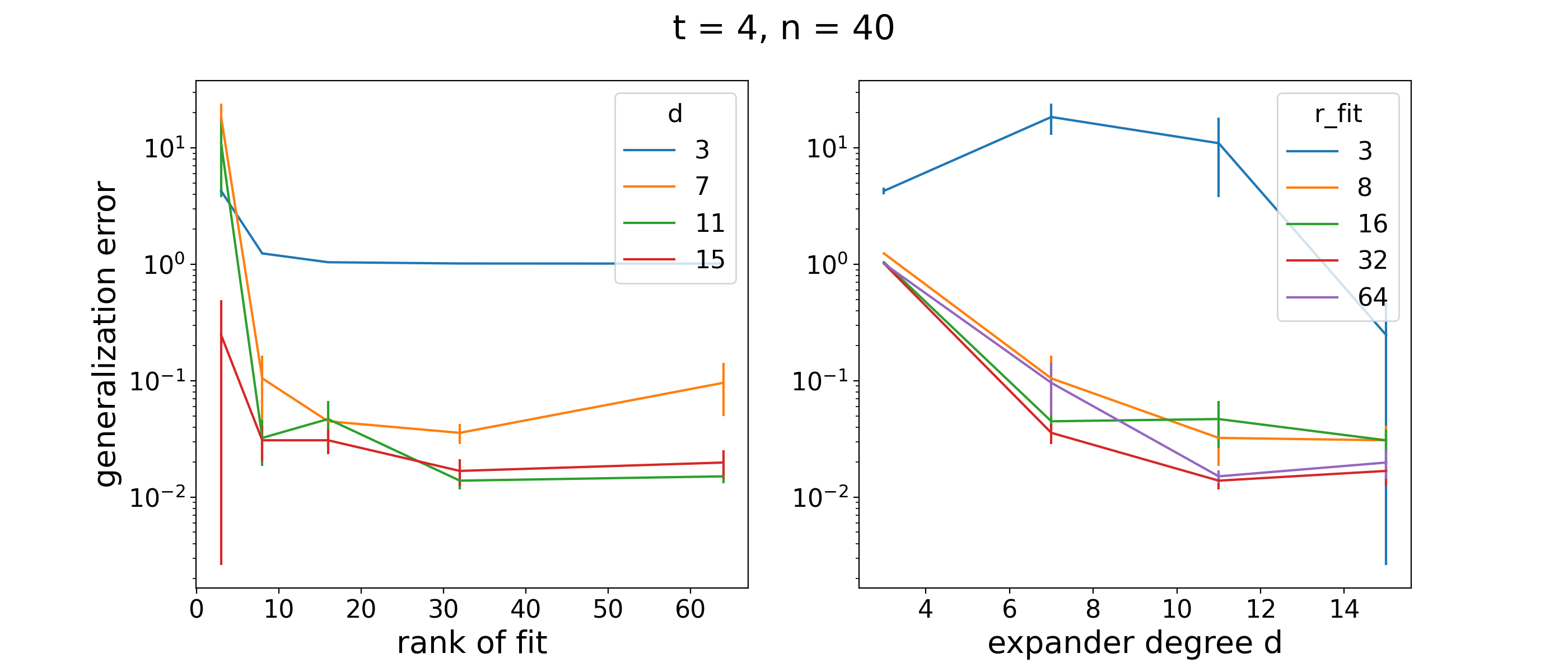}
    
    \includegraphics[width=.8\linewidth,trim={20 0 20 0},
    clip]{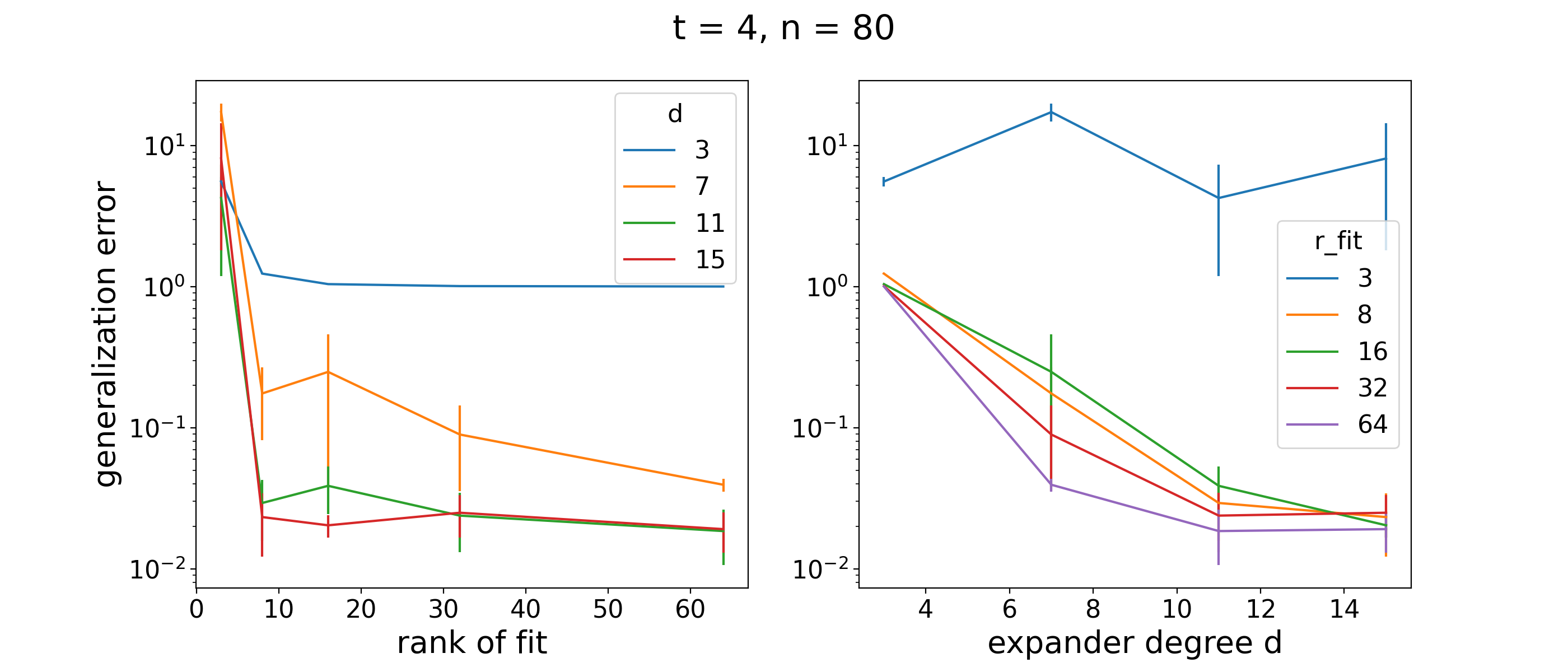}
    \caption{Relative error of reconstruction for $t=4$ and $n=40,80$.
        The results are average of 6 tensors with standard errors shown.}
    \label{fig:dim_4_results}
\end{figure}

\begin{figure}[t]
    \centering
    \includegraphics[width=.6\linewidth]{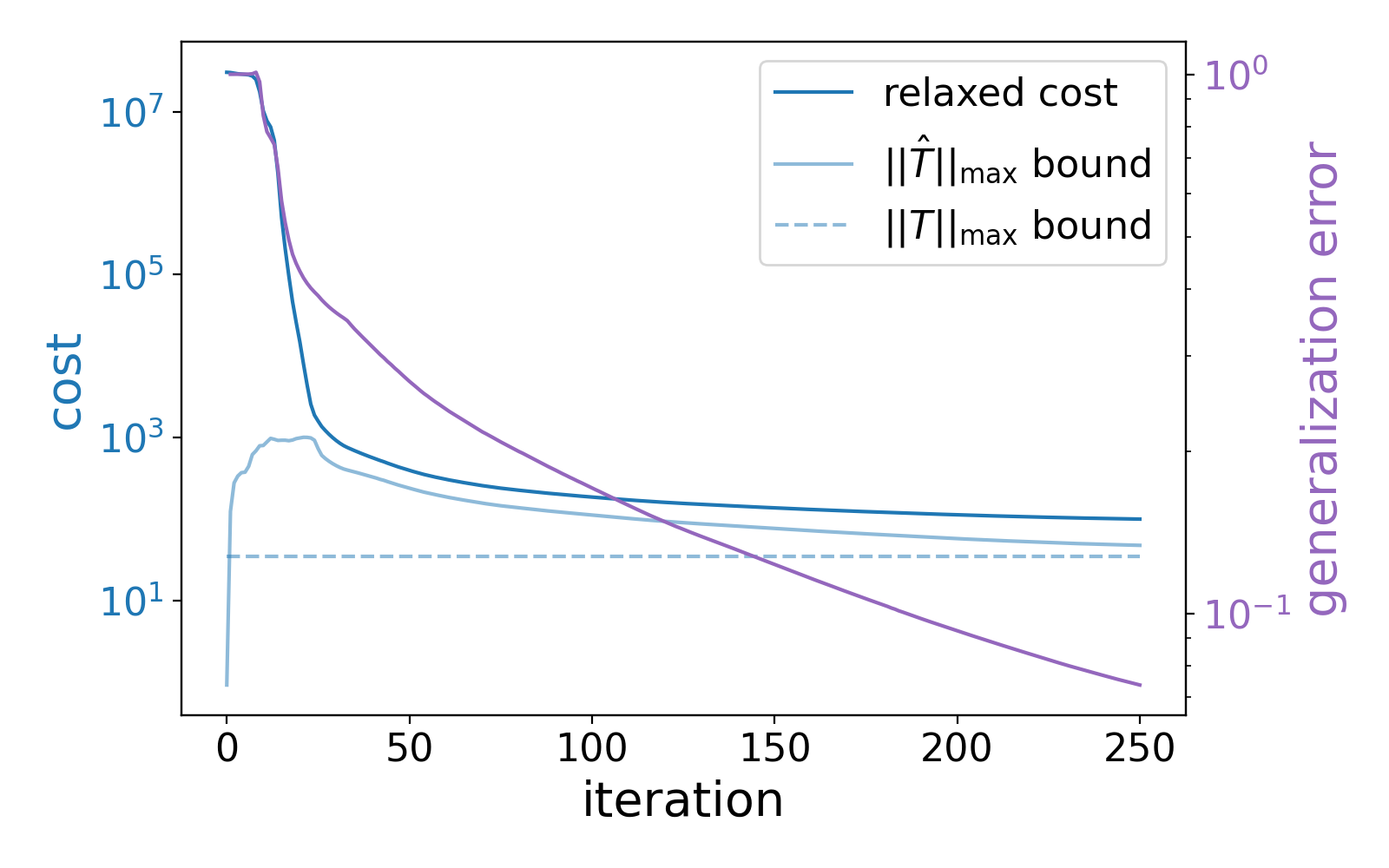}
    \caption{The behavior of the optimization algorithm by iteration for a test case with 
    $t=4, n=400, d=12, r_{\rm fit}=16$.
    Initially, the algorithm focuses on fitting the data, 
    causing the max-qnorm to increase until the 
    residuals are comparable to $\delta$, at which point the cost is mostly due to 
    the max-qnorm.
    In the final stage, the max-qnorm is decreased,
    leading to a decrease in the generalization error.
    Note that the max-qnorm lines are upper bounds,
    and the algorithm hasn't fully converged in 250 iterations.
    }
    \label{fig:iterations}
\end{figure}

We generate random rank $r=3$, order $t$ tensors
by drawing the entries of $U^{(i)}$ 
independently from the uniform distribution $\mathrm{Unif}[-1,1]$
for $i=1,\ldots, t$. 
The resulting tensor $T$ is rescaled to have $\| T\|_F = \sqrt{n d^t}$,
so that the root-mean-square of its entries is 1.
Entries are sampled using a random $d$-regular expander graph
as described in Section~\ref{sec:regular}, and no noise was added.
In principle, we could have used one of any number of deterministic
$d$-regular expander constructions, but these are more difficult
to implement for arbitrary $d$.
Simulations were run for parameter ranges 
$n \in \{ 20, 40, 80\}, 
t \in \{ 3, 4 \}, 
d \in \{ 3, 7, 11, 15 \}$.
We fit with tensors of rank 
$r_{\rm fit} \in \{ 3, 8, 16, 32, 64 \}$,
residual parameter $\delta = 0.05 \sqrt{|E|}$,
with $\kappa = 100$, $\beta = 1$.
We report generalization error, defined as 
\begin{equation*}
    \mbox{generalization error} := 
    \frac{\| \hat{T} - T \|_F}{\| T \|_F}.
\end{equation*}

The results for and $t=4$ are shown in Figure~\ref{fig:dim_4_results},
with a similar Figure~\ref{fig:dim_3_results}
for $t=3$ in the Appendix.
In either dimension, errors $<$10\% are achieved 
whenever $d$ and $r_{\rm fit}$ are sufficiently high.
The fraction of observed entries in this setting is 
$(d/n)^{t-1}$.
For sufficiently large $r_{\rm fit}$, we see error $<$10\% once
$d \geq 11$, or observing 
2\% ($n=40, t=3$), 0.6\% ($n=40, t=4$), 
0.3\% ($n=80, t=3$), and 0.04\% ($n=80, t=4$) 
of the total entries.

It is interesting to note that having $r_{\rm fit} \gg r$
helps the optimization algorithm find a better solution.
Furthermore, having a highly overparameterized model
does not appreciably hurt its generalization ability.
Once the rank of the fit is above a threshold around 10, the error 
stays essentially flat.
Overparametrizing the rank appears to help the optimization
find a good solution, while the max-qnorm penalty 
controls the complexity of this solution to prevent overfitting.

When we compare 
$\prod_{i=1}^t \|U^{(i)}\|_{2,\infty} $
and 
$\prod_{i=1}^t \|\hat U^{(i)}\|_{2,\infty} $,
which are upper bounds on the max-qnorm of the truth $T$ and 
estimate $\hat{T}$,
we find that the estimate generalizes well
when its upper bound is less than or equal to that of the truth.
This is evident in Figure~\ref{fig:iterations},
where we see how the cost and generalization error
change with iteration for $t=4, n=400, d=12, r_{\rm fit}=16$
and otherwise the same parameters as before.
This corresponds to a fraction $2.7\times 10^{-5}$
of the total entries observed.
The overall relaxed cost \eqref{eq:cost_mod}
is shown along with the max-qnorm upper bounds.
Initially, the cost is dominated by the
residual term. 
The residuals decrease while  
the bound on $\|\hat{T}\|_{\rm max}$ actually
increases.
Once the mean-square residuals are as small as $\delta^2$,
the cost is dominated by the max-qnorm complexity term
and the $\|\hat{T}\|_{\rm max}$ bound decreases.
These stages can be understood by the variable
projection parameter varying between
$\mu \approx 0$ and $\mu \approx 1$;
see Section~\ref{sec:var_proj}.
Throughout this process the generalization error
decreases, although it has not fully converged in 250 iterations.

Finally, we also have performed an experiment with
$n=1000,
t = 4,
d = 12,
r_{\rm fit} = 16$ and otherwise the same parameters as before.
After running for 271 iterations, we stopped the optimization 
and found generalization error of 3.4\%.
This is a very small fraction
$1.7\times10^{-6}$ 
of the total entries in $T$.
This and the $n=400$ experiment show that 
good recovery is possible even when $d \ll n$.

\section{Discussion}
\label{sec:discussion}

We have deterministically analyzed tensor completion
using the max-qnorm as a measure of complexity
and hypergraph sampling.
Our main results show that,
by finding the tensor with smallest max-qnorm
that is consistent with the observations,
one may obtain a good estimate of the true tensor.
The error of this estimate depends on the expansion
properties of the hypergraph model.
Auxiliary to this main result are a number of newly proven
facts about the max-qnorm which may be of interest
to specialists in communication complexity.

We show that proximal algorithms based on a relaxation of
constrained max-qnorm minimization
are practical to implement and provide code to reproduce our results.
Although our numerical study is mainly proof-of-concept,
the method does work in the problem sizes that were tested.
These were mainly small-scale ($n<100$) 
but promising results were found in a medium scale tests ($n=400,1000$)
with very small fractions of the 
total entries observed.
Sampling based on $d$-regular expander graphs,
as opposed to sampling tensor entries uniformly at random,
was successful in these experiments.

A number of theoretical and practical considerations still remain.
Theoretically, it would be nice to have other
constructions of sparse hypergraph expanders.
Completely deterministic constructions could be
useful for applications, e.g.\ for compressive sensing in hardware
where a single sensor reuses its ``observation mask'' over and over.

Finding the right complexity measures for tensors is still an open problem.
Here we study max-qnorm because it amenable to deterministic analysis,
however there are likely other measures that work better in some settings.
For example, if the factors are known to be smooth,
e.g.\ due to spatial autocorrelation in images,
then other regularization will be beneficial \cite{park2020a}.

The optimization problem is non-convex,
but our simulations show that good
solutions can be found when the fit uses an overparameterized rank.
It may be possible to explain this using the techniques of
\cite{haeffele2015global,liu2020toward}.

A thorough numerical comparison of the performance of max-qnorm
minimization versus simple Frobenius norm minimization
would be informative but outside the scope of this paper.
Simulations of larger size will be needed to see whether the
numerical results support the $O(n)$ sample complexity that
the theory predicts.
Furthermore, our experiments were with incoherent, random 
tensors which could make the problem easier.
It would also be interesting to try expander sampling with
real datasets.
It will be important to reconcile various conjectured
hardness results \cite{barak2016noisy}
with the practical success of these algorithms
in many settings.

\subsection*{Acknowledgements}

Thank you to Ioana Dumitriu for support and discussions. 
We are grateful to Paul Beame 
for discussions of communication complexity and to Adi Shraibman
for sharing details of the proof of the Hadamard product 
bound on the max-norm for matrices.
K.D.H.\ was supported by a Washington Research Foundation Postdoctoral Fellowship.
Y.Z.\ was partially supported by NSF DMS-1949617.

\bibliographystyle{plainnat}
\bibliography{ref.bib}

\appendix

\section{Supplemental proofs}

\label{sec:appendix}





\subsection{Proof of Lemma \ref{lem:tensormixing}} \label{sec:lemmix}

We write the adjacency tensor $T_H$ as $T$ for simplicity.
Let $\mathbf{1}_{W_i}$ be the indicator vector of the set $W_i\subset V_i$ such that the $j$-th entry of $\mathbf{1}_{W_i}$ is $1$ if $j\in W_i$ and $0$ if $j\not\in W_i$. For any vectors $u_1,\dots, u_t\in \mathbb R^n$, define
\begin{align*}
    T(u_1,\dots,u_t)=\sum_{i_1,\dots,i_t\in [n]} T_{i_1,\dots,i_t}u_1(i_1)\cdots u_t(i_t). 
\end{align*}
We then have 
\begin{align*}
   \frac{\left|e(W_1,\dots,W_t)-\frac{|E|}{n^t}|W_1|\cdots |W_t| \right|}{\sqrt{|W_1|\cdots |W_t|}} &=\frac{\left|T(\mathbf{1}_{W_1},\dots, \mathbf{1}_{W_t})-\frac{|E|}{n^t}J(\mathbf{1}_{W_1},\dots, \mathbf{1}_{W_t}) \right|}{\sqrt{|W_1|\cdots |W_t|}}\\
   &\leq \left\|T-\frac{|E|}{n^t}J\right\|= \lambda_2(H),
\end{align*}
where the last inequality is from the definition of the spectral norm in \eqref{eq:defspecT} and $\lambda_2(H)$ in \eqref{eq:lambda2H}.
This completes the proof.

\subsection{Proof of Theorem \ref{thm:main2}}\label{sec:proof_quasiregular_completion}
Consider a tensor $T$ such that  
$T \in \R^{n_1\times \cdots \times n_t} $.
We sample the entry $T_e$ whenever 
$e = (i_1,\dots, i_t)$ is a hyperedge in $H$ defined in Section \ref{sec:quasiregular}.  
Then the sample size is $|E|=n_1\prod_{i=1}^{t-1}d_{2i-1}$.  
Consider a rank-1 sign tensor
  $S = s_1 \circ \cdots \circ s_t$
  with $s_j \in \{\pm 1\}^{n_j}$. Following the same steps in the proof of Theorem \ref{thm:main3}, we obtain
  \begin{align*}
    \left| 
    \frac{1}{\prod_{i=1}^t n_i} \sum_{e \in \prod_{i=1}^t[n_i]} 
    S_{e} 
    - 
    \frac{1}{|E|} \sum_{e \in E} 
    S_{e} 
    \right|
    &\leq 2 
      \sum_{j=1}^{2^{t-1}}
      \left| \frac{|W_{1,j}| \cdots |W_{t,j}|}{\prod_{i=1}^t n_i} - 
      \frac{ e(W_{1,j}, \ldots, W_{t,j}) }{|E|} \right| .
  \end{align*}
  Applying Theorem~\ref{thm:mixingbipartite} to the sets $W_{1,j}\subseteq V_1,\dots, W_{t,j}\subseteq V_t$ for each $1\leq j\leq 2^{t-1}$, we get that
  \begin{align*}
    \left| \frac{1}{\prod_{i=1}^t n_i} \sum_{e \in \prod_{i=1}^t[n_i]} S_e 
    - \frac{1}{|E|} \sum_{e \in E} S_e \right|
    &\leq 2^t \left(\frac{\lambda^{(1)}}{4\sqrt{d_1d_2}}+\sum_{k=2}^{t-1}\frac{\lambda^{(k)}}{2\sqrt{d_{2k-1}d_{2k}}}\right)
    .
    \end{align*}
  We now write the tensor 
  $T = \sum_i \alpha_i S_i$ 
  as a sum of rank-1 sign tensors $S_i$, with coefficients
  $\alpha_i \in \mathbb{R}$. 
Define $R=(\hat{T}-T)*(\hat{T}-T)$.
 Following the same steps in the proof of Theorem \ref{thm:main3}, we have
  \begin{align*}
 \frac{1}{\prod_{i=1}^t n_i}\|\hat{T}-T\|_F^2&= \left| \frac{1}{\prod_{i=1}^t n_i} \sum_{e \in \prod_{i=1}^t [n_i]} R_e \right| \\
  &\leq 2^{t+1}K_G^{t-1} \left(\frac{\lambda^{(1)}}{2\sqrt{d_1d_2}}+\sum_{k=2}^{t-1}\frac{\lambda^{(k)}}{\sqrt{d_{2k-1}d_{2k}}}\right)
  \|T \|_\mathrm{max}^2 .
  \end{align*}
This completes the proof.

\subsection{Proof of Lemma \ref{lem:quasiregular}}
\label{sec:lemdegree}
Each vertex in $V_1$ has degree $d_1d_3\cdots d_{2t-3}$ by our construction. In general, for each vertex $w_i\in V_i$, the degree of $w_i$ is the number of walks $(v_1,\dots, v_{i-1},w_i, v_{i+1},\cdots,v_t)$ for all possible $v_1,\dots,v_{i-1},v_{i+1},\dots,v_t$. To form a walk of length $t$, there are $\prod_{k=1}^{i-1}d_{2k}$ many ways to choose $(v_1,\dots, v_{i-1})$ and $\prod_{k=i}^{t-1}d_{2k-1}$ many ways to choose $(v_{i+1},\cdots,v_t)$. Then Lemma \ref{lem:quasiregular} holds.

\subsection{Proof of Theorem \ref{thm:mixingbipartite}}\label{sec:appendix_mixbipartite}

We first prove the following lemma for random walks on bipartite biregular graphs.
\begin{lemma}\label{lem:SRW}
Suppose that $G=(V_1,V_2,E)$ is a $(d_1,d_2)$-biregular  bipartite graph on vertex sets $V_1=[n_1]$ and $V_2=[n_2]$.
Let $X_1$ be a vertex uniformly distributed on $V_1$, and let $X_2$
be given by walking from $X_1$ along an adjacent edge uniformly. Then $X_2$
is uniformly distributed on $V_2$.
\end{lemma}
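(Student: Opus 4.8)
The plan is to compute the law of $X_2$ directly by conditioning on the starting vertex $X_1$, and to invoke biregularity in two separate places: the degree $d_1$ of the starting vertex (which normalizes the uniform random step) and the degree $d_2$ of the target vertex (which counts how many starting vertices could have reached it). The edge-counting identity $n_1 d_1 = n_2 d_2 = |E|$ that biregularity forces will then collapse the target probability to $1/n_2$.

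Concretely, I would fix an arbitrary $v \in V_2$ and expand
\[
\prob(X_2 = v) = \sum_{u \in V_1} \prob(X_1 = u)\,\prob(X_2 = v \mid X_1 = u).
\]
Since $X_1$ is uniform on $V_1$, we have $\prob(X_1 = u) = 1/n_1$. Since every vertex of $V_1$ has degree exactly $d_1$, a uniformly chosen incident edge lands on any given neighbor with probability $1/d_1$, so $\prob(X_2 = v \mid X_1 = u) = \frac{1}{d_1}\1\{uv \in E\}$. The sum therefore restricts to those $u \in V_1$ adjacent to $v$, whose number is the degree of $v$, which equals $d_2$ by biregularity, giving $\prob(X_2 = v) = d_2/(n_1 d_1)$.

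Finally I would double-count the edges from each side to get $|E| = n_1 d_1 = n_2 d_2$, whence $d_2/(n_1 d_1) = 1/n_2$, independent of the choice of $v$. This is exactly the assertion that $X_2$ is uniform on $V_2$.

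I do not expect any genuine obstacle here: the statement is elementary and the proof is a one-step computation. The only point requiring care is to keep track of where each half of the biregularity hypothesis enters — $d_1$ in the normalization of the random step and $d_2$ in the neighbor count of the target — together with the edge-count identity that ties the two sides together. This lemma is presumably a warm-up for an analogous uniformity/stationarity statement used in the proof of Theorem~\ref{thm:mixingbipartite}, so the same conditioning argument should be the template for the more general quasi-regular walk.
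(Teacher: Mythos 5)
Your proposal is correct and coincides with the paper's proof: the paper conditions on $X_1$, uses $\prob(X_2=v_2\mid X_1=v_1)=\mathbf{1}\{(v_1,v_2)\in E\}/d_1$, counts the $d_2$ neighbors of the target, and concludes $\prob(X_2=v_2)=d_2/(n_1 d_1)=1/n_2$ via the same edge-count identity. Nothing is missing.
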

\begin{proof}
Since every vertex in $V_1$ has degree $d_1$, for any $v_1\in [n_1], v_2\in [n_2]$,
\begin{align*}
    \mathbb P(X_2=v_2\mid X_1=v_1)=\frac{\mathbf{1}\{(v_1,v_2)\in E\}}{d_1}.
\end{align*}
Then 
\begin{align*}
    \mathbb P(X_2=v_2) &= 
   \sum_{v_1\in [n_1]}\mathbb P(X_2=v_2\mid X_1=v_1)\mathbb P(X_1=v_1) \\
    &=\frac{1}{n_1}\sum_{v_1\in [n_1]}\frac{\mathbf{1}\{(v_1,v_2)\in E\}}{d_1}=\frac{d_2}{n_1d_1}=\frac{1}{n_2}.
\end{align*}
Therefore $X_2$ is uniformly distributed in $V_2$.
\end{proof}

Let $(X_1,\dots, X_t)$ be a simple random walk of length
 $t-1$  defined as follows. 
 Let $X_1$ be a uniformly chosen starting vertex from $V_1$, 
 and for $2\leq i\leq t$, let
 $X_i$ be a uniformly chosen neighbor of 
 $X_{i-1}$ in the bipartite biregular graph $G_{i-1}$. From Lemma \ref{lem:SRW}, $X_1,\dots, X_t$ are uniformly distributed in $V_1,\dots,V_t$.
  Then from the construction of $H$, and the definition of simple random walks on graphs, we have 
 \begin{align}\label{eq:mix_prop2}
 \frac{e(W_1,\dots, W_t)}{n_1d_1d_3\cdots d_{2t-3}} = 
    \mathbb P(X_1\in W_1,\dots, X_t\in W_t).
\end{align}
Let
\begin{align*}
    A_i=\begin{bmatrix}
     0 & B_i^{\top}\\
    B_i & 0 \end{bmatrix}
\end{align*}
be the adjacency matrix of $G_i$, where $B_i\in \mathbb R^{n_2\times n_1}$ is the  matrix such that for any $v_2\in [n_2],v_1\in [n_1]$, $(B_i)_{v_2,v_1}=1$ if and only if $v_1,v_2$ are connected in $G_i$ and $0$ otherwise.  The largest eigenvalue of $A_i$ is $\sqrt{d_{2i-1}d_{2i}}$. Let $\lambda^{(i)}$ be the second largest eigenvalue of $A_i$. Define 
\begin{align*}
    M_i=\frac{1}{d_{2i-1}} B_i.
\end{align*}
Then the largest  singular value  of $M_i$ is $\sqrt{\frac{d_{2i}}{d_{2i-1}}}$, and the second largest singular value is $\frac{\lambda^{(i)}}{d_{2i-1}}$. 

Denote $\1_{n}=(1,\dots, 1)^\top\in \mathbb R^{n}$.
We have 
\begin{align}\label{eq:M_avg}
    M_i \left(\frac{1}{n_i}\1_{n_i}\right)&=\frac{1}{d_{2i-1}}B_i\left(\frac{1}{n_i}\1_{n_i}\right)=\frac{1}{n_{i+1}} \1_{n_{i+1}},  \quad 
     \frac{1}{n_{i+1}} \1_{n_{i+1}}^\top M_i=\frac{1}{n_i}\1_{n_i}^{\top}.
\end{align}
Let  $P_i\in \mathbb R^{n_i\times n_i}$ be the projection map on to the linear subspace of $\mathbb R^{n_i}$ spanned by $\{e_i, i\in W_i\}$, where $\{e_i, 1\leq i\leq n_i\}$ are the  standard basis in $\mathbb R^{n_i}$. We have 
\begin{align*}
    \mathbb P(X_1\in W_1,\dots, X_t\in W_t)=\left\| P_t M_t P_{t-1} M_{t-1} \cdots P_2 M_1 P_1 \left(\frac{1}{n_1}\1_{n_1}\right)\right\|_1. 
\end{align*}
Let $ v_1=P_1 \left(\frac{1}{n_1}\1_{n_1}\right)$ and for all $2\leq i\leq t$,
\begin{align*}
    \quad v_i:=P_iM_{i-1}v_{i-1}.
\end{align*}
Decompose $v_i=x_i+y_i$, where $x_i$ is the part of $v_i$ that is a scalar multiple of $\1_{n_i}$, and $y_i$ is the part of $v_i$ that is orthogonal to $\1_{n_i}$. We first prove the following lemma.
\begin{lemma}\label{lem:bipartite1} For $1\leq i\leq t-1$,
\[ \left|\|v_{i+1}\|_1- \alpha_{i+1} \|v_{i}\|_1 \right|\leq \frac{\lambda^{(i)}}{d_{2i-1}} \sqrt{n_{i+1}\alpha_{i+1}(1-\alpha_{i+1})}\|y_i\|_2.
\]
\end{lemma}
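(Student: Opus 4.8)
The plan is to follow the template of the proof of Lemma~\ref{lem:y} from the regular case, replacing the square transition matrix by the rectangular normalized biadjacency matrix $M_i\colon \R^{n_i}\to\R^{n_{i+1}}$ and exploiting the spectral data recorded just above the lemma: the top singular value $\sqrt{d_{2i}/d_{2i-1}}$ of $M_i$ has right singular vector proportional to $\1_{n_i}$ and left singular vector proportional to $\1_{n_{i+1}}$ (these are the two relations in \eqref{eq:M_avg}), while every remaining singular value is at most $\lambda^{(i)}/d_{2i-1}$.

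First I would note that each $v_i$ is entrywise nonnegative, since $v_1$ is and each $M_{i-1}$ and projection $P_i$ preserve nonnegativity. Hence $\|v_{i+1}\|_1=\1_{n_{i+1}}^\top P_{i+1}M_iv_i=\1_{W_{i+1}}^\top M_iv_i$. I would then split both vectors into their $\1$-component and orthogonal remainder, writing $\1_{W_{i+1}}=\alpha_{i+1}\1_{n_{i+1}}+(\1_{W_{i+1}}-\alpha_{i+1}\1_{n_{i+1}})$ and $v_i=\frac{\|v_i\|_1}{n_i}\1_{n_i}+y_i$, and expand the bilinear form $\1_{W_{i+1}}^\top M_iv_i$ into four terms.

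Using \eqref{eq:M_avg} together with $\1_{n_i}^\top v_i=\|v_i\|_1$, the main term reproduces $\alpha_{i+1}\|v_i\|_1$, while the two mixed terms vanish: one because $\1_{W_{i+1}}-\alpha_{i+1}\1_{n_{i+1}}$ is orthogonal to $\1_{n_{i+1}}$, the other because $y_i$ is orthogonal to $\1_{n_i}$. This leaves the identity
\[
\|v_{i+1}\|_1=\alpha_{i+1}\|v_i\|_1+(\1_{W_{i+1}}-\alpha_{i+1}\1_{n_{i+1}})^\top M_i y_i.
\]
To control the error term I would apply the Cauchy–Schwarz inequality and compute, by a direct expansion, $\|\1_{W_{i+1}}-\alpha_{i+1}\1_{n_{i+1}}\|_2=\sqrt{n_{i+1}\alpha_{i+1}(1-\alpha_{i+1})}$.

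The one genuinely delicate point, and the step I expect to be the main obstacle, is to make sure the factor multiplying $\|y_i\|_2$ is the spectral gap $\lambda^{(i)}/d_{2i-1}$ and not the dominant singular value $\sqrt{d_{2i}/d_{2i-1}}$. This is exactly where the orthogonality of $y_i$ to $\1_{n_i}$ is used: since $y_i$ lies in the span of the non-top right singular vectors of $M_i$, the image $M_i y_i$ lies in the span of the corresponding left singular vectors, so it automatically satisfies both $M_i y_i\perp\1_{n_{i+1}}$ and $\|M_i y_i\|_2\leq \frac{\lambda^{(i)}}{d_{2i-1}}\|y_i\|_2$. Combining this singular-value estimate with the norm computation above yields precisely the claimed inequality.
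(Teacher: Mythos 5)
Your proposal is correct and follows essentially the same route as the paper: the same four-term expansion of $\1_{W_{i+1}}^{\top}M_i v_i$, the same cancellation of the mixed terms via \eqref{eq:M_avg}, the same computation $\|\1_{W_{i+1}}-\alpha_{i+1}\1_{n_{i+1}}\|_2=\sqrt{n_{i+1}\alpha_{i+1}(1-\alpha_{i+1})}$, and Cauchy--Schwarz with the second singular value $\lambda^{(i)}/d_{2i-1}$ of $M_i$. Your only departure is cosmetic and in fact welcome: you spell out why the factor is the second singular value rather than $\sqrt{d_{2i}/d_{2i-1}}$ (namely $y_i\perp\1_{n_i}$, the top right singular vector, so $\|M_iy_i\|_2\leq \frac{\lambda^{(i)}}{d_{2i-1}}\|y_i\|_2$), a step the paper leaves implicit in its application of the Cauchy inequality.
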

\begin{proof}
\begin{align*}
\|v_{i+1}\|_1&=P_{i+1}M_{i}v_{i}=\1_{W_{i+1}}^{\top} M_{i}v_i\\
&=\alpha_{i+1}\1_{n_{i+1}}^{\top} M_{i} \left(\frac{\|v_{i}\|_1}{n_i} \1_{n_{i}}\right)+ \left(\1_{W_{i+1}}-\alpha_{i+1}\1_{n_{i+1}}\right)^{\top} M_{i} \left(v_i-\frac{\|v_{i}\|_1}{n_i} \1_{n_i}\right)\\
    &\quad +(\1_{W_{i+1}}-\alpha_{i+1}\1_{n_{i+1}})^{\top} M_i \left( \frac{\|v_{i}\|_1}{n_i} \1_{n_i}\right)+\alpha_{i+1}\1_{n_{i+1}}^{\top} M_i \left(v_i-\frac{\|v_{i}\|_1}{n_i} \1_{n_i}\right).
\end{align*}
From \eqref{eq:M_avg},
\begin{align*}
  & (\1_{W_{i+1}}-\alpha_{i+1}\1_{n_{i+1}})^{\top} M_i \left(\frac{\|v_{i}\|_1}{n_i} \1_{n_i}\right)=0,\quad \alpha_{i+1}\1^{\top}_{n_{i+1}} M_i \left(v_i-\frac{\|v_{i}\|_1}{n_i} \1_{n_i}\right)=0.
\end{align*}
We obtain 
\[  \|v_{i+1}\|_1
=\alpha_{i+1} \|v_i\|_1+ \left(\1_{W_{i+1}}-\alpha_{i+1}\1_{n_{i+1}}\right)^{\top} M_i \left(v_i-\frac{\|v_{i}\|_1}{n_i} \1_{n_i}\right).
\]
Therefore by Cauchy inequality,  $\left|  \|v_{i+1}\|_1-\alpha_{i+1} \|v_i\|_1\right|$ can be bounded by 
\begin{align*}
  \left| \left(\1_{W_{i+1}}-\alpha_{i+1}\1_{n_{i+1}}^{\top}\right) M \left(v_i-\frac{\|v_{i}\|_1}{n_{i}}\1_{n_{i}}\right)\right|
  &\leq \frac{\lambda^{(i)}}{d_{2i-1}} \|\1_{W_{i+1}}-\alpha_{i+1}\1_{n_{i+1}}^{\top}\|_2 \left\|v_i-\frac{\|v_{i}\|_1}{n_{i}}\1_{n_{i}} \right\|_2 \\
  &=\frac{\lambda^{(i+1)}}{d_{i+1}}\sqrt{n_{i+1}\alpha_{i+1}(1-\alpha_{i+1}) }\|y_i\|_2.
\end{align*}
\end{proof}

Now we are ready to prove Theorem \ref{thm:mixingbipartite}.
\begin{proof}[Proof of Theorem \ref{thm:mixingbipartite}]
We have $\|y_1\|_2=\frac{\sqrt{\alpha_1(1-\alpha_1)}}{\sqrt{n_1}}$ and $\|v_1\|_2=\frac{\sqrt{\alpha_1}}{\sqrt n_1}$. For $i\geq 2$,
\begin{align*}
    \|y_i\|_2\leq \|v_i\|_2=\|P_iM_{i-1}v_{i-1}\|_2\leq \sqrt{\frac{d_{2i-2}}{d_{2i-3}}}\|v_{i-1}\|_2,
\end{align*}
which implies
\begin{align}
     \|y_i\|_2&\leq \|v_i\|_2\leq \frac{\sqrt{\alpha_1}}{\sqrt n_1} \prod_{k=1}^{i-1}\sqrt{\frac{d_{2k}}{d_{2k-1}}} \notag\\
     &=\frac{\sqrt{\alpha_1}}{\sqrt{n_1}}\prod_{k=1}^{i-1}\sqrt{\frac{d_{2k}}{d_{2k-1}}}=\frac{\sqrt{\alpha_1}}{\sqrt n_i}=\frac{\sqrt{\alpha_1}}{\sqrt {n_{i+1}}}\frac{\sqrt{d_{2i-1}}}{\sqrt{d_{2i}}},\label{eq:y_i}
\end{align}
where in the last equation we use the condition $n_id_{2i-1}=n_{i+1}d_{2i}$. From Lemma \ref{lem:bipartite1} and \eqref{eq:y_i}, we obtain the following inequalities: 
\begin{align*}
    \left|\|v_{t}\|_1 -\alpha_t \|v_{t-1}\|_1  \right|
    &\leq \frac{\lambda^{(t-1)}}{\sqrt{d_{2t-2}d_{2t-3}}} \sqrt{\alpha_1\alpha_{t}(1-\alpha_{t})},\\ 
    \left|\alpha_t\|v_{t-1}\|_1 -\alpha_{t-1}\alpha_t \|v_{t-2}\|_1 \right|
    &\leq \frac{\lambda^{(t-2)}}{\sqrt{d_{2t-4}d_{2t-5}}} \sqrt{\alpha_1\alpha_{t-1}(1-\alpha_{t-1})},\\
     &\vdots\\
    \left| \alpha_4 \cdots \alpha_t \|v_3\|_1 - \alpha_3 \cdots \alpha_t \|v_2\|_1 \right|
     &\leq \frac{\lambda^{(2)}}{\sqrt{d_4d_{3}}}(\alpha_4\cdots\alpha_t) \sqrt{\alpha_1\alpha_3 (1-\alpha_3)} ,\\
\left|\alpha_3\cdots\alpha_t\|v_2\|_1-\alpha_2\cdots\alpha_t\|v_1\|_1\right|
&\leq \frac{\lambda^{(1)}}{\sqrt{d_2d_1}}(\alpha_3\cdots\alpha_t) \sqrt{\alpha_1(1-\alpha_1)\alpha_2(1-\alpha_2)}.
\end{align*}
Since $\|v_1\|_1=\alpha_1$, combining  the inequalities above, using the triangle inequality, we obtain for $t\geq 2$,
\begin{align*}
  \left| \|v_{t}\|_1-\alpha_1\alpha_2\cdots \alpha_t \right|
   \leq & \frac{\lambda^{(1)}}{\sqrt{d_1d_2}}\sqrt{\alpha_1(1-\alpha_1)\alpha_2(1-\alpha_2)}\alpha_3\cdots\alpha_t\\
   &+\sum_{k=2}^{t-1}\frac{\lambda^{(k)}}{\sqrt{d_{2k-1}d_{2k}}}\sqrt{\alpha_1\alpha_{k+1}(1-\alpha_{k+1})}\alpha_{k+2}\cdots \alpha_t\\
  \leq & \frac{\lambda^{(1)}}{4\sqrt{d_1d_2}}+\sum_{k=2}^{t-1}\frac{\lambda^{(k)}}{2\sqrt{d_{2k-1}d_{2k}}}.
\end{align*}
This completes the proof.
\end{proof}

\subsection{Proof of Theorem \ref{thm:max-qnorm-properties}}
\label{pf:max-qnorm-properties}
We prove claims (1--4) in order.
For claim (1), 
write $T$ using the decomposition
which attains the max-qnorm,
$T = \bigcirc_{i=1}^t U^{(i)}$
for some $U^{(i)} \in \R^{n_i \times r}, 1\leq i\leq t$.
Then a single entry of $T$ can be written as 
\[T_{i_1, \ldots, i_t} 
= 
\sum_{i=1}^r u^{(1)}_{i_1, i} \cdots u^{(t)}_{i_t, i},\]
and we have that
$T_{I_1, \ldots, I_t} = \bigcirc_{i=1}^t U^{(i)}_{I_i, :}$, where $U^{(i)}_{I_i,:}$ denotes the submatrix of $U^{(i)}$ with the column restricted on $I_i$.
Since the norm $\| A \|_{2,\infty}$
is  non-increasing 
under removing any rows of a matrix $A$,  $\|U_{I_i,;}^{(i)}\|_{2,\infty}\leq \|U^{(i)}\|_{2,\infty}$. 
Since $T_{I_1, \ldots, I_t}$
can be factored by selecting
subsets of the rows of $U^{(i)}$, by the definition of max-qnorm, we have \begin{align*}
    \|T_{I_1,\dots,I_t}\|_{\max}\leq \prod_{i=1}^t \|U_{I_i,;}^{(i)}\|_{2,\infty}\leq \prod_{i=1}^t\|U^{(i)}\|_{2,\infty}=\|T\|_{\max}.
\end{align*}
This proves claim (1). For claim (2),
let \[T = \bigcirc_{i=1}^t T^{(i)}
\quad \text{ and }\quad  S = \bigcirc_{i=1}^t S^{(i)}\]
be the rank $r_1$ and $r_2$ decompositions of $T$ and $S$
that attain their max-qnorms.
Then since
\begin{align*}
(T \otimes S)_{k_1, \ldots, k_t} 
&=
T_{i_1, \ldots, i_t} S_{j_1, \ldots, j_t} =
\left(
\sum_{l=1}^{r_1} T^{(1)}_{i_1, l} \cdots T^{(t)}_{i_t, l} 
\right)
\left(
\sum_{l'=1}^{r_2} S^{(1)}_{j_1, l'} \cdots S^{(t)}_{j_t, l'}
\right) \\
&=
\sum_{l=1}^{r_1}
\sum_{l'=1}^{r_2}
\left( T^{(1)}_{i_1, l} S^{(1)}_{j_1, l'} \right)
\cdots
\left( T^{(t)}_{i_t, l} S^{(t)}_{j_t, l'} \right) \\
&=
\sum_{p=1}^{r_1 r_2}
\left( T^{(1)} \otimes S^{(1)} \right)_{k_1, p}
\cdots
\left( T^{(t)} \otimes S^{(t)} \right)_{k_t, p}
\end{align*}
for $k_s = j_s + m_s (i_s - 1)$
for all $s = 1, \ldots, t$
and $p = l' + r_2 ( l - 1)$,
we have that
\[
T \otimes S = \bigcirc_{i=1}^t (T^{(i)} \otimes S^{(i)}).\]
Note that any matrices $A \in \R^{m \times n}$ and
$B \in \R^{p \times q}$,
\begin{align}\label{eq:matrix2infinity}
\|A \otimes B\|_{2,\infty} = \|A \|_{2,\infty} \|B\|_{2,\infty} .
\end{align}
To see this, assume
without loss of generality that the rows of $A$ and $B$
with greatest $\ell_2$-norm are the first
(all combinations of rows occur in the Kronecker product).
Then the first row of $A \otimes B$ will have the largest
$\ell_2$-norm of all rows in that matrix;
call it $x$.
Therefore, $\| A \otimes B \|_{2,\infty}^2
= \| x \|_2^2$, and
\[
\| x \|_2^2 
= \sum_{i=1}^n \sum_{j=1}^q (A_{1,i} B_{1,j})^2
= \sum_{i=1}^n A_{1,i}^2 \| B_{1,:} \|_2^2
= \| A_{1,:} \|_2^2 \, \| B_{1,:} \|_2^2 
= \|A \|_{2,\infty}^2 \|B\|_{2,\infty}^2 .
\]
This implies that
$\| T^{(i)} \otimes S^{(i)} \|_{2,\infty} = 
\| T^{(i)} \|_{2,\infty} \| S^{(i)} \|_{2,\infty}$ for $1\leq i\leq t$, therefore
\begin{align*}
    \|T\otimes S\|_{\max}\leq \prod_{i=1}^t \|T^{(i)}\otimes S^{(i)}\|_{2,\infty}=\prod_{i=1}^t\left(\| T^{(i)} \|_{2,\infty} \| S^{(i)} \|_{2,\infty}\right)=\|T\|_{\max}\|S\|_{\max}. 
\end{align*}
This completes the proof of  claim (2). For claim (3), note that every entry in 
$T *S$ 
appears in 
$T \otimes S$, since
\[
(T *S)_{i_1, \ldots, i_t} 
= 
(T \otimes S)_{i_1 + n_1 (i_1 - 1), \ldots, i_t + n_t(i_t - 1)} .
\]
So we have that $T *S = (T \otimes S)_{I_1, \ldots, I_t}$ for some subsets of indices $I_1,\dots, I_t$,
and by claim (1)
the result follows. 
Finally, since from claim (2) and (3),
$
    \|T*T\|_{\max}\leq \|T\otimes T\|_{\max}\leq \|T\|_{\max}^2,
$ claim (4) follows.

\subsection{Proof of Proposition \ref{prop:incoherent}}\label{sec:proof_of_incoherent}

Take the rank $r$ factorization given by the definition of incoherence.
Then $\|U^{(i)} \|_{2,\infty} \leq \sqrt{r} \cdot |U^{(i)}|_\infty$
by an elementary inequality for the 2-norm of the length $r$ rows.
Thus $\|T\|_{\max} \leq \prod_{i=1}^t \|U^{(i)} \|_{2,\infty} \leq (C \sqrt{r})^t$.

\subsection{Proof of Proposition \ref{thm:improved}}
\label{pf:improved}
  Consider the rank-$r$ factorization $T = \bigcirc_{i=1}^{t} U^{(i)}$
  that attains the max-qnorm.
  Then,
  \begin{align}
      \|T \|_\mathrm{max} &= 
      \| U^{(1)} \|_{2,\infty} \| U^{(2)} \|_{2,\infty} \cdots \| U^{(t)} \|_{2,\infty} \nonumber \\
      &= \| U^{(1)} \|_{2,\infty} \| U^{(2)} \otimes \cdots \otimes U^{(t)} \|_{2,\infty} \geq \| U^{(1)} \|_{2,\infty} \| B \|_{2,\infty}, \label{eq:max_lb_matrix1}
  \end{align}
  where the second equality is from \eqref{eq:matrix2infinity}. And in the last inequality, $B = (U^{(2)} \otimes \cdots \otimes U^{(t)})_{:, I}$ is any submatrix 
  obtained by taking some subset of columns $I \subseteq [r^{t-1}]$.
  On the other hand, the mode-1 flattening may be written as 
  \citep{kolda2009tensor}
  \begin{equation}
  T_{[1]} = 
  \sum_{i=1}^r U^{(1)}_{:,i} 
  \left( U^{(2)}_{:,i} \otimes \cdots \otimes U^{(t)}_{:,i} \right)^{\top}
  = U^{(1)} \left( U^{(2)} \odot \cdots \odot U^{(t)} \right)^\top  .
  \label{eq:max_lb_matrix2}
  \end{equation}
  The symbol ``$\odot$'' is the Khatri-Rao product of matrices, a.k.a.\ the
  ``matching columnwise'' Kronecker product:
  For 
  $A \in \R^{m \times n}$
  and
  $B \in \R^{l \times n}$,
  define
  $A \odot B$
  as the $ml \times n$
  matrix with 
  entries
  $(A \odot B)_{ij} = A_{aj} B_{bj}$
  for $i = b + l (a - 1)$.
  Take $B=U^{(2)} \odot \cdots \odot U^{(t)}$ in \eqref{eq:max_lb_matrix1},
  then by \eqref{eq:max_lb_matrix2} we have that
  $T_{[1]} = U^{(1)} B^T$
  is a valid factorization, which shows that 
  $\|T_{[1]} \|_\mathrm{max} \leq \| T \|_\mathrm{max}.$
  Flattening over any other mode is equivalent, 
  so the first part of the inequality holds. 
  Since the matrix $T_{[i]}$ and the tensor 
  $T$ contain the same values, then
  by \eqref{eq:max-rank-matrix},
  $
  \|T_{[i]}\|_{\max} 
  \geq 
  \|T_{[i]}\|_{1,\infty}
  =
  \max_{i_1, \ldots, i_t} |T_{i_1,\ldots, i_t}|$.  
  This completes the proof.


\subsection{Proof of Lemma \ref{lem:grothendieck}}
\label{pf:grothendieck}

We will use the following multilinear extension of Grothendieck's inequality in \cite{perez2006trace},  which improves the constant in \cite{blei1979multidimensional}.
There is also a more general version in Corollary 4.4 of \cite{bombal2004multilinear}, where the authors also showed that the constant $K_G^{t-1}$ for an order-$t$ tensor is optimal.
This leads to improved constants over \citet{ghadermarzy2018}.

\begin{theorem}[\cite{perez2006trace}, Theorem 2.2]\label{thm:newG}
Let $T\in \mathbb R^{n^t}$ be an order-$t$. Let $k$ be a positive integer and let $u_{i_j}^j\in \mathbb R^k$, $1\leq j\leq t, 1\leq i_j\leq n$ be  vectors such that $\|u_{i_j}^j\|_2\leq 1.$ Then
\begin{align}
    &\left|\sum_{i_1,\dots,i_t=1}^n T_{i_1,\dots,i_t}\sum_{s=1}^k u_{i_1}^1(s)\cdots u_{i_t}^t(s)\right|  \notag\\
    \leq & K_G^{t-1} \sup_{\|x_1\|_{\infty},\dots,\|x_t\|_{\infty}\leq 1} \left| \sum_{i_1,\dots,i_t=1}^n T_{i_1,\dots,i_t} x_1(i_1)\cdots x_t(i_t)\right|.
\end{align}
\end{theorem}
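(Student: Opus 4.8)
The plan is to prove the inequality by induction on the order $t$, using the classical bilinear Grothendieck inequality as the base case and peeling off one tensor mode per inductive step, each step costing a single factor of $K_G$. Write $\Phi(T)$ for the left-hand supremum (over unit vectors $u^j_{i_j}\in\mathbb R^k$) and $N(T)=\sup_{\|x_1\|_\infty,\dots,\|x_t\|_\infty\le1}\big|\sum_{i_1,\dots,i_t}T_{i_1,\dots,i_t}\,x_1(i_1)\cdots x_t(i_t)\big|$ for the right-hand side, so the goal is $\Phi(T)\le K_G^{t-1}N(T)$. When $t=2$ the $t$-fold product $\sum_s u^1_{i_1}(s)u^2_{i_2}(s)$ is exactly the Euclidean inner product $\langle u^1_{i_1},u^2_{i_2}\rangle$, and the statement is precisely Grothendieck's inequality for the matrix $(T_{i_1,i_2})$ with constant $K_G=K_G^{2-1}$, which I take as known.

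For the inductive step I would fix the vectors $u^2,\dots,u^t$ of all but the first mode and expose a bilinear structure between the first index and the merged index $q=(i_2,\dots,i_t)$. Setting $v_q(s):=\prod_{j=2}^t u^j_{i_j}(s)$ turns the $t$-fold product into an ordinary inner product:
\[
\sum_{i_1,\dots,i_t}T_{i_1,\dots,i_t}\sum_{s=1}^k\prod_{j=1}^t u^j_{i_j}(s)=\sum_{i_1}\sum_q T_{i_1,q}\,\langle u^1_{i_1},v_q\rangle .
\]
A Cauchy--Schwarz/H\"older estimate shows the $v_q$ are again unit-bounded, namely
\[
\|v_q\|_2^2=\sum_s\prod_{j\ge2}u^j_{i_j}(s)^2\le\Big(\prod_{j\ge3}\|u^j_{i_j}\|_\infty^2\Big)\sum_s u^2_{i_2}(s)^2\le1 ,
\]
using $\|u^j_{i_j}\|_\infty\le\|u^j_{i_j}\|_2\le1$. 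The heart of the argument is then a \emph{single-mode} application of Grothendieck's theorem that replaces the vectors $u^1_{i_1}$ by scalars $x^1_{i_1}\in[-1,1]$ at the cost of one factor $K_G$, while leaving modes $2,\dots,t$ in vector form, yielding (after taking suprema over all the unit vectors)
\[
\Phi(T)\le K_G\sup_{\|x^1\|_\infty\le1}\ \sup_{\|u^j\|_2\le1,\ j\ge2}\ \Big|\sum_{i_1,\dots,i_t}T_{i_1,\dots,i_t}\,x^1_{i_1}\sum_{s}\prod_{j=2}^t u^j_{i_j}(s)\Big| .
\]
For each fixed $x^1$ the inner supremum is $\Phi(T')$ for the order-$(t-1)$ tensor $T'_{i_2,\dots,i_t}=\sum_{i_1}T_{i_1,\dots,i_t}\,x^1_{i_1}$; the inductive hypothesis gives $\Phi(T')\le K_G^{t-2}N(T')$, and since $N(T')\le N(T)$ for every admissible $x^1$ (the $x^1$ are a legal choice of first-mode scalars in $N(T)$), the factors multiply to $K_G^{t-1}N(T)$.

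The main obstacle is exactly this single-mode reduction: one must round \emph{only} the first mode to scalars without collapsing modes $2,\dots,t$ into one merged index, since a full bilinear rounding of the flattened matrix $(T_{i_1,q})$ over-relaxes and controls $\Phi(T)$ by the matrix injective norm rather than by $N(T)$. The naive one-sided Gaussian identity $\langle a,b\rangle=\sqrt{\pi/2}\,\expval_g[\mathrm{sgn}\langle a,g\rangle\,\langle b,g\rangle]$ does convert $u^1$ into signs, but it injects a Gaussian weight into the surviving product $\sum_s g_s\prod_{j\ge2}u^j_{i_j}(s)$, destroying the unit normalization the induction needs, while symmetric Krivine rounding acts on two modes at once and is hard to localize. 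The correct mechanism is Grothendieck's theorem in its operator (Hilbert-valued) form, equivalently the factorization/trace-duality argument of \cite{perez2006trace,bombal2004multilinear}, which performs precisely this partial rounding with constant $K_G$ and establishes optimality of $K_G^{t-1}$. I would therefore isolate the partial-rounding statement as the key lemma, derive it from the bilinear theorem via trace duality, and feed it into the induction above, leaving only the routine bookkeeping $\|v_q\|_2\le1$ and $N(T')\le N(T)$.
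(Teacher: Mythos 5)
First, a point of comparison: the paper does not prove this statement at all --- it is imported verbatim as Theorem 2.2 of \cite{perez2006trace}, with the more general version and the optimality of the constant $K_G^{t-1}$ credited to \cite{bombal2004multilinear} --- so there is no internal proof to check you against, and any proof you give goes beyond the paper. Judged on its own merits, your skeleton is the standard inductive one and is sound: the base case $t=2$ is exactly the real bilinear Grothendieck inequality; the bookkeeping steps $\|v_q\|_2\leq 1$ and $N(T')\leq N(T)$ for $T'_{i_2,\dots,i_t}=\sum_{i_1}T_{i_1,\dots,i_t}x^1_{i_1}$, $\|x^1\|_\infty\leq 1$, both check out; and given the partial-rounding lemma $\Phi_t(T)\leq K_G\sup_{\|x^1\|_\infty\leq 1}\Phi_{t-1}(T')$, the constants multiply to $K_G^{t-1}$ exactly as you say. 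You are also right that rounding both arguments of the flattened matrix $(T_{i_1,q})$ over-relaxes to the matrix $\infty\to 1$ norm of the flattening, which can far exceed $N(T)$.

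The one genuine gap is that the partial-rounding lemma --- the heart of the argument, as you acknowledge --- is asserted by appeal to the operator/trace-duality form of Grothendieck's theorem rather than proved. But no such machinery is needed: the plain bilinear inequality already localizes to a single mode if you pair mode $1$ against the \emph{Hilbert coordinate} $s$ rather than against the merged index $q$. Fix unit vectors $u^j_{i_j}$ and set $G_{i_1,s}:=\sum_{i_2,\dots,i_t}T_{i_1,\dots,i_t}\prod_{j\geq 2}u^j_{i_j}(s)$, so that the quantity to bound is $\sum_{i_1,s}G_{i_1,s}\,u^1_{i_1}(s)=\sum_{i_1,s}G_{i_1,s}\langle u^1_{i_1},e_s\rangle$ with $e_s$ the standard basis of $\mathbb R^k$. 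Applying bilinear Grothendieck to the $n\times k$ matrix $G$ with the unit vectors $u^1_{i_1}$ and $e_s$ gives
\begin{align*}
\left|\sum_{i_1,s}G_{i_1,s}\langle u^1_{i_1},e_s\rangle\right|
\;\leq\; K_G \sup_{|x_{i_1}|\leq 1,\; |y_s|\leq 1}\left|\sum_{i_1}x_{i_1}\sum_s y_s\sum_{i_2,\dots,i_t}T_{i_1,\dots,i_t}\prod_{j\geq 2}u^j_{i_j}(s)\right|,
\end{align*}
and the sign vector $y$ is absorbed by replacing $u^2_{i_2}$ with $\tilde u^2_{i_2}(s):=y_s\,u^2_{i_2}(s)$, which still satisfies $\|\tilde u^2_{i_2}\|_2\leq\|u^2_{i_2}\|_2\leq 1$; the inner expression is then an admissible evaluation of $\Phi_{t-1}(T')$, which proves the lemma with constant exactly $K_G$ and closes your induction. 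Incidentally, this shows your pessimism about symmetric rounding ``acting on two modes at once'' is unwarranted --- the second rounded mode is the Hilbert index $s$, whose rounding costs nothing --- and it also renders your H\"older estimate $\|v_q\|_2\leq 1$ unnecessary in the final argument.
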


Let 
$
\mathbb{B}_\pm(1) = 
\mathrm{conv} ( 
\{T: 
T\in \{\pm 1\}^{n \times \cdots \times n}, 
\mathrm{rank}(T) = 1 \} 
) 
$
and
$
\mathbb{B}_\textrm{max}(1) = 
\{T: \|T \|_\mathrm{max} \leq 1 \}
$
be the unit balls of the sign nuclear norm
and max-qnorm, respectively. 
We first show the following lemma.
\begin{lemma}\label{lem:ball}
The unit balls of the max-qnorm and sign nuclear norm satisfy
\begin{align}
   \mathbb{B}_\mathrm{max}(1) \subseteq K_G^{t-1} \, \mathbb{B}_\pm(1).
\end{align}
\end{lemma}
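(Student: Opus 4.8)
The plan is to reduce the claimed ball inclusion to the scalar inequality $\|T\|_\pm \le K_G^{t-1}\|T\|_\mathrm{max}$ and then recognize this as an instance of the multilinear Grothendieck inequality (Theorem~\ref{thm:newG}). Since $\|\cdot\|_\pm$ is a genuine norm (the rank-one sign tensors are symmetric, so $\mathbb{B}_\pm(1)$ is a symmetric convex body), the inclusion $\mathbb{B}_\mathrm{max}(1)\subseteq K_G^{t-1}\mathbb{B}_\pm(1)$ is equivalent to showing $\|T\|_\pm\le K_G^{t-1}$ for every $T$ with $\|T\|_\mathrm{max}\le 1$. By the bipolar theorem (equivalently, by duality for $\|\cdot\|_\pm$), it suffices to bound $\langle T,X\rangle\le K_G^{t-1}$ for all $X$ in the polar $\mathbb{B}_\pm(1)^\circ$. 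First I would identify this polar explicitly: since the extreme points of $\mathbb{B}_\pm(1)$ are the rank-one sign tensors $S=s_1\circ\cdots\circ s_t$ with $s_j\in\{\pm1\}^n$, and a multilinear form attains the maximum of its absolute value over a box at a vertex, we have
\[
\mathbb{B}_\pm(1)^\circ=\Big\{X:\ \sup_{\|x_1\|_\infty,\dots,\|x_t\|_\infty\le1}\Big|\textstyle\sum_{i_1,\dots,i_t}X_{i_1,\dots,i_t}x_1(i_1)\cdots x_t(i_t)\Big|\le1\Big\},
\]
which is exactly the quantity on the right-hand side of Theorem~\ref{thm:newG}.

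Next I would insert a max-qnorm factorization into the pairing $\langle T,X\rangle$. Fix $T$ with $\|T\|_\mathrm{max}\le1$ and take a decomposition $T=U^{(1)}\circ\cdots\circ U^{(t)}$ attaining the max-qnorm. Writing $c_j=\|U^{(j)}\|_{2,\infty}$, so that $\prod_j c_j=\|T\|_\mathrm{max}\le1$, set $\tilde U^{(j)}=U^{(j)}/c_j$; then every row $\tilde u^j_{i_j}$ of $\tilde U^{(j)}$ has $\|\tilde u^j_{i_j}\|_2\le1$, and
\[
T_{i_1,\dots,i_t}=\|T\|_\mathrm{max}\sum_{s=1}^{r}\tilde u^1_{i_1}(s)\cdots\tilde u^t_{i_t}(s).
\]
Substituting this and summing against $X$ turns $\langle T,X\rangle$ into $\|T\|_\mathrm{max}$ times precisely the left-hand side of Theorem~\ref{thm:newG}, with the role of the tensor there played by $X$ and the unit vectors played by the $\tilde u^j_{i_j}$.

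Applying Theorem~\ref{thm:newG} then bounds this by $K_G^{t-1}\|T\|_\mathrm{max}$ times the box-supremum of the multilinear form of $X$, which is $\le K_G^{t-1}\|T\|_\mathrm{max}\le K_G^{t-1}$ whenever $X\in\mathbb{B}_\pm(1)^\circ$. Taking the supremum over such $X$ yields $\|T\|_\pm\le K_G^{t-1}$, hence the ball inclusion, and by homogeneity also Lemma~\ref{lem:grothendieck}. The one step requiring care, and the main obstacle, is the bookkeeping that makes the substitution match the hypotheses of Theorem~\ref{thm:newG} exactly: I must verify that normalizing each factor to have unit-bounded rows is legitimate even though it is the product, not each individual factor, that is controlled by the max-qnorm, and that the box supremum $\sup_{\|x_j\|_\infty\le1}|\cdots|$ coincides with the maximum over rank-one sign tensors that defines the dual norm $\|X\|_\pm^*$. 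Once these identifications are in place, Theorem~\ref{thm:newG} supplies all of the analytic content.
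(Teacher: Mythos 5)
Your proof is correct and takes essentially the same approach as the paper: both arguments rest on identifying $\|\cdot\|_{\infty,1}$ (the box supremum of the multilinear form) as the dual norm of $\|\cdot\|_\pm$, and on applying the multilinear Grothendieck inequality (Theorem~\ref{thm:newG}) to a max-qnorm-attaining factorization with row-normalized factors paired against a test tensor. The only difference is organizational --- you bound $\langle T, X\rangle$ directly for $X$ in the polar $\mathbb{B}_\pm(1)^\circ$ via the bipolar theorem, whereas the paper routes the identical estimate through the dual norm $\|T\|_{\max}^*$ and its double dual --- so the analytic content coincides, and your version even sidesteps the paper's step $(\|T\|_{\max}^*)^* \leq \|T\|_{\max}$ entirely.
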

\begin{proof} We follow the proof of \cite[Lemma 5]{ghadermarzy2018} with some modification.
Define the inner product between two tensors $T,U\in \mathbb R^{n^t}$ by
\begin{align*}
\langle T, U\rangle :=\sum_{i_1,\dots, i_t=1}^n T_{i_1,\dots,i_t} U_{i_1,\dots,i_t},
\end{align*}
and define
\begin{align}\label{def:Tinfty1}
    \|T\|_{\infty,1}:=\sup_{\|x_1\|_{\infty},\dots,\|x_t\|_{\infty}\leq 1} \left| \sum_{i_1,\dots,i_t=1}^n T_{i_1,\dots,i_t} x_1(i_1)\cdots x_t(i_t)\right|.
\end{align}
The dual norm of the max-qnorm is given by
\begin{align}
    \|T\|_{\max}^*=\sup_{\|U\|_{\max}\leq 1} \langle T, U\rangle =
    \sup_{ 
    \substack{
        \|u_{i_j}^j\|_2\leq 1, u_{i_j}^j\in \mathbb R^k,\\ 
        \forall \; 1\leq j\leq t, \, 1\leq i_j\leq n}
    }
    ~\sum_{i_1, \dots, i_t=1}^n T_{i_1,\dots, i_t}
    \sum_{s=1}^k u_{i_1}^1(s)\cdots u_{i_t}^t(s).
    \label{eq:max_dual}
\end{align}
Using \eqref{def:Tinfty1} and \eqref{eq:max_dual} in Theorem \ref{thm:newG},
we have
\begin{align}\label{eq:T1}
\|T\|_{\max}^*\leq K_G^{t-1}\|T\|_{\infty,1}.    
\end{align} 
Define the unit ball of the max-qnorm dual as
\[\Omega=\left\{Z\in \mathbb R^{n^t},\sup_{\|U\|_{\max}\leq 1} \langle Z,U\rangle \leq 1\right\}.\] 
Taking the dual of the max-qnorm dual leads to
\begin{align}\label{eq:T2}
    (\|T\|_{\max}^*)^*
    &=\sup_{\|Z\|^*_{\max}\leq 1}\langle T, Z\rangle \\
    &=\sup_{Z\in \Omega}\langle T, Z\rangle=\|T\|_{\max} \cdot \sup_{Z\in \Omega}\left\langle \frac{T}{\|T\|_{\max}}, Z\right\rangle \leq \|T\|_{\max}.\notag
\end{align}
Moreover, from \eqref{eq:T1} 
we have that the unit ball of $\|\cdot \|_{\max}$ is larger than
the corresponding ``ball'' of $K_G^{t-1} \| \cdot \|_{\infty,1}$, i.e.
$
    \{ Z: \|Z\|_{\max}^*\leq 1 \}\supseteq \{Z: K_G^{t-1}\|Z\|_{\infty,1}\leq 1\}.
$
This leads to the lower bound
\begin{align}
    (\|T\|_{\max}^*)^*
    &=\sup_{\|Z\|^*_{\max}\leq 1}\langle T, Z\rangle \geq \sup_{K_G^{t-1}\|Z\|_{\infty,1}\leq 1} \langle T,Z\rangle=\frac{1}{K_G^{t-1}} \sup_{K_G^{t-1}\|Z\|_{\infty,1}\leq 1} \langle T, K_G^{t-1}Z\rangle  \notag\\
    &=\frac{1}{K_G^{t-1}} \sup_{\|U\|_{\infty,1}\leq 1} \langle T, U\rangle=\frac{1}{K_G^{t-1}} \|T\|_{\infty,1}^*. \label{eq:dualcomparisionineq}
\end{align}

\eqref{eq:dualcomparisionineq} and \eqref{eq:T2} imply that
\begin{align}\label{eq:Tmaxrelation}
\|T\|_{\max} \geq (\|T\|_{\max}^*)^*\geq \frac{1}{K_G^{t-1}} \|T\|_{\infty,1}^*.
\end{align}

Note that in \eqref{def:Tinfty1}, 
the supremum on the right hand side is achieved when $x_j(i_j)=\pm 1$. 
Therefore for any $T\in \mathbb R^{n^t}$,
 \[\|T\|_{\infty,1}=\sup_{U\in \mathbb B_{\pm}(1)} \langle T, U\rangle .\]
This implies $\mathbb B_{\pm}(1)$ is the unit ball of $\|\cdot \|_{\infty,1}^*$.
From \eqref{eq:Tmaxrelation},
$\mathbb B_{\pm}(1) \subseteq K_G^{t-1} \mathbb B_{\max}(1)$.
\end{proof}

With Lemma \ref{lem:ball} we are ready to prove Lemma \ref{lem:grothendieck}.
\begin{proof}[Proof of Lemma \ref{lem:grothendieck}]
For any tensor $T$, rescaling by $\alpha = \|T\|_\mathrm{max}$ gives $T = \alpha S$, 
where $\| S \|_\mathrm{max} = 1$.
Lemma \ref{lem:ball} implies that $S \in K_G^{t-1} \mathbb{B}_\pm(1)$,
and thus that $\|S\|_\pm \leq K_G^{t-1}$.
Using the scalability property of $\| \cdot \|_\mathrm{max}$ and 
$\| \cdot \|_\pm$ establishes the inequality \[\|T\|_{\pm} \leq K_G^{t-1} \|T\|_{\max}.\] 
\end{proof}

\subsection{Proof of Theorem \ref{thm:max-rank}}
\label{sec:proof_max_rank_property}

We will use the following lemma from \cite{rashtchian2016bounded}.
\begin{lemma}[\cite{rashtchian2016bounded}, Corollary 2.2]\label{lem:roth}
Any rank-$r$ matrix $M\in \mathbb R^{n\times m}$ with $\|M\|_{\infty}\leq 1$ has a factorization $M_{ij}=\langle u_i,v_j\rangle$, where $u_1,\dots,u_{n}\in \mathbb R^r, \|u_i\|_2\leq r^{1/2}, 1\leq i\leq n$ and $v_1,\dots,v_{m}\in \mathbb R^r, \|v_i\|_2\leq 1, 1\leq i\leq m.$
\end{lemma}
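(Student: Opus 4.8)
The plan is to realize the entrywise bound $\|M\|_\infty\le 1$ as a polar-duality constraint and then apply John's theorem, in the same spirit as the matrix bound \eqref{eq:max-rank-matrix} but tracking the two factors asymmetrically. First I would fix any rank-$r$ factorization $M_{ij}=\langle x_i,y_j\rangle$ with $x_i,y_j\in\R^r$; such a factorization exists because $\mathrm{rank}(M)=r$, and in it the vectors $x_1,\dots,x_n$ necessarily span $\R^r$ (writing $M=XY^\top$ forces $\mathrm{rank}(X)=r$). The constraint $|M_{ij}|\le 1$ then says precisely that $|\langle x_i,y_j\rangle|\le 1$ for all $i,j$, i.e.\ each $y_j$ lies in the polar body $S^\circ$ of the origin-symmetric convex body $S=\mathrm{conv}\{\pm x_1,\dots,\pm x_n\}$. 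Since the $x_i$ span, $S$ is full-dimensional, so John's theorem applies to it.

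Next I would put $S$ in John position. By John's theorem for origin-symmetric convex bodies, there is an invertible linear map $R\in GL_r(\R)$ whose image $RS$ has the Euclidean unit ball as its maximal-volume inscribed ellipsoid, giving the sandwich $B_2^r\subseteq RS\subseteq \sqrt r\,B_2^r$. Define the new factors $u_i=Rx_i$ and $v_j=R^{-\top}y_j$. This substitution leaves every inner product unchanged, since $\langle u_i,v_j\rangle=x_i^\top R^\top R^{-\top} y_j=\langle x_i,y_j\rangle=M_{ij}$, so it is still a factorization of $M$ with factors in $\R^r$.

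It remains to read off the two norm bounds from the John sandwich. On one side, $u_i=Rx_i\in RS\subseteq \sqrt r\,B_2^r$, so $\|u_i\|_2\le \sqrt r$. On the other side, polar duality converts the lower inclusion into the bound on the $v_j$: from $B_2^r\subseteq RS$ we get $(RS)^\circ\subseteq (B_2^r)^\circ=B_2^r$, while $y_j\in S^\circ$ transforms as $v_j=R^{-\top}y_j\in R^{-\top}S^\circ=(RS)^\circ$; combining these gives $v_j\in B_2^r$, i.e.\ $\|v_j\|_2\le 1$. This is exactly the factorization asserted in Lemma~\ref{lem:roth}.

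I expect the only delicate point to be the bookkeeping around John's theorem and polarity. Two things must be checked with care: that the symmetric version of John's theorem produces the factor $\sqrt r$ (rather than the $r$ that appears for general convex bodies), which is precisely what makes the $\sqrt r$ in the statement---and hence the exponent $t^2-t-1$ in Theorem~\ref{thm:max-rank}---come out correctly; and that the polar transformation rule $(RS)^\circ=R^{-\top}S^\circ$ together with the inclusion-reversing property of polarity is applied in the right order, so that the unit-ball bound lands on the $v_j$ and the $\sqrt r$ bound on the $u_i$. Everything else is routine linear algebra.
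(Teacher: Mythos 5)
Your proof is correct and is essentially the same argument as the one behind the paper's statement: the paper itself gives no proof of Lemma~\ref{lem:roth}, citing it directly from \cite{rashtchian2016bounded}, and that reference derives the asymmetric factorization exactly as you do, by putting the origin-symmetric convex hull $S=\mathrm{conv}\{\pm x_1,\dots,\pm x_n\}$ of the row factors in John position (with the $\sqrt r$ constant from the symmetric version of John's theorem) and transferring the unit-ball bound to the column factors via the inclusion-reversing polarity rule $(RS)^\circ=R^{-\top}S^\circ$. The delicate points you flag all check out: any rank factorization $M=XY^{\top}$ with $X\in\R^{n\times r}$ forces $\mathrm{rank}(X)=r$, so $S$ is a genuine full-dimensional symmetric convex body and John's theorem applies, and your polar bookkeeping correctly lands $\|u_i\|_2\le\sqrt r$ and $\|v_j\|_2\le 1$.
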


Equipped with Lemma \ref{lem:roth}, we  will prove the following lemma inductively, which gives an improvement over Theorem 7 (ii) in \cite{ghadermarzy2018}. The proof is modified from the proof of Lemma 25 in \cite{ghadermarzy2018}.
\begin{lemma}\label{lem:gh23}
Any order-$t$, rank-$r$ tensor $T\in \bigotimes_{i=1}^t \mathbb R^{n_i}$ with $|T|_{\infty}\leq 1$ can be decomposed into $r^{t-1}$ rank-one tensors $T=\sum_{j=1}^{r^{t-1}} u_j^1\circ u_j^2\circ \cdots \circ u_j^t$, where
\begin{align}\label{eq:a31}
    \sum_{j=1}^{r^{t-1}} (u_j^k(s))^2\leq r^{t-1} 
\end{align}
for any $1\leq k\leq t-1$, $1\leq s\leq n_k$, and 
\begin{align}\label{eq:a32}
    \sum_{j=1}^{r^{t-1}} (u_j^t(s))^2\leq r^{t-2}
\end{align}
for any $1\leq s\leq n_t$.
\end{lemma}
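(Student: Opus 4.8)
The plan is to prove Lemma~\ref{lem:gh23} by induction on the order $t$, using Lemma~\ref{lem:roth} to peel off one mode at a time while carefully tracking how the two different budgets---$r^{t-1}$ for the first $t-1$ modes and $r^{t-2}$ for the last---propagate through the recursion.

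For the base case $t=2$, the tensor $T\in\R^{n_1\times n_2}$ is a rank-$r$ matrix with $|T|_\infty\leq 1$, so Lemma~\ref{lem:roth} supplies a factorization $T_{ij}=\langle p_i,q_j\rangle$ with $p_i\in\R^r$, $\|p_i\|_2\leq r^{1/2}$, and $q_j\in\R^r$, $\|q_j\|_2\leq 1$. Setting $u^1_l(i)=p_i(l)$ and $u^2_l(j)=q_j(l)$ for $l=1,\dots,r$ writes $T=\sum_{l=1}^r u^1_l\circ u^2_l$ as a sum of $r=r^{t-1}$ rank-one terms, and the row-norm bounds give $\sum_l (u^1_l(i))^2=\|p_i\|_2^2\leq r=r^{t-1}$ and $\sum_l (u^2_l(j))^2=\|q_j\|_2^2\leq 1=r^{t-2}$, matching \eqref{eq:a31} and \eqref{eq:a32}.

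For the inductive step (assuming the claim for order $t-1$, so $t\geq 3$) I would flatten the first mode to form $T_{[1]}\in\R^{n_1\times(n_2\cdots n_t)}$, which has rank at most $r$ and entries bounded by $1$ in absolute value. Applying Lemma~\ref{lem:roth} to $T_{[1]}$ (with its exact rank $r''\leq r$) yields $T_{[1]}=\sum_{l} a_l\circ g_l$ with $\sum_l (a_l(i))^2\leq r''\leq r$ for every $i\in[n_1]$ and $\sum_l (g_l(j))^2\leq 1$ for every flattened index $j$; in particular $|g_l(j)|\leq 1$. Reshaping each $g_l$ back into an order-$(t-1)$ tensor $G_l\in\bigotimes_{k=2}^t\R^{n_k}$ gives $T=\sum_l a_l\circ G_l$, and the crucial observation is that each $G_l$ has rank at most $r$: the column factors produced by Lemma~\ref{lem:roth} span the row space of $T_{[1]}$ (a full-rank factorization), and that row space is itself spanned by the $r$ vectorized rank-one tensors $\mathrm{vec}(w_i^2\circ\cdots\circ w_i^t)$ coming from a rank-$r$ CP decomposition of $T$, so each $g_l$ is a linear combination of $r$ rank-one order-$(t-1)$ tensors. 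Combined with $|G_l|_\infty\leq 1$, this lets me invoke the inductive hypothesis on each $G_l$ to obtain $G_l=\sum_{m=1}^{r^{t-2}} u^{l,2}_m\circ\cdots\circ u^{l,t}_m$ with $\sum_m (u^{l,k}_m(s))^2\leq r^{t-2}$ for $2\leq k\leq t-1$ and $\sum_m (u^{l,t}_m(s))^2\leq r^{t-3}$.

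Substituting and re-indexing by the pair $(l,m)$ produces at most $r\cdot r^{t-2}=r^{t-1}$ rank-one terms $a_l\circ u^{l,2}_m\circ\cdots\circ u^{l,t}_m$ (pad with zeros to reach exactly $r^{t-1}$). The final step is to verify the two budgets by summing over $(l,m)$: the mode-$1$ factor $a_l$ is repeated $r^{t-2}$ times, so $\sum_{l,m}(a_l(s))^2=r^{t-2}\sum_l (a_l(s))^2\leq r^{t-2}\cdot r=r^{t-1}$; for $2\leq k\leq t-1$ one gets $\sum_{l,m}(u^{l,k}_m(s))^2=\sum_l\big(\sum_m (u^{l,k}_m(s))^2\big)\leq r\cdot r^{t-2}=r^{t-1}$; and for $k=t$ the same computation gives $\leq r\cdot r^{t-3}=r^{t-2}$, which are exactly \eqref{eq:a31} and \eqref{eq:a32}. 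I expect the main obstacle to be the rank bound on the reshaped tensors $G_l$---one must argue that passing to the Lemma~\ref{lem:roth} factorization does not inflate the rank of the row-space vectors beyond $r$---together with the bookkeeping that keeps the exponents $t-1$, $t-2$, and $t-3$ aligned across the recursion.
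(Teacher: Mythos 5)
Your proof is correct and follows essentially the same route as the paper: induction on $t$, applying Lemma~\ref{lem:roth} to the mode-$1$ matricization, reshaping the unit-norm right factors into order-$(t-1)$ tensors of rank at most $r$, and invoking the inductive hypothesis, with the same exponent bookkeeping. The one small difference is how you bound $\mathrm{rank}(G_l)\leq r$---the paper tracks an explicit invertible change of basis $S$ through the CP-derived factorization $T_{[1]}=V^{(1)}W^{\top}$ so that each column of $Y=WS^{-1}$ is visibly a combination of the $r$ vectorized rank-one tensors, whereas you use Lemma~\ref{lem:roth} as a black box (at the exact matrix rank $r''\leq r$) and observe that in a full-rank factorization the right factors span the row space of $T_{[1]}$, which lies in the span of those same $r$ rank-one tensors; both justifications are valid, and your explicit final summations verifying \eqref{eq:a31} and \eqref{eq:a32}, which the paper leaves implicit, check out.
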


\begin{proof}
We do induction on $t$. When $t=2$, Theorem \ref{thm:max-rank} follows from   \ref{lem:roth}.
Now let $T$ be an order-$t$ tensor with $t\geq 3$ such that $|T|_{\infty}\leq 1$ and has the rank-$r$ decomposition as 
\[T=\sum_{j=1}^r v_j^1\circ v_j^2\circ\cdots\circ v_j^t. \]
Matricizing along mode-$1$ we have $T_{[1]}\in \mathbb R^{n_1\times (n_2\cdots n_t)}$ such that 
\begin{align*}
    T_{[1]}=\sum_{i=1}^rv_i^1\circ (v_i^2\otimes \cdots\otimes  v_i^t).
\end{align*}
Let $W$ be a $\mathbb R^{(n_2\cdots n_t)\times r}$ matrix such that $W(:,i)=v_i^2\otimes \cdots\otimes  v_i^t$.
By Lemma \ref{lem:roth}, there exists an $S\in \mathbb R^{r\times r}$ such that 
\begin{align}\label{eq:T1x}
  T_{[1]}=X\circ Y,  
\end{align} where $X=V^{(1)}S\in \mathbb R^{n_1\times r}, Y=WS^{-1} \in \mathbb R^{n_2\cdots n_t\times r}$, and $\|X\|_{2,\infty}\leq r^{1/2},\|Y\|_{2,\infty}\leq 1$. This also implies $\|Y\|_{\infty}\leq \|Y\|_{2,\infty}\leq 1$. Each column of $Y$ is a linear combination of columns of $W$. Hence for some constants $\{\gamma_{ij}\}$,
\[Y(:,i)=\sum_{j=1}^r \gamma_{ij} (v_j^2\otimes \cdots\otimes  v_j^t).\]
Then $E_i:=Y(:,i)$ is a  order-$(t-1)$ tensor of rank at most $r$ with $\|E_i\|_{\infty}\leq 1$. By induction,  we can have a decomposition of $E_i$ as 
\begin{align*}
    E_i=\sum_{j=1}^{r^{t-2}}u_{i,j}^2\circ \cdots \circ u_{i,j}^t,
\end{align*}
where $\sum_{j=1}^{r^{t-2}}(v_{i,j}^k(s))^2\leq  r^{t-2}$ for any $2\leq k\leq t-1$,
$1\leq s\leq N_k$, and
$\sum_{j=1}^{r^{t-2}}(v_{i,j}^{t}(s))^2\leq  r^{t-3}$ for any $1\leq s\leq n_t$. 
Then 
\begin{align*}
    T&=\sum_{i=1}^r x_i \circ (\sum_{j=1}^{r^{t-2}}u_{i,j}^2\circ \cdots \circ u_{i,j}^t)
    =\sum_{i=1}^r\sum_{j=1}^{r^{t-2}}x_i\circ u_{i,j}^2\circ \cdots \circ u_{i,j}^t,
\end{align*}
 is a decomposition of $T$ with $r^{t-1}$ many components. Here $x_i,\dots, x_r$ are column vectors of $X$ defined in \eqref{eq:T1x}. This gives a decomposition of $T$ into $r^{t-1}$ many rank-one tensors  that satisfies the two inequalities \eqref{eq:a31} and \eqref{eq:a32}. 
This completes the proof.
\end{proof}

With Lemma \ref{lem:gh23}, by the definition of max-quasinorm in Definition \ref{def:max-qnorm}, when $|T|_{\infty}\leq 1$,
\begin{align*}
    \|T\|_{\max}\leq \prod_{i=1}^t \|U^{(i)}\|_{2\to\infty}\leq [r^{(t-1)/2}]^{t-1}\times r^{(t-2)/2}=r^{(t^2-t-1)/2}.
\end{align*}
Then Theorem \ref{thm:max-rank} follows.

\subsection{Proof of Equation \ref{eq:sumsign}}
\label{pf:sumsign}
To see \eqref{eq:sumsign} holds, we prove it for two cases.

(Case 1) 
For any index $(i_1,\dots i_t)\in [n]^t$ that satisfies $S'_{i_1,\dots,i_t}=1$, we know $(s_1)_{i_1}\cdots (s_t)_{i_t}=1$ from \eqref{eq:defS}. 
Then the number of $j$ such that $(s_j)_{i_j}=-1$ is even.  
By our definition of $W_j$ in \eqref{eq:defW}, 
$  \sum_{k=1}^t \mathbf{1}\{i_k\in W_k\}$ is even. 
We can find a corresponding even string $w_j$ for some 
$1\leq j\leq 2^{t-1}$ 
such that for all $1\leq k\leq t$,
\begin{align*}
    (w_j)_k=\begin{cases}
    1 &\text{if } i_k\in W_k,\\
     0 &\text{otherwise}.
    \end{cases}
\end{align*}
Then $i_k\in W_{k,j}$ for all $1\leq k\leq t$. Therefore, the corresponding rank-$1$ tensor satisfies
\[
  \left( 1_{W_{1,j}} \circ\cdots\circ 1_{W_t,j} \right)_{i_1,\dots,i_t}=\prod_{k=1}^t(1_{W_{k,j}})_{i_k}=1.
 \]
On the other hand, all the other rank-$1$ tensors  in the sum of \eqref{eq:sumsign} that do not correspond to the even string $w_j$ will take value $0$ at the entry $(i_1,\dots, i_t)$. 
So in this case, 
\begin{align*}
 S'_{i_1,\dots,i_t} = \sum_{j=1}^{2^{t-1}} 
   \left(1_{W_{1,j}}\circ \cdots \circ 1_{W_{t,j}}\right)_{i_1,\dots,i_t}=1.   
\end{align*}

(Case 2)
If $S'_{i_1,\dots,i_t}=0$, then $(s_1)_{i_1}\cdots (s_t)_{i_t}=-1$, which implies  $\sum_{k=1}^t \mathbf{1}\{i_k\in W_k\}$ is odd and there are no corresponding even strings. 
 Therefore all rank-$1$ tensors on the right hand side of \eqref{eq:sumsign} 
 take value $0$ at $(i_1,\dots,i_t)$, and \eqref{eq:sumsign} holds.

\subsection{Proof of Corollary \ref{cor:noisycompletion}}

\label{pf:noisycompletion}
Let $\hat{T}$ be the solution of Problem \eqref{eq:noisy_problem}. Define the operator 
$\mathcal P_{E}:
\R^{n\times \cdots \times n} \to 
\R^{n\times \cdots \times n}$ 
such that
$(\mathcal P_{E}(T))_{e} = T_e$ 
if $e \in E$ and $0$ otherwise.  Denote $C_t=2^{t} (2t-3)K_G^{t-1}$.
We have 
\begin{align}
  \left| \frac{1}{n^t}\|\hat{T}-T \|_F^2-\frac{1}{|E|}\|\mathcal P_E(\hat{T}-T)\|_F^2     \right|
 &=
 \left| 
 \frac{1}{n^t}\sum_{e\in [n]^t}
 ( \hat{T}_e - T_e )^2 
 - 
 \frac{1}{|E|} \sum_{e \in E}
 ( \hat T_e - T_e )^2
 \right|  \notag\\
 &\leq
    C_t\frac{\lambda}{d}
    \|T \|_\mathrm{max} ^2 \label{eq:noise2},
\end{align}
where \eqref{eq:noise2} follows in the same way as in
\eqref{eq:maxinequality} 
due to the fact that $T$ is a feasible solution to 
\eqref{eq:noisy_problem}. 
On the other hand, 
from the constraints in 
\eqref{eq:noisy_problem} and \eqref{cor:noisebound}, 
by the triangle inequality,
\begin{align}\label{eq:noise3}
    \|\mathcal P_{E}(\hat{T}-T)\|_{F}\leq \|\mathcal P_{E}(\hat{T}-Z)\|_{F}+\|\mathcal P_{E}(Z-T)\|_{F}\leq 2\delta \sqrt{|E|}.
\end{align}
With \eqref{eq:noise2} and \eqref{eq:noise3}, we have 
\begin{align*}
   \frac{1}{n^t}\|\hat{T}-T \|_F^2 &\leq \left| \frac{1}{n^t}\|\hat{T}-T \|_F^2-\frac{1}{|E|}\|\mathcal P_E(\hat{T}-T)\|_F^2     \right| +\frac{1}{|E|}   \|\mathcal P_{E}(\hat{T}-T)\|_{F}^2\\
   &\leq C_t\frac{\lambda}{d}
    \|T \|_\mathrm{max} ^2+4\delta^2,
\end{align*}
which is the final result.

\subsection{Sample Complexity}
\label{sec:sample-complexity}

To compute the sample complexity in the case of no noise in Theorem \ref{thm:main}, 
we would like to guarantee,
from \eqref{eq:errorbound}, that 
\begin{equation}\label{eq:guarantee}
C_t \| T \|_\mathrm{max}^2 \frac{\lambda}{d}
    \leq \varepsilon.
\end{equation}
Then \eqref{eq:guarantee} implies 
\begin{align*}
    \frac{d}{\lambda} 
    \geq 
    \frac{C_t \|T \|_\mathrm{max}^2}{\varepsilon}
      .
\end{align*}
Assuming that $\lambda=O(\sqrt d)$,
i.e.\ $G$ is a good expander, then this gives that
\begin{equation*}
    |E| = n d^{t-1} = 
    O\left( 
    n
    \left( 
    \frac{(C_t \|T \|_\mathrm{max}^2}{\varepsilon}
    \right)^{2(t-1)}
    \right)
\end{equation*}
many samples suffice.
If we assume that $t$ is a constant, then the sample complexity is 
\[ O(\|T\|_{\max}^{4t-4}\varepsilon^{2(1-t)}n).\]
Theorem~\ref{thm:max-rank} gives that
$\|T\|_\mathrm{max}^2 = O(r^{t^2-t-1})$ and
the sample complexity in terms of $\varepsilon, n, r$ can be written as
\[\displaystyle |E| = O\left( \frac{ n r^{2(t-1)(t^2-t-1)}}{\varepsilon^{2(t-1)}} \right).\]
Note that by taking $t=2$ in the expression above, we get the same sample complexity for deterministic matrix completion algorithm as in \cite{heiman2014deterministic}.

\section{Algorithm details}
\label{sec:supp_alg}

Here we detail some components of the numerical algorithm
that are important for its implementation.

\subsection{Variable projection}
\label{sec:var_proj}

One advantage of this relaxed problem
is that we can partially 
minimize out the variable $R$ analytically,
a method referred to as {\it variable projection}.
Problem \eqref{eq:relaxed} is of the form
\begin{equation*}
    \min_{X,R} f(X,R) + g(X) + h(R),
\end{equation*}
where 
$f(X,R) = \frac{\kappa}{2} \|\mask * (X - Z - R)\|_F^2
    + \frac{\beta}{2} \| \mask * R \|_F^2$
are the least squares terms,
$g(X) = \|X\|_\mathrm{max}$, and
$h(R) = \chi \left\{ \|\mask * R \|_F \leq \delta \right\}$
is the indicator function of the constraint set,
equal to $+\infty$ when the constraint is not satisfied
and 0 otherwise.

Let $R^*$ be the minimizer over the $R$ for fixed $X$,
then
\begin{align*}
R^* &:= \arg \min_R f(X,R) + h(R) \\
    &= \mathrm{proj}_{\| \cdot \|_F \leq \delta} 
    \left( 
    \frac{\mask * (X - Z)}{1+\frac{\beta}{\kappa}}
    \right)\\
    &= \mu \mask * (X - Z),
\end{align*}
where
\begin{equation}
    \mu := \left\{ 
    \begin{array}{rcl}
     \frac{1}{1 + \frac{\beta}{\kappa}},
     & &
     \|\mask*(X-Z)\|_F \leq (1 + \frac{\beta}{\kappa} ) \delta 
     ,
     \\
     \frac{\delta}{\| \mask * (X - Z) \|_F} , 
     & &
     \mbox{ otherwise} .
    \end{array}
    \right.
    \label{eq:mu_defn}
\end{equation}
The $\beta$ term is a type of smoothing of the 
hard constraint on the residuals,
which leads to them shrinking by the amount $1 + \frac{\beta}{\kappa}$.

By the results of \citet{aravkin2018},
\eqref{eq:relaxed}
may be solved by replacing $R$ with $R^*$
in the objective function and gradients
and minimizing just over the remaining variable $X$.
Replacing $R$ in the original cost leads to
the projected cost function
\begin{align}
\label{eq:relaxed_value}
    \bar{f}(X) = f(X,R^*) =  
    \frac{1}{2} \left(\kappa (1-\mu)^2 + \mu^2 \beta \right)
    \|\mask * (X - Z)\|_F^2 .
\end{align}
The gradients are obtained simply via
\begin{equation}
\label{eq:grad_var_proj}
    \nabla_X \bar{f}(X) = \nabla_X f(X, R^*)
\end{equation}
without calculating any sensitivities of $R^*$ 
with respect to $X$.

\subsection{Gradients}

For the coordinate descent algorithm, 
we will need to compute the gradient of
the smooth part of the cost with respect
to every factor matrix.
Equation
\eqref{eq:grad_var_proj}
gives
the gradient with respect to $U^{(i)}$
as
\begin{align}
    \nabla_{U^{(i)}} \bar{f}(X(U))
    &=
    \nabla_{U^{(i)}} \frac{\kappa}{2}
    \| \mask * (U^{(1)} \circ \cdots \circ U^{(t)} - Z - R^*) \|_F^2 
    \notag
    \\
    &=
    \nabla_{U^{(i)}} \frac{\kappa}{2}
    \| \mask_{[i]} * 
        ( U^{(i)} K^{(-i)} - Z_{[i]} - R^*_{[i]} ) 
    \|_F^2  
    \notag 
    \\
    &= \kappa \,
    \mask_{[i]} * 
        ( U^{(i)} K^{(-i)} - Z_{[i]} - R^*_{[i]} ) 
        K^{(-i)\intercal} 
    \notag 
    \\
    &= (1-\mu) \kappa \,
    \mask_{[i]} * 
        ( U^{(i)} K^{(-i)} - Z_{[i]}) K^{(-i)\intercal}, 
    \label{eq:grad}
\end{align}
where in the second line we 
matricizing over the $i$th dimension.
The other factors are stored in the Khatri-Rao product
\[
K^{(-i)} := 
U^{(t)} \odot \cdots \odot U^{(i-1)} 
\odot U^{(i+1)} \cdots \odot U^{(1)} .
\]
This gradient can be computed efficiently in two steps.
First, the term
$\mask_{[i]} * ( U^{(i)} K^{(-i)} - Z_{[i]})$
is nothing more than the matricization of the residuals,
which are sparse and can be computed in linear time and
memory.
Second, this must be multiplied by $K^{(-i)\intercal}$,
which is an $r \times n^{t-1}$ matrix that we do not want to form.
There are efficient methods for computing such a
matricized tensor times Khatri-Rao product (MTTKRP)
\citep{bader2007},
and this often dominates the time spent in our algorithm.

The form of the gradient \eqref{eq:grad}
reveals how the variable projection algorithm deals
with the $\beta$-smoothed hard constraint.
If the constraint is not satisfied,
the gradient is proportionally shrunk by an amount $\delta$.
Once the constraint is satisfied, 
the gradient is multiplied by the small but nonzero
amount $\frac{\beta}{\kappa + \beta}$, 
and the operation is continuous at the boundary.
See Figure~\ref{fig:iterations}.
This can be seen as an $\ell_2$ version of soft-thresholding
the gradient.
Without the smoothing, i.e.\ $\beta = 0$, 
the gradient would be exactly
zero once the constraint was satisfied,
and of magnitude $\delta \kappa$ otherwise.
This can be seen as $\ell_2$ hard-thresholding.

\subsection{Proximal operator}

We will also need a way to evaluate
the proximal operator or {\em prox}
of the 2-to-$\infty$ norm of a matrix:
\[
\mathrm{prox}_{s \|\cdot \|_{2,\infty}} (X)
:=
\arg \min_Y \frac{1}{2} \|Y - X\|_F^2 + s \| Y \|_{2,\infty} .
\]
Since $\|\cdot\|_{2,\infty}$ 
is a norm, the Moreau decomposition means
we may use the projection onto
its dual norm ball to compute
\begin{equation}
\label{eq:prox_moreau}
\mathrm{prox}_{s \|\cdot \|_{2,\infty}} (X)
= X - 
s \cdot \mathrm{proj}_{\| \cdot \|_* \leq 1} \left( \frac{X}{s} \right) .
\end{equation}
The dual norm is defined as
\begin{align*}
\| X \|_* 
:= \max_{\|Z \|_{2,\infty} \leq 1} \langle Z, X \rangle   
= \max_{Z: \max_i \|z_i\|_2 \leq 1} \sum_i z_i^\intercal x_i 
= \sum_i \| x_i \|_2,
\end{align*}
where $x_i$ and $z_i$ are the $i$th rows of $X$ and $Z$.
The last equality comes from taking $z_i = x_i / \| x_i \|_2$.
This dual norm is of course the $\ell_{2, 1}$ induced norm
and known to induce group sparsity. 
\citet{liu2009}
found the following characterization of the projection.
\begin{lemma}[\citet{liu2009}, Theorems 5--6]
\label{lem:dual_projection}
If $\| X \|_* > r$, then 
$W := \mathrm{proj}_{\| \cdot \|_* \leq r}(X)$
is given by
\begin{equation}
\label{eq:dual_proj}
w_i = \left\{ 
\begin{array}{cll}
     \left(1 - \frac{\bar\lambda}{\|x_i\|_2}\right) x_i &, & 
     \|x_i\| > \bar\lambda  \\
     0 &, &  \|x_i \| \leq \bar\lambda
\end{array}
\right.  ,
\end{equation}
where $x_i$ and $w_i$ are the $i$th rows of $X$ and $W$,
and $\bar\lambda$ is the unique root of
\begin{equation}
\label{eq:dual_root}
\omega(\lambda) = \sum_{i} \max(\|x_i\| - \lambda, 0) - r.
\end{equation}
Otherwise, if $\|X\|_* \leq r$, then $W = X$.
\end{lemma}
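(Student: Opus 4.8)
The plan is to exploit the fact that both the objective and the constraint in the projection $W = \mathrm{proj}_{\|\cdot\|_* \le r}(X)$ decouple across rows. Writing $x_i, y_i$ for the $i$th rows of $X$ and $Y$, we have $\|Y - X\|_F^2 = \sum_i \|y_i - x_i\|_2^2$, while the dual norm $\|Y\|_* = \sum_i \|y_i\|_2$ couples the rows only through their individual $\ell_2$ norms. First I would dispose of the trivial case: if $\|X\|_* \le r$ then $X$ is already feasible and attains the minimum value $0$, so $W = X$. Assume henceforth $\|X\|_* > r$; then the constraint must be active at the optimum, i.e. $\sum_i \|w_i\|_2 = r$, since otherwise one could move $W$ slightly toward $X$ and strictly decrease the objective.

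Next I would reduce the matrix projection to a scalar program in the row norms. For a prescribed row norm $t_i := \|y_i\|_2 \ge 0$, the term $\|y_i - x_i\|_2^2$ is minimized by aligning $y_i$ with $x_i$, namely $y_i = t_i\, x_i / \|x_i\|_2$ when $x_i \neq 0$ (and $y_i = 0$ when $x_i = 0$), which gives $\|y_i - x_i\|_2^2 = (t_i - \|x_i\|_2)^2$. Setting $a_i := \|x_i\|_2$, the projection is therefore equivalent to
\[
\min_{t \ge 0,\ \sum_i t_i = r} \ \tfrac{1}{2}\sum_i (t_i - a_i)^2 ,
\]
the Euclidean projection onto a scaled probability simplex.

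Then I would solve this scalar program via its KKT conditions. Introducing a multiplier $\lambda$ for the equality constraint and multipliers $\mu_i \ge 0$ for the sign constraints $t_i \ge 0$, stationarity of the Lagrangian yields $t_i = a_i - \lambda + \mu_i$, and complementary slackness $\mu_i t_i = 0$ forces $t_i = \max(a_i - \lambda, 0) = \max(\|x_i\|_2 - \lambda, 0)$. The value of $\lambda$ is pinned down by the active constraint $\sum_i \max(\|x_i\|_2 - \lambda, 0) = r$, i.e. $\lambda = \bar\lambda$ is a root of the function $\omega$ in \eqref{eq:dual_root}. Substituting $t_i$ back through $w_i = t_i\, x_i / \|x_i\|_2$ reproduces the soft-thresholding formula \eqref{eq:dual_proj}.

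The main obstacle is to show that $\bar\lambda$ is well defined and unique. For this I would observe that $\omega$ is continuous, nonincreasing, and piecewise linear, with $\omega(0) = \|X\|_* - r > 0$ by assumption and $\omega(\lambda) \to -r < 0$ as $\lambda \to \infty$; moreover $\omega$ is strictly decreasing on the range where at least one term $\|x_i\|_2 - \lambda$ is positive, which is precisely where the sign change occurs. Hence $\omega$ has a unique root $\bar\lambda \in (0, \max_i \|x_i\|_2)$. The remaining checks are routine: one verifies that the soft-thresholded $t_i$, together with $\mu_i = \max(\lambda - a_i, 0)$, satisfy all KKT conditions, and that rows with $x_i = 0$ are handled correctly (there $t_i = 0$ and $w_i = 0$ since $\bar\lambda > 0$). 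Assembling the rows $w_i$ completes the proof.
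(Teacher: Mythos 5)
Your proof is correct. Note that the paper itself gives no proof of this lemma at all: it is imported verbatim by citation from \citet{liu2009} (their Theorems 5--6), so there is no in-paper argument to compare against. Your blind derivation is a correct, self-contained reconstruction that follows the same route as the cited source: the trivial case $\|X\|_* \leq r$; activeness of the constraint when $\|X\|_* > r$; reduction by row alignment ($y_i = t_i x_i/\|x_i\|_2$, with the zero-row case handled separately) to the Euclidean projection of the row-norm vector $(a_i)$ onto the scaled simplex $\{t \geq 0, \sum_i t_i = r\}$; the KKT conditions yielding the soft-threshold $t_i = \max(a_i - \bar\lambda, 0)$, hence \eqref{eq:dual_proj}; and existence and uniqueness of $\bar\lambda$ as the root of the continuous, piecewise-linear $\omega$ in \eqref{eq:dual_root}, which is strictly decreasing on $[0, \max_i a_i]$ with $\omega(0) = \|X\|_* - r > 0$ and $\omega(\max_i a_i) = -r < 0$ (this implicitly uses $r > 0$, which the lemma's setting presumes, and in the paper's application $r=1$). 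The only points worth making explicit, though they are routine, are that the KKT conditions are sufficient here because the reduced problem is convex with affine constraints, and that the $W$ you assemble is \emph{the} projection because projection onto the closed convex ball $\{\|\cdot\|_* \leq r\}$ is unique; neither omission is a gap.
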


Lemma~\ref{lem:dual_projection} provides an efficient routine
for computing $\mathrm{prox}_{s \| \cdot \|_{2,\infty}}(X)$.
First, \eqref{eq:prox_moreau} is used to express
the prox in terms of the projection onto the dual norm ball.
Since $\bar\lambda$ satisfies
$0 \leq \bar\lambda \leq \frac{1}{s} \|X \|_{2,\infty}$,
a bracketing method may be used to solve the
scalar nonlinear equation \eqref{eq:dual_root} 
for the root $\bar\lambda$.
Then \eqref{eq:dual_proj} is used to return the projection
needed to evaluate the prox.

\subsection{Coordinate descent}

For completeness, Algorithm~\ref{alg:coord_descent}
shows the pseudocode for solving the relaxed max-qnorm
optimization problem.

\begin{algorithm}[t!]
\caption{Coordinate descent method}
\label{alg:coord_descent}
\begin{algorithmic}[1]
\STATE{Input:
initialization
$\{U^{(i)}\}_{i=1}^t$,
parameters
$0 < \kappa < \infty$ ,
$0 < \beta < \infty$,
$\delta > 0$}
\REPEAT
\FOR{$i = 1, \ldots, t$}
\STATE{ 
$U^{(i)} 
\leftarrow
\arg \min_{U} 
C(U^{(1)}, \ldots, U^{(i-1)}, U, U^{(i+1)}, \ldots, U^{(t)})$ 
\hfill 
accelerated prox gradient}
\ENDFOR
\IF{first iteration}
\STATE{rescale factors so that $\|U_{:,i}^{(1)} \|_2 = \ldots = \| U_{:,i}^{(t)} \|_2$ 
for $i=1,\ldots,r$}
\ENDIF
\UNTIL{convergence criteria}
\RETURN{$\{\hat U^{(i)}\}_{i=1}^t$}
\end{algorithmic}
\end{algorithm}

\subsection{Additional plots}

The results for 
experiments described in Section~\ref{sec:experiments}
with $t=3$ are shown in
Figure~\ref{fig:dim_3_results}.

\begin{figure}[ht!]
    \centering
    \includegraphics[width=.8\linewidth,trim={20 0 20 0},
    clip]{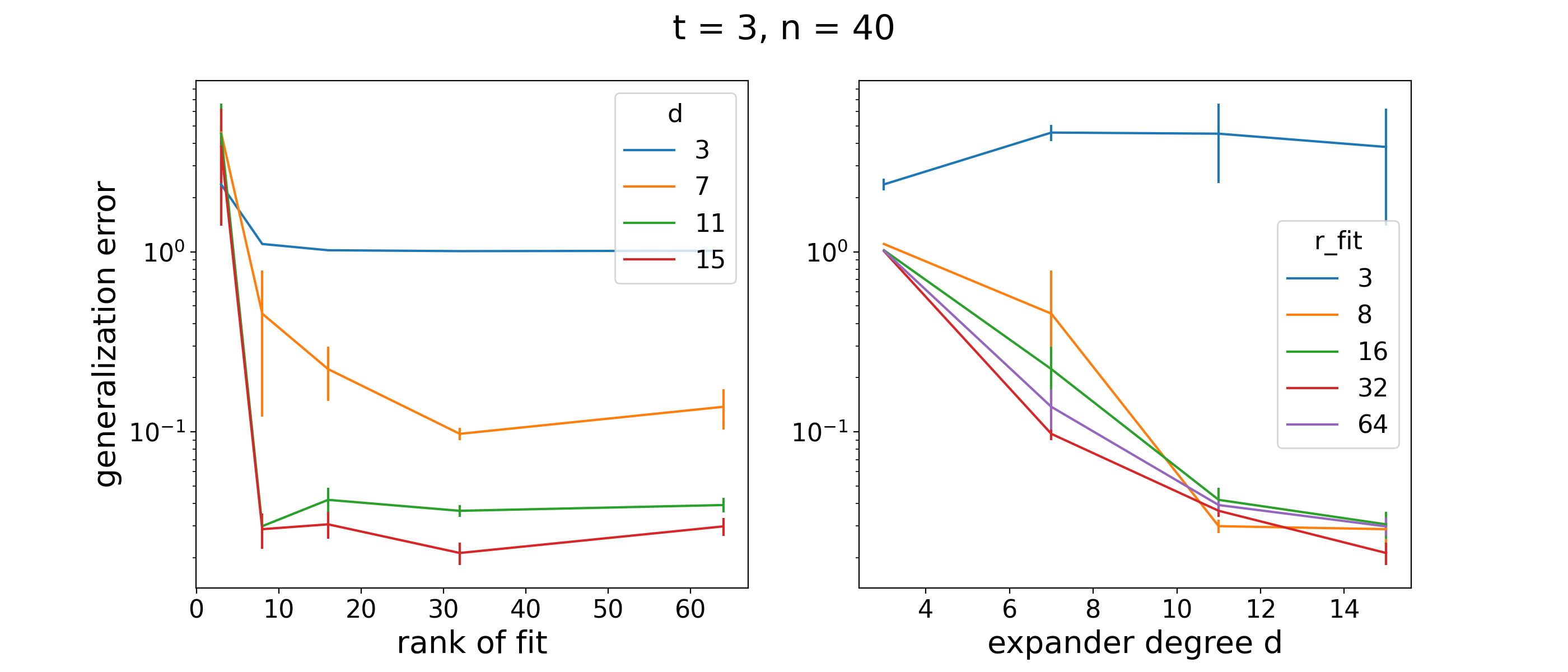}
    
    \includegraphics[width=.8\linewidth,trim={20 0 20 0},
    clip]{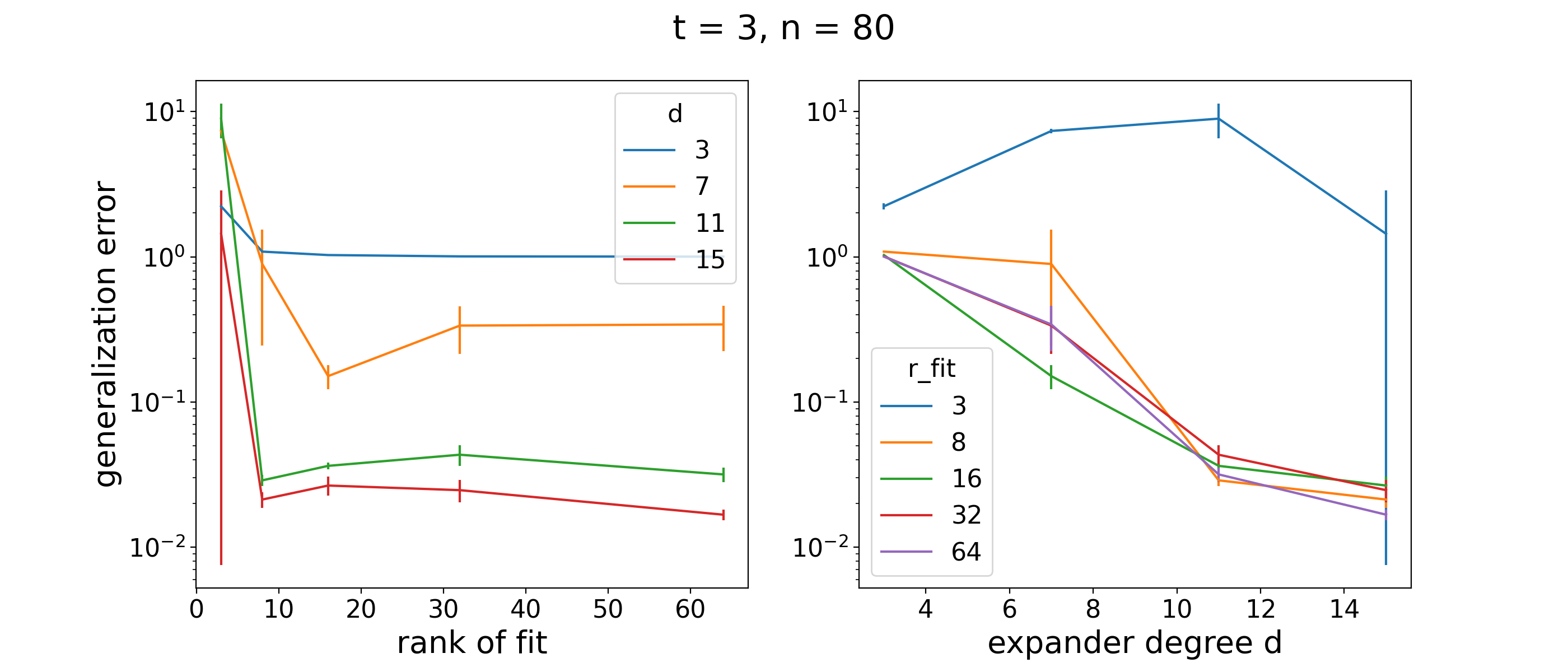}
    \caption{Relative error of reconstruction for $t=3$ and $n=40,80$.
        The results are average of 6 tensors with standard errors shown.}
    \label{fig:dim_3_results}
\end{figure}

\end{document}